\documentclass{article}

\usepackage{arxiv}

\usepackage[utf8]{inputenc} 
\usepackage[T1]{fontenc}    
\usepackage{hyperref}       
\usepackage{url}            
\usepackage{booktabs}       
\usepackage{amsfonts}       
\usepackage{nicefrac}       
\usepackage{microtype}      
\usepackage{lipsum}

\usepackage{mathrsfs,amssymb,graphicx,multirow,color,amsthm}
\usepackage{amsmath}
\usepackage{algorithm}
\usepackage{algorithmic}  
\usepackage{multirow}
\usepackage{color}
\usepackage{ulem}
\usepackage{caption}

\newtheorem{lemma}{Lemma}
\newtheorem{theorem}{Theorem}
\newtheorem{definition}{Definition}
\newtheorem{proposition}{Proposition}
\newtheorem{corollary}{Corollary}
\newtheorem{example}{Example}

\newcommand{\diag}{{\rm diag}}
\newcommand{\spann}{{\rm span}}
\newcommand{\tr}{{\rm tr}}
\newcommand{\rank}{{\rm rank}}
\newcommand{\sgn}{{\rm sign}}

\newcommand{\I}{{\cal I}}
\newcommand{\J}{{\cal J}}
\newcommand{\K}{{\cal K}}
\newcommand{\R}{{\cal R}}
\renewcommand{\S}{{\cal S}}
\newcommand{\grad}{{\rm grad}}

\title{Minimal Sample Subspace Learning: Theory and Algorithms}

\author{
  Zhenyue Zhang\\
  School of Mathematics Science \\
  Zhejiang University, Yuquan Campus \\
  Hangzhou 310027, China \\
  \ \\
  Zhejiang Laboratory \\
  1818 Wenyixi Road \\
  Hangzhou 311122, China \\
  \texttt{zyzhang@zju.edu.cn} \\
   \And
  Yuqing Xia \\
  School of Mathematics Science \\
  Zhejiang University, Yuquan Campus \\
  Hangzhou 310027, China\\
  \texttt{xiayq@zju.edu.cn} \\
}

\begin{document}
\maketitle

\begin{abstract}
Subspace segmentation, or subspace learning, is a challenging and complicated task in machine learning. This paper builds a primary frame and solid theoretical bases for the minimal subspace segmentation (MSS) of finite samples. The existence and conditional uniqueness of MSS are discussed with conditions generally satisfied in applications. Utilizing weak prior information of MSS, the minimality inspection of segments is further simplified to the prior detection of partitions. The MSS problem is then modeled as a computable optimization problem via the  self-expressiveness of samples. A closed form of the representation matrices is first given for the self-expressiveness, and the connection of diagonal blocks is addressed. The MSS model uses a rank restriction on the sum of segment ranks. Theoretically, it can retrieve the minimal sample subspaces that could be heavily intersected. The optimization problem is solved via a basic manifold conjugate gradient algorithm, alternative optimization and hybrid optimization, therein considering solutions to both the primal MSS problem and its pseudo-dual problem. The MSS model is further modified for handling noisy data and solved by an ADMM algorithm. The reported experiments show the strong ability of the MSS method to retrieve minimal sample subspaces that are heavily intersected. 
\end{abstract}

\keywords{  Subspace learning \and Clustering \and Rank restriction \and Sparse optimization \and Self-expressiveness}

\section{Introduction} \label{sec:intro}

Given a collection of vectors sampled from the union of several unknown low-dimensional subspaces that might intersect with each other,  subspace learning, or subspace segmentation, aims to partition the samples into several segments such that each segment contains samples within the same subspace. If the segmentation is correct, the unknown subspaces are estimated well by the segments. The problem of subspace segmentation occurs in several applications. For instance, in single rigid motion, the trajectories of feature points lie in an affine subspace with a dimension of at most 3 \cite{costeira1998a}.\footnote{A $d$-dimensional affine subspace can be embedded into a $(d+1)$-dimensional linear subspace by adding 1 as a new entry to each vector.} Moreover, facial images of an individual under various lighting conditions lie in a linear subspace of dimension up to 9 \cite{basri2003lambertian}. Detecting multiple rigidly moving objects from videos, and recognizing multiple individuals from facial images are potential subspace segmentation tasks.

Algorithms for subspace segmentation can be traced back to early studies on algorithms such as RANSAC \cite{Fischler1981Random}, K-subspace \cite{Bradley2000k,Tseng2000Nearest}, and generalized principal component analysis (GPCA, \cite{vidal2005generalized}, \cite{ma2008estimation}). In recent years, self-expressiveness methods such as low-rank representation (LRR, \cite{LRR2013}) and sparse subspace clustering (SSC, \cite{Elhamifar2013Sparse}) have attracted a great deal of attention because of their state-of-art empirical performance. Given a set of column vectors $X$ sampled from the union of several subspaces, such algorithms search for a representation matrix $C$ of $X$, that is, $X=XC$, and try to detect the subspaces based on $C$. Ideally, the representation matrix $C$ is block-diagonal as that $C = \diag(C_1,\cdots,C_K)$ under permutation. In this case, the samples are also partitioned into $K$ pieces such as $X=[X_1,\cdots,X_K]$, and each $X_k$ contains samples from the same subspace. 

To estimate such a representation with the block-diagonal structure in $C$ as much as possible, the SSC minimizes the $\ell_1$-norm of $C$ as follows:
\begin{align}\label{opt:SSC}
	{\rm (SSC)}	\quad \min_C \|C\|_1 \quad \mbox{s.t.}\ X=XC, \ \diag (C)=0.
\end{align}
The restriction on the diagonals avoids a trivial and meaningless solution. The LRR method searches for a representation matrix with approximate low-rank structure that can take the so-called `global structure' of the samples into account. Thus, it minimizes
the nuclear norm $\|C\|_*$ (sum of the singular values) of $C$ as follows:
\begin{align}\label{opt:LRR}
	{\rm (LRR)} \quad	
	\min_C \|C\|_* \quad \mbox{s.t.} \ X=XC.
\end{align}
The main difference is that SSC searches for the sparsest nontrivial representation matrix, while LRR searches for the representation matrix with the lowest rank. SSC and LRR seemingly impede subspace retrieval at two ends: an overly sparse solution may be block-diagonal with a greater-than-expected number of diagonal blocks, and a solution with an overly low-rank may not be block-diagonal or have less blocks. In these two cases, the ground-truth subspaces cannot be detected via classical spectral clustering. To control the number of blocks, \cite{wang2013provable} combines the two objective functions. More purposefully, \cite{XiaZhangMVAP} combines the $\ell_1$-norm function with a logarithmic-determinant function to balance the sparsity and rank of the solution. 
\cite{Li2017Structured} modifies the $\ell_1$-norm function $\|C\|_1$ to the $\ell_1$-norm of the off-diagonal blocks of $C$ with a partition that should also be optimized. This strategy may help to increase the connection of the diagonal blocks in some sense.\footnote{ We say that $C_k$ is connected if the undirected graph constructed from $|C_k|+|C_k|^T$ is connected.}
Since the number of zero eigenvalues of the Laplacian matrix of a block-diagonal matrix is equal to the number of connected diagonal blocks \cite{vonluxburg2007a}, \cite{BDR2018} minimizes several of the smallest eigenvalues of the Laplacian matrix of $C$ to achieve a block-diagonal solution with connected diagonal blocks.

The effectiveness of these methods has scarcely been exploited in the literature. For instance, \cite{Soltanolkotabi2012A} gave sufficient conditions for SSC to retrieve a representation matrix that can detect subspaces. \cite{LRR2013} proved that LRR can recover mutually independent subspaces.\footnote{LRR cannot recover dependent subspaces; see Theorem \ref{thm:LRR-SSC} in Subsection \ref{subsect:LRR-SSC}.} The above modified methods require equivalent or stronger conditions, which are generally very strict in applications. 

The latent subspaces we wish to retrieve from samples are not well-defined mathematically, which may explain why theoretical progress in subspace learning has been slow. Subspace segmentation is practically ambiguous and unidentified in the literature. It is also highly possible that the segmented subspaces found by an algorithm may be defined by the algorithm used. For instance, in SSC, segmenting $X$ into $\{X_k\}$ is equivalent to separating a constructed graph $A$ into connected subgraphs $\{A_k\}$ via the following procedure: 
Let $c_i$ minimize $\|c\|_1$ subjected to $x_i = X_{(i)}c$, where $X_{(i)}$ is the $X$ whose $i$-th column is reset to zero, $i=1,\cdots,n$. Graph $A$ takes $\{x_i\}$ as its nodes and has an edge between nodes $x_i$ and $x_j$ if the $j$-th entry of $c_i$ or the $i$-th entry of $c_j$ is nonzero. Theoretically, $C=\diag(C_1,\cdots,C_K)$ is block-diagonal of connected blocks under permutation if and only if $A$ has $K$ connected subgraphs $\{A_k\}$. In that case, $X_k$ consists of the samples as nodes involved in $A_k$. Clearly, the spanning subspaces $\{\spann(X_k)\}$ depend on the connection structure of the constructed graph and cannot be predicted. In addition, the number of subspaces cannot be predicted.
LRR gives a coarse segmentation corresponding to the independent subspaces, each a sum of several ground-truth subspaces, assuming that the ground-truth subspaces can be separated into several classes such that the subspace sums within classes are independent.\footnote{This claim can be also concluded from Theorem \ref{thm:LRR-SSC}.}

This paper aims to build a theoretical basis for subspace learning from a mathematical viewpoint. The basic, important, and key issues that we keep in mind include the following:

(1) Identifiability of the subspaces that we wish to detect from a finite number of samples. The related basic issues for noiseless samples may include the definition of subspaces that are solely determined by samples, the uniqueness of the corresponding segmentation, the sufficient conditions for uniquely identifying the segmentation, and the consistency of the defined segmentation with the groundtruth segmentation that we expect in applications.

(2) Computability of the defined subspace segmentation. For application purpose, we may be required to formulate the defined segmentation as an optimization problem that should be computable with an acceptable cost. Related issues may include the uniqueness of the solution or conditions of the uniqueness, and the ability of addressing complicated segmentation wherein subspaces intersect with each other heavily, or samples are located near such intersected subspaces.

(3) Efficient algorithms for solving the optimization problem. We may also encounter efficiency issues with the adopted algorithms, such as computational complexity and local optimums. 

(4) Stability of solutions and robustness of algorithms. It may be  difficult but absolutely worth addressing these issues to further our understanding of subspace learning. 

(5) Extension to noisy samples which may be more important in applications. Certain necessary modifications  are required to this end, together with perturbation theory on subspace segmentation. 

In this paper, we partially address the above issues. Below, we briefly describe the main contributions of this paper and our related motivations.

1. The concept  of minimal sample space is introduced and used to define a minimal sample segmentation (MSS) of a given set of samples. The existence of the MSS is guaranteed but may not be unique in some special cases; thus, we show that the MSS is conditionally unique. Two kinds of sufficient conditions for this uniqueness are given that focus on data quantity and  quality, respectively. These conditions are weak since they are always satisfied in applications with randomly chosen samples from ground-truth subspaces. Hence, the minimal sample subspaces should generally be ground-truth subspaces.

2. It is difficult to check the minimality of a segmentation. We further study how to simplify detection under following the prior information of an MSS: The number of minimal segments, the sum of the segment ranks, and the minimal rank of the segments. We focus on the set of partitions with the same number of pieces and the restrictions of rank sum and minimal rank. Conditions for the singleness of such a set are given based on discreet rank estimations on each segment. These conditions permit subspaces to be heavily intersected within reasonable sense. Singleness means that the MSS can be detected. 

3. The sufficient conditions for singleness of the above partition set are tight. We further exploit the properties of the sample segments when the sufficient conditions are incompletely satisfied, leading to two types of partition refinements under weaker conditions: Segment reduction and segment extension.  

4. Based on solid theoretical analyses, we formulate the detection of minimal subspace segmentation as a computable optimization problem that adopts the  self-expressiveness of samples. The closed-form structure of the representation matrix is given. MSS detection requires a connected and block-diagonal structure of the solution partitioned as the considered MSS. We prove that all the connected diagonal blocks are guaranteed only if the rank sum of the diagonal blocks is equal to that of the minimal sample segments. Under this restriction, the optimization problem gives a minimal subspace detectable representation (MSDR) of the MSS. 

5. The objective function of the proposed optimization problem contains discrete variables from index partition $J$ and continuous variables from the representation matrix $C$ over a nonconvex feasible domain. To solve this minimization problem, we alternatively optimize $J$ and $C$, slightly modifying $J$ to an active index set $\Omega$ and adding a penalty on the diagonals of $C$. A manifold conjugate gradient (MCG) method is used for optimizing $C$, and an update rule is given for both $\Omega$ and $J$. Combining the two types of update rules yields an alternative optimization for detecting an MSS. An equivalent pseudo-dual problem of the primal problem is further considered and solved via subspace correction. These two kinds of MSS algorithms may drop into local minima, but they seldom have the same local minimizers. Hence, alternatively using these algorithms is an efficient strategy for escaping a local minimum, yielding a hybrid optimization method for the minimal subspace segmentation.

6. We further extend the MSS optimization problem to handle noisy samples. An ADMM method is simply considered for solving this extended optimization problem, and detailed formulas are given for solving the subproblems involved in the ADMM method.

It should be pointed out that we require the sum of subspace dimensions in our sparse model. It is an additional prior as a restriction to the rank of $C$, compared with algorithms given in the literature for subspace learning. The restriction is not necessary for uniquely determining the MSS (see Theorem \ref{thm:unique2} given in Section \ref{subsection:unique} for the sufficient conditions). However, it is necessary for guaranteeing the connection of a block-diagonal $C$ (Theorems \ref{thm:unconnect} and \ref{thm:rank_connect} in Section \ref{subsection:MSDR}). It is also helpful for simplifying the detection of minimal subspace or MSS as shown in Theorem \ref{thm:minimal partition} of Section \ref{subsection:singleness}.

The remainder of the paper is organized as follows. Sections \ref{sec:min seg}-\ref{sec:algorithm} cover the analysis of noiseless samples, while Section \ref{sec:extension} discusses the extended model on noisy samples. The definition of minimal sample subspaces and discussions on uniqueness are given in Section \ref{sec:min seg}. In Section \ref{sec:equivalent}, we discuss the problem of detecting an MSS, focusing on conditions for the singleness of rank-restricted index partitions. The theoretically supported refinement of conditioned partitions is further discussed in Section \ref{sec:refine}. Based on these theoretical analyses, we model the MSS problem as a computational optimization problem in Section \ref{sec:modeling}, covering a closed form of representation matrices, the connectivity of diagonal blocks, slight modifications to the model, and a comparison with related work.  
The MSS algorithms, together with a manifold conjugate gradient method for solving the basic optimization problem, alternative optimization strategies and hybrid optimization, are given in Section \ref{sec:algorithm}. In addition, we present an extended model for handling noisy samples and a detailed ADMM algorithm for solving the optimization problem. Finally, we report our numerical results and compare our method with existing algorithms on both noiseless synthetic data and real-world data in Section \ref{sec:experiment}. Comments on further research directions are given in the conclusion section. 

\section{Minimal Subspace Segmentation}\label{sec:min seg}

The subspaces that we expect to identify from the set of a finite number of samples may be quite different from those that naturally fit these samples. This occurs when the samples from an expected subspace are exactly located in the expected subspaces' several smaller subspaces. Thus, the basic issue for subspace learning is: what subspaces can we reasonably expect based on the given data points?

In this section, we introduce the concept of a minimal sample subspace and use it to define a segmentation of samples termed minimal subspace segmentation.
For the sake of discussion, we refer to $X$ as a data matrix consisting of data points $\{x_i\}$ as its columns, which is also referred to as the set of the data points. In the following discussion, $r(X)$ and $n(X)$ refer to the rank and column number of $X$, respectively.  

\subsection{Minimal Sample Subspace}

Naturally, the subspace spanned by a set of samples should have a smaller dimension than the sample number. Equivalently, the spanning samples are not linearly independent. We say that a sample-spanned subspace is minimal if it does not have a smaller subspace spanned by a subset of the samples. That is, any linearly dependent partial set of samples spans the same subspace. Below is an equivalent definition of the minimal sample subspace in linear algebra.

\begin{definition}\label{def:minimal-subspace}
A sample subspace $\spann(X)$ is minimal, if 
\begin{itemize}
    \item[(1)] $X$ is rank deficient, that is, $n(X)>r(X)$, and 
    \item[(2)] $X$ is nondegenerate, that is, any subset with a rank smaller than $r(X)$ is of full rank.
\end{itemize}
A sample subspace, $\spann(X)$, is pure if $X$ is of full column rank, {\it i.e.}, $n(X) = r(X)$. 
\end{definition}

Nondegeneracy specifies that $r(X')\!=\!\min\{n(X'),r(X)\}$ for any subset $X'$ of a nondegenerate $X$. Hence, any subset of a nondegenerate $X$ must also be nondegenerate. This property implies that, for a minimal subspace $\spann(X)$, any rank deficient subset $X'$ of $X$ cannot span a subspace with a smaller dimension. Equivalently, if $\spann(X)$ contains a minimal subspace $\spann(X')$, the two subspaces must be equal. 

With respect to a given data set $X$ if its spanning subspace $\spann(X)$ is neither pure nor minimal, that is, if $X$ is rank-deficient but degenerate, then there is a rank-deficient subset $X'$ of smaller rank. Thus, it makes sense to partition the data set $X$ into several nonoverlapping segments $X_0,\cdots, X_K$ such that $\spann(X_0)$ is pure and the other $\{\spann(X_k)\}$ are minimal.

\begin{definition}\label{def:partition}
A segmentation $\{X_0,X_1,\cdots,X_K\}$ of vector set $X$ is called a minimal subspace segmentation (MSS) of $X$ if 
\begin{itemize}
	\item[(1)] $\spann(X_0)$ is pure, and each $\spann(X_k)$ is minimal, $k=1,\cdots,K$, if it exists; 
	\item[(2)] $\spann(X_k) \neq \spann(X_\ell)$ for $k,\ell=0,\cdots,K$, $k\neq\ell$;
	\item[(3)] If $X_0$ exists, for any $x_j\in X_0$, $x_j\notin \spann(X_k)$, $k = 1, \cdots, K$.
\end{itemize}
\end{definition}

We also call $X_0$ a pure segment if $\spann(X_0)$ is pure. In applications, a pure segment $X_0$, if it exists, could be a set of outliers. Condition (3) is necessary since some samples may be redundant for spanning a minimal sample space.

\begin{theorem}\label{thm:min seg}
Any vector set with nonzero columns has an MSS.
\end{theorem}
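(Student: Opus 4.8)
The plan is to proceed by strong induction on the rank $r(X)$, or equivalently on $n(X)$, and construct the MSS explicitly. The base cases are immediate: if $X$ is of full column rank, then $\spann(X)$ is pure and $\{X_0 = X\}$ (with no minimal segments) is an MSS. If $X$ is rank deficient but nondegenerate, then $\spann(X)$ is itself minimal and $\{X_1 = X\}$ is an MSS (here $X_0$ is empty). So assume $X$ is rank deficient and degenerate.

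First I would locate a minimal subspace inside $\spann(X)$. Since $X$ is degenerate, there is a rank-deficient subset with rank strictly smaller than $r(X)$; among all rank-deficient subsets of $X$, choose one, call it $Y$, of minimal possible rank. I claim $\spann(Y)$ is minimal: condition (1) of Definition~\ref{def:minimal-subspace} holds because $Y$ is rank deficient by choice, and condition (2) holds because any subset of $Y$ with rank smaller than $r(Y)$ that were rank deficient would contradict the minimality of $r(Y)$ as a rank of a rank-deficient subset of $X$; hence every such subset has full rank, i.e. $Y$ is nondegenerate. Now enlarge $Y$ maximally: let $X_1$ be the set of all columns $x_j$ of $X$ with $x_j \in \spann(Y)$, so that $\spann(X_1) = \spann(Y)$ is minimal and $X_1$ collects every sample lying in this particular minimal subspace.

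Next I would remove $X_1$ and recurse. Let $\widetilde X = X \setminus X_1$ be the remaining columns. If $\widetilde X$ is empty we are done with $\{X_1\}$. Otherwise, since $\widetilde X$ has fewer columns than $X$ (because $X_1$ is nonempty — it contains at least the columns of $Y$, and $Y$ is rank deficient hence has at least two columns), strong induction gives an MSS $\{\widetilde X_0, \widetilde X_1, \dots, \widetilde X_M\}$ of $\widetilde X$. The candidate MSS of $X$ is then obtained by adjoining $X_1$ to this family as an additional minimal segment; the pure part $X_0$ is inherited as $\widetilde X_0$. The main obstacle — and the only place real care is needed — is verifying conditions (2) and (3) of Definition~\ref{def:partition} for the enlarged family. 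For condition (2): no minimal segment $\widetilde X_k$ can span the same subspace as $X_1$, because $X_1$ already absorbed \emph{all} columns of $X$ lying in $\spann(X_1)$, so $\widetilde X_k \subseteq \widetilde X$ cannot be contained in $\spann(X_1)$; and the inductively-obtained segments among themselves, together with $\widetilde X_0$, have distinct spans by hypothesis. For condition (3): if $X_0 = \widetilde X_0$ is nonempty, then for $x_j \in X_0$ we already know $x_j \notin \spann(\widetilde X_k)$ for $k = 1,\dots,M$ by induction, and $x_j \notin \spann(X_1)$ because $x_j \in \widetilde X$, which by construction excludes everything in $\spann(X_1)$. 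Finally, if $\widetilde X$ turns out to be rank deficient but nondegenerate one should note the recursion still applies (that is a base case), and if the original $X_0$ slot should be empty but the recursion returns a pure $\widetilde X_0$ whose span accidentally coincides with some $\spann(X_k)$, this cannot happen precisely because a pure span cannot equal a minimal span (full column rank versus rank deficient) — a small sanity check worth recording. Assembling these observations closes the induction.
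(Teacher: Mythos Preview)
Your proof is correct and follows essentially the same approach as the paper: find a rank-deficient subset of smallest rank (hence spanning a minimal subspace), absorb all samples lying in that span, and recurse on the remainder --- the paper sketches this same ``peel off the smallest minimal subspace'' induction without spelling out the verification of conditions (2) and (3) as carefully as you do. One small caveat: your final ``sanity check'' is mistaken --- a pure segment and a minimal segment \emph{can} span the same subspace as vector spaces (full column rank vs.\ rank deficiency is a property of the sample matrix, not of its span) --- but this aside is unnecessary anyway, since your earlier argument that $X_0 = \widetilde X_0 \subseteq \widetilde X$ is disjoint from $\spann(X_1)$ already rules out $\spann(X_0) = \spann(X_1)$.
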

\begin{proof}
The basic idea of the proof was mentioned above. 
If $\spann(X)$ is pure, we set $X = X_0$ and $K=0$. If $\spann(X)$ is minimal, we set $X_1=X$ and $K=1$, and $X_0$ disappears. In the other cases, we have a minimal subspace $\spann(X_1)$ that has the smallest dimension, where $X_1$ is a subset of $X$. $X_1$ can be the set containing all the samples belonging to the subspace $\spann(X_1)$, since adding these samples does not change the minimality of the subspace because there is no minimal subspace of lower dimension. That is, $\spann(X_1)$ remains minimal after adding samples. Repeating the above procedure on the remaining samples, we can complete the proof.
\end{proof}

However, the MSS of a given set $X$ may be not unique. The following example illustrates an example of nonuniqueness.  

\begin{example}\label{exmp:mult-MSS}
Let $X$ be the union of 4T five-dimensional vectors in the pieces
\[
	X_{1,j} = \left[\begin{array}{cc} a_{j,1} & a_{j,2} \\ b_j & b_j \\ o & o\end{array}\right],\quad
	X_{2,j} = \left[\begin{array}{cc} a_{j,1} & a_{j,2} \\ o & o\\ b_j & b_j \end{array}\right] \quad{\rm with }\quad
	o = \left[\begin{array}{c} 0 \\ 0 \end{array}\right],\quad j=1,\cdots,T,
\]
where the scales $a_{j,1}$ and $a_{j,2}$ and the vector $b_j\in R^2$ are arbitrarily chosen such that $a_{j,1}\neq a_{j,2}$, each pair $(b_i,b_j)$ pair is linearly independent, and $\left[\begin{array}{ccc}a_{i,s_i} &a_{j,s_j} &a_{k,s_k}\\ b_i &b_j & b_k\end{array}\right]$ is of full rank for different $i,j,k$ and any $s_i,s_j,s_k=1,2$.
Then $X$ has two types of segmentations without a pure segment,
\begin{itemize}
    \item[(1)] $X_1' = [X_{1,1},\cdots,X_{1,T}]$, $X_2' = [X_{2,1},\cdots,X_{2,T}]$, $K$ = 2;
    \item[(2)] $X_k'' = [X_{1,k},X_{2,k}]$, $k = 1,\cdots,K=T$.
\end{itemize}
Here, each $\spann(X_k')$ or $\spann(X_k'')$ is minimal. Hence, both $\{X_k'\}$ and $\{X_k''\}$ are minimal. 
\end{example}

This example partially explains why subspace learning is complicated. First, a sample set may have multiple segmentations, and each is an MSS. Second, the segments of an MSS may be very small. A small minimal segment $X_k$ may have the smallest number of samples needed to span a minimal subspace, {\it i.e.}, $n(X_k)=r(X_k)+1$. Obviously, if each segment is small in an MSS, this MSS may have a large number of segments. That is, the samples can be clustered into many small classes. Third, two different MSSs may have an equal number of segments with equal ranks. This case occurs when $T=2$, where each segment has rank $3$ with $4$ samples. 

Fortunately, the MSS is generally unique in applications. In the next subsection, we discuss the conditions of uniqueness.

\subsection{Uniqueness of Minimal Subspace Segmentation}\label{subsection:unique}

The following condition is obviously necessary for a unique MSS of $X=[x_1,\cdots,x_n]$. 
\begin{align}\label{non-intersection}
	x_j \notin \spann(X_k) \cap \spann(X_\ell),\quad  \forall j\  \mbox{and} \ \forall k\neq \ell.
\end{align}
Otherwise, a sample belonging to the intersection of two spanning subspaces could be arbitrarily assigned to any one of the two sample sets spanning the subspaces. In this subsection, we describe two types of sufficient conditions that guarantee the uniqueness of an MSS based on either sample quantity or  quality.

\begin{theorem}\label{thm:unique1}
If $X$ has an MSS with $K$ minimal sample subspaces satisfying the condition (\ref{non-intersection}) and $n(X_k)>(K+1)(r(X_k)-1)$, then a different MSS of $X$ satisfying (\ref{non-intersection}), if it exists, must have more minimal sample subspaces.
\end{theorem}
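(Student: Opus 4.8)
The plan is to argue by contradiction. Suppose $X$ has a second MSS $\{Y_0,Y_1,\dots,Y_L\}$ that also satisfies (\ref{non-intersection}) with $L\le K$; I will show that it must coincide with the given one $\{X_0,X_1,\dots,X_K\}$, contradicting that it is a \emph{different} MSS, so that every other MSS satisfying (\ref{non-intersection}) must have strictly more than $K$ minimal subspaces. Write $S_k=\spann(X_k)$. The engine of the argument is to follow how the $n(X_k)$ columns of a fixed minimal segment $X_k$ get distributed among $Y_0,\dots,Y_L$, combined with the nondegeneracy clause of Definition~\ref{def:minimal-subspace}: any $r(X_k)$ columns of $X_k$ already span $S_k$, and any $r(X_k)+1$ of them are rank-deficient.

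First I would control the pure segment $Y_0$. It cannot contain $r(X_k)+1$ columns of any $X_k$, for those would form a rank-deficient subset of $Y_0$ while $\spann(Y_0)$ is pure; so it contains at most $r(X_k)$ of them. If it contained exactly $r(X_k)$ columns of some $X_k$, those would span $S_k$, so $S_k\subseteq\spann(Y_0)$; but $n(X_k)>r(X_k)$, so a further column $x\in X_k$ lies in some $Y_\ell$ with $\ell\ge1$, whence $x\in S_k\subseteq\spann(Y_0)$ and $x\in\spann(Y_\ell)$ put $x$ in $\spann(Y_0)\cap\spann(Y_\ell)$ with $\ell\ne0$, contradicting (\ref{non-intersection}) for $\{Y_j\}$. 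Hence $Y_0$ contains at most $r(X_k)-1$ columns of every $X_k$. I expect this to be the main obstacle: naive pigeonhole over $Y_1,\dots,Y_L$ does not prevent a near-critical number (about $r(X_k)$) of columns of some $X_k$ from being swallowed by the pure segment, and ruling this out is exactly where the quantitative hypothesis $n(X_k)>(K+1)(r(X_k)-1)$ must be used together with nondegeneracy and the full strength of (\ref{non-intersection}) (which must be read to include the pure segment).

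Next is the pigeonhole and subspace-matching step. Fixing $k$, at most $r(X_k)-1$ of its $n(X_k)$ columns lie in $Y_0$, so more than $(K+1)(r(X_k)-1)-(r(X_k)-1)=K(r(X_k)-1)\ge L(r(X_k)-1)$ of them are spread over $Y_1,\dots,Y_L$, and some $Y_{\ell(k)}$ with $\ell(k)\ge1$ contains at least $r(X_k)$ columns of $X_k$; these span $S_k$, so $S_k\subseteq\spann(Y_{\ell(k)})$. Then every $x\in X_k$ lies in $S_k\subseteq\spann(Y_{\ell(k)})$, so if $x$ lay in some $Y_j$ with $j\ne\ell(k)$ it would sit in $\spann(Y_j)\cap\spann(Y_{\ell(k)})$, against (\ref{non-intersection}); hence $X_k\subseteq Y_{\ell(k)}$. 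Since $X_k$ is then a rank-deficient subset of the nondegenerate $Y_{\ell(k)}$, nondegeneracy gives $r(Y_{\ell(k)})=r(X_k)$, so $\spann(Y_{\ell(k)})=S_k$. The $S_k$ are pairwise distinct and so are the $\spann(Y_\ell)$ by condition~(2) of Definition~\ref{def:partition}, so $k\mapsto\ell(k)$ is injective; thus $K\le L$, and with $L\le K$ we get $L=K$ and $\ell(\cdot)$ a bijection.

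Finally I would close the loop. Any $y\in Y_{\ell(k)}$ lies in $\spann(Y_{\ell(k)})=S_k=\spann(X_k)$; it belongs to some $X_i$, where $i=0$ is excluded by condition~(3) of Definition~\ref{def:partition} for $\{X_j\}$, and $i\ge1$ with $i\ne k$ would put $y$ in $\spann(X_i)\cap\spann(X_k)$, against (\ref{non-intersection}); so $y\in X_k$, i.e.\ $Y_{\ell(k)}\subseteq X_k$, and with $X_k\subseteq Y_{\ell(k)}$ we conclude $X_k=Y_{\ell(k)}$ for all $k$. Then $X_0=X\setminus\bigcup_k X_k=X\setminus\bigcup_\ell Y_\ell=Y_0$ as well, so the two MSSs are identical, the contradiction sought; therefore a different MSS of $X$ satisfying (\ref{non-intersection}) must have more than $K$ minimal sample subspaces. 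The degenerate possibilities (no pure segment, or $r(X_k)=1$) are absorbed by the same argument.
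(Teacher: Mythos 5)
Your proof is correct and follows essentially the same route as the paper's: the quantitative hypothesis $n(X_k)>(K+1)(r(X_k)-1)$ is used as a pigeonhole which, combined with nondegeneracy and (\ref{non-intersection}), forces some segment of the competing MSS to absorb at least $r(X_k)$ columns of $X_k$ and hence to coincide with $X_k$. The only organizational differences are that you match all segments simultaneously via an injective map $k\mapsto\ell(k)$ where the paper matches $X_1$, deletes it, and inducts on $K$, and that you spell out steps the paper leaves terse, namely the bound of at most $r(X_k)-1$ columns of each $X_k$ in the pure segment and the two-sided inclusion giving $Y_{\ell(k)}=X_k$.
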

\begin{proof}
The theorem is obviously true if $\spann(X)$ is pure. If $\spann(X)$ is not pure, which implies $K\geq 1$, and there is another MSS $\{X_0',X_1',\cdots,X_{K'}'\}$ of $X$ satisfying (\ref{non-intersection}) with $K'\leq K$, let 
\[
    n_1 = n(X_1),\quad 
	r_1 = r(X_1),\quad
	X_{1,k}=X_1\cap X'_k,\quad
	n_{1,k} = n(X_{1,k}),\quad
	k_1 = \arg\max_{k\leq K'} n_{1,k}.
\] 
Because $(K+1)(r_1-1) < n_1 = \sum_{k=0}^{K'} n_{1,k}\leq (K'+1)n_{1,k_1} \leq (K+1)n_{1,k_1}$, we have $n_{1,k_1}  \geq r_1$. Hence, $\spann(X_1)=\spann(X_{1,k_1}) \subset \spann(X_{k_1}')$. By (\ref{non-intersection}), $X_{k_1}' = X_1$. Deleting $X_1$ from $X$, the remainder samples have two minimal segmentations $\{X_0,X_2,\cdots,X_K\}$ and $\{X_0',\cdots,X_{k_1-1}',X_{k_1+1}',\cdots,X_{K'}'\}$. Using induction on $K$, these two minimal segmentations should be equal. Hence, the theorem is proven.
\end{proof}

Theorem \ref{thm:unique1} basically says that an MSS is unique if it is `fat', that is, each segment has enough samples. Multiple minimal subspace segmentations may exist only if some segments have a small number of samples compared with the number of segments. Among those minimal segments with few samples, a union of partial samples from different segments can also form a new minimal segment. This may be the main reason for the multiplicity of minimal subspace segmentations. However, this multiplicity will disappear if the samples are well-distributed. Below, we introduce a new concept to define such a good distribution.  For the sake of simplicity, $X\!\setminus\!X'$ indicates the remaining samples of $X$ when the subset $X'$ is removed.

\begin{definition}\label{def:non-degenerate:int}
A segmentation $\{X_0,X_1,\cdots,X_K\}$ is intersected nondegenerately, if for any subset $X_k'$ of $X\!\setminus\!X_k$, the splitting 
\[
    X_{k} = Y_k+Z_{k},\quad Y_k\subset\S_k',\quad Z_k\subset \S_k''
\]
according to the direct sum $\spann(X_k) = \S_k'\oplus \S_k''$, always gives a zero or nondegenerate $Z_k$,
where $\S_k' = \spann(X_k)\cap\spann(X_k')$, and $\S_k''$ is the orthogonal complement of $\S_k'$ restricted in $\spann(X_k)$.
\end{definition}

\begin{figure}[t]
\centering
\includegraphics[width=2.5in]{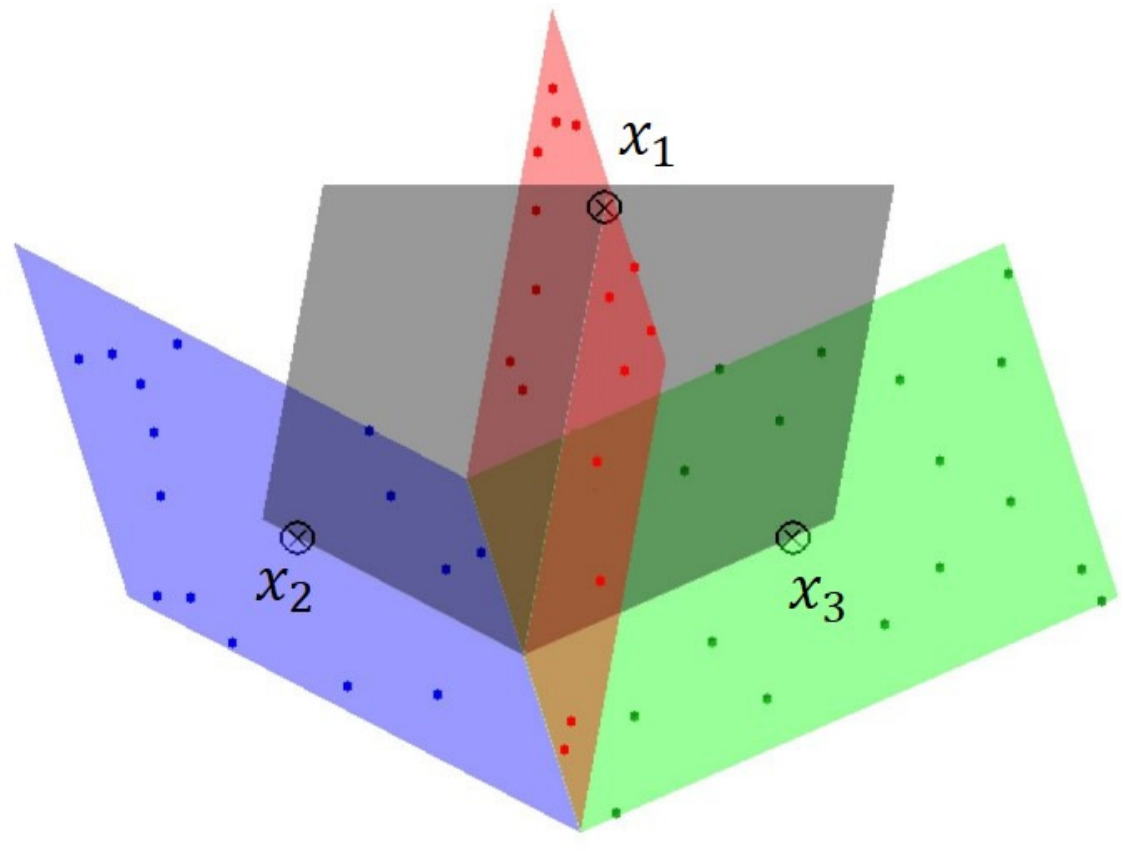}
\caption{Illustration of nondegenerate intersection. The segmentation of samples according to the blue, red, and green planes is intersected degenerately because of the points $x_1,x_2,x_3$ that are sampled from the three planes but can span a new minimal subspace marked in gray. After deleting these three points, the segmentation is intersected nondegenerately.}
\label{fig:inter_degenerate}
\end{figure}

Figure \ref{fig:inter_degenerate} illustrates the nondegenerate intersection, in which the minimal segmentation becomes to be intersected degenerately if we add three special points $x_1$, $x_2$, and $x_3$ into the three planes. In this case, the whole sample set has a new segmentation with 4 minimal segments; However the intersection of two segments may be not empty. In addition, after merging the new points into the original three segments, respectively, the three extended segments also form an MSS. This illustration is limited because of its low dimension, in which the newly added sample $x_i$ belongs to an intersection of two minimal segments. This intersection phenomenon should be removed in uniqueness analysis as we assumed in (\ref{non-intersection}). In a higher dimensional space, some MSSs may be intersected degenerately, although the intersection phenomenon in Figure \ref{fig:inter_degenerate} does not occur. For example, (\ref{non-intersection}) is satisfied for the degenerately intersected  segmentation $\{X''_k\}$ given in Example \ref{exmp:mult-MSS}.

\begin{theorem}\label{thm:unique2}
If $X$ has a nondegenerately intersected MSS satisfies (\ref{non-intersection}), then its MSS satisfying (\ref{non-intersection}) is unique.
\end{theorem}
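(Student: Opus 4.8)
The plan is to prove uniqueness by the same induction-on-$K$ strategy used in Theorem~\ref{thm:unique1}, but now exploiting the nondegenerate-intersection hypothesis in place of the sample-quantity bound. Suppose $\{X_0,X_1,\dots,X_K\}$ is a nondegenerately intersected MSS satisfying~(\ref{non-intersection}), and let $\{X_0',X_1',\dots,X_{K'}'\}$ be any MSS of $X$ satisfying~(\ref{non-intersection}). The base case $K=0$ (pure span) is immediate, and when $K\ge 1$ it suffices, by induction, to show that some minimal segment $X_k$ coincides with one of the $X_\ell'$; then we delete $X_k$ from $X$, observe that both remaining segmentations are still MSSs of $X\setminus X_k$ satisfying~(\ref{non-intersection}) (and the first is still nondegenerately intersected, since nondegeneracy is inherited by restricting the ambient sample set, as it already quantifies over all subsets $X_k'$ of $X\setminus X_k$), and invoke the induction hypothesis.

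So the heart of the argument is the single-step matching claim. First I would fix, say, $X_1$ and examine how its samples distribute among the second segmentation: write $X_{1,k} = X_1\cap X_k'$, so $X_1 = \bigcup_{k=0}^{K'} X_{1,k}$. The goal is to show exactly one $X_{1,k}$ already spans $\spann(X_1)$, which by~(\ref{non-intersection}) forces $X_k' = X_1$. Each nonempty $X_{1,k}$ with $k\ge 1$ lies in $\spann(X_1)\cap\spann(X_k')$, a proper subspace $\S_k'$ of $\spann(X_1)$ unless equality holds. Applying Definition~\ref{def:non-degenerate:int} with $X_k'$ playing the role of the ``other subset'': decompose $\spann(X_1)=\S_k'\oplus\S_k''$ and split $X_1 = Y_k + Z_k$ accordingly; the hypothesis says the projected component $Z_k$ is zero or nondegenerate. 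If, toward a contradiction, \emph{no} $X_{1,k}$ spans $\spann(X_1)$, I would argue that the pieces $X_{1,k}$ are then genuinely ``small'' relative to $\S_k'$ (each $X_{1,k}\subset\S_k'$ has $n(X_{1,k})\le \dim\S_k' < r(X_1)$, so by minimality/nondegeneracy of $X_1$ each $X_{1,k}$ is of full rank), and then combine this with the nondegeneracy of the projected $Z_k$'s to derive that $X_1$ is not minimal — contradicting that $\{X_k\}$ is an MSS. This step needs a careful counting/rank argument: one wants to produce a rank-deficient subset of $X_1$ spanning a strictly smaller subspace, using that the nondegenerate $Z_k$ can be trimmed to a basis of $\S_k''$ while keeping some structure inside $\S_k'$.

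The main obstacle I expect is precisely turning ``every proper-subspace intersection produces a nondegenerate projected complement'' into ``$X_1$ itself is degenerate (hence not minimal)''. The subtlety is that nondegeneracy of $Z_k$ is about the $\S_k''$-component after projection, not about the original samples, and one must chase how a rank-deficient subset of $X_1$ can be assembled from a nondegenerate $Z_k$ together with the complementary $Y_k$-part; the roles of the two components and the fact that $\S_k',\S_k''$ vary with $k$ must be tracked carefully. It may be cleanest to pick the index $k$ maximizing $n(X_{1,k})$, show $\S_{k}'$ is then ``almost all'' of $\spann(X_1)$, and then use that if $\S_{k}'\ne\spann(X_1)$ the leftover samples (those in $X_1\setminus X_{1,k}$) together with enough of $X_{1,k}$ give a rank-deficient subset that, via the nondegeneracy splitting, still spans $\spann(X_1)$ — yet a rank-deficient subset spanning the whole of a minimal subspace is fine, so actually the contradiction must instead come from exhibiting a rank-deficient subset spanning a \emph{proper} subspace, which is where Definition~\ref{def:non-degenerate:int} is designed to block exactly the degenerate case. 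Once the matching claim is established, the induction and the final appeal to~(\ref{non-intersection}) to pin down $X_0$ versus $X_0'$ are routine.
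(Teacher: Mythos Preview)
Your overall induction framework matches the paper's, but the single-step matching argument has two genuine gaps.

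\textbf{First, Definition~\ref{def:non-degenerate:int} is misapplied.} You propose to take the second segmentation's segment $X_k'$ itself as the ``other subset'' in Definition~\ref{def:non-degenerate:int}. But that definition requires the subset to lie in $X\setminus X_1$, and $X_k'$ overlaps $X_1$ in exactly the nonempty piece $X_{1,k}$ you care about. The correct move (and this is what the paper does) is to pick $k_1$ with $n_{1,k_1}$ maximal among $k\ge 1$, set $k_1=1$ without loss of generality, and apply the definition with the \emph{complement} $X_1^c := X_1'\setminus X_{1,1}\subset X\setminus X_1$ as the external subset. Then $\S' = \spann(X_1)\cap\spann(X_1^c)$ and the split $X_1 = Y_1+Z_1$ gives a nondegenerate $Z_1$, and restricting to $X_{1,1}\subset X_1$ yields $r(Z_{1,1}) = \min\{n_{1,1},r(Z_1)\}$.

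\textbf{Second, the contradiction target is wrong.} You try to force $X_1$ to be degenerate, and you correctly sense this does not quite close. The paper's contradiction does not attack minimality of $X_1$; it exploits minimality of $X_1'$, the segment from the \emph{second} MSS. Assuming neither span contains the other, one case-splits on $r(Z_{1,1})$. If $r(Z_{1,1})=r(Z_1)$, then $\spann(Z_1)\subset\spann(X_{1,1})\subset\spann(X_1')$, and together with $\S'\subset\spann(X_1^c)\subset\spann(X_1')$ this forces $\spann(X_1)\subset\spann(X_1')$, contradicting the hypothesis. If $r(Z_{1,1})=n_{1,1}$, a rank count gives $Y_{1,1}=0$, so $X_{1,1}\subset\S''$; hence $\spann(X_{1,1})\cap\spann(X_1^c)\subset\S''\cap\S'=\{0\}$, giving $r(X_1')=r(X_{1,1})+r(X_1^c)$. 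Now minimality of $\spann(X_1')$ forces the proper-rank subset $X_1^c$ to be full column rank, so $r(X_1')=n_{1,1}+n(X_1^c)=n(X_1')$, contradicting rank-deficiency of $X_1'$.

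In short: use $X_1'\setminus X_{1,1}$ (not $X_1'$) as the Definition~\ref{def:non-degenerate:int} subset, and drive the contradiction through minimality of $X_1'$, not $X_1$.
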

\begin{proof}
We use the same notation as that used in the proof of Theorem \ref{thm:unique1}. Assume that $X$ has a nondegenerately intersected MSS $\{X_0, \cdots, X_{K}\}$ and another MSS $\{X_0', \cdots, X_{K'}'\}$, and both satisfy (\ref{non-intersection}). We first show that there is a segment $X_{k_1}'$ equal to $X_1$.  Uniqueness is then achieved after applying the method of induction to the number of segments since $\{X_0, X_2,\cdots, X_{K}\}$ is a minimal segmentation of the remaining samples.
To this end, we consider $n_{1,k_1} = \max_{k\neq 0} n_{1,k}$ that is most possible for the equality $X_{k_1}'=X_1$ since $X_{k_1}'$ has the largest intersection with $X_1$. For the sake of simplicity, we can assume that $k_1=1$. The equality $X_1'=X_1$ holds if $\spann(X_1')\subseteq \spann(X_1)$ or $\spann(X_1)\subseteq \spann(X_1')$ by (\ref{non-intersection}).  

Assume $\spann(X_1')\nsubseteq \spann(X_1)$ and $\spann(X_1)\nsubseteq \spann(X_1')$ conversely. That is, both $X_1^c= X_1'\!\setminus\! X_{1,1}$ and $X_1\!\setminus\! X_{1,1}$ are not empty. Obviously, $X_1\nsubseteq \S_0'=\spann(X_1^c)$.
Hence, splitting $X_1$ according to the direct sum $\spann(X_1) = \S' +\S''$, where $\S'=\S_0'\cap \spann(X_1)$ and $\S''$ is the orthogonal complement of $\S'$ restricted in $\spann(X_1)$, we can rewrite $X_1 = Y_1+ Z_1$, where $Y_1\subset \S'$ and nonzero $Z_1\subset \S''$ that should be nondegenerate since $\{X_0, \cdots, X_K\}$ is intersected nondegenerately. Thus, in the splitting $X_{1,1} = Y_{1,1}+ Z_{1,1}$ of the subset $X_{1,1}$ of $X_1$, where $Z_{1,1}\subset Z_1$, the nondegeneracy of $Z_1$ gives that $r(Z_{1,1}) = \min\big\{n_{1,1},r(Z_1)\big\}$. However, whether $r(Z_{1,1})=r(Z_1)$ is true or not, it always leads to a contradiction as shown below. 
If $r(Z_{1,1})=r(Z_1)$, then $\spann(Z_1)=\spann(Z_{1,1})\subset\spann(X_{1,1})\subset \spann(X_1')$, and we get $\spann(X_1)\subseteq \spann(Y_1)+\spann(Z_1) = \S'+\spann(X_1')\subseteq\spann(X_1')$, a contradiction of the hypothesis $\spann(X_1)\nsubseteq \spann(X_1')$. If $r(Z_{1,1})\neq r(Z_1)$, then $r(Z_{1,1})=n_{1,1}$ and 
\[
    r(Y_{1,1})+r(Z_{1,1})=r(X_{1,1})\leq n_{1,1} = r(Z_{1,1}).
\]
Hence, $Y_{1,1}=0$, {\it i.e.}, $X_{1,1}=Z_{1,1}\subseteq S''\,\bot\,\big(\S_0'\cap\spann(X_{1,1}) \big)$. We conclude that 
$\spann(X_{1,1})$ is orthogonal to $\S_0'=\spann(X_1^c)$. Thus, $r(X_1^c) < r(X_1')$. By the minimality of $X_1'$, $X_1^c$ is of full column rank, and
$n_1>r(X_1') = r(X_{1,1})+r(X_1^c) = n_{1,1}+n(X_1^c) = n_1$, which is also
a contradiction.
\end{proof}

The sample quantity condition of Theorem \ref{thm:unique1} is generally satisfied in many applications since the number of subspaces that we want to be recognized is quite small, compared with the number of samples. In addition, the sample quality condition of Theorem \ref{thm:unique2} is also satisfied with probability 1 if the samples are randomly chosen from given subspaces.

\begin{theorem}\label{thm:ndgt}
Given different subspaces $\S_1,\cdots,\S_K$, if the columns of $X_k$ are randomly chosen from $\S_k$ with $n_k>\dim(\S_k)$ for $k=1,\cdots,K$, then $\{X_1,\cdots,X_K\}$ is intersected nondegenerately and (\ref{non-intersection}) is satisfied with probability 1.
\end{theorem}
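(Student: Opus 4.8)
The plan is to show that the two "bad events"—some sample lying in an intersection $\spann(X_k)\cap\spann(X_\ell)$, and some segment being intersected degenerately—are each confined to a finite union of proper algebraic subvarieties of the product $\prod_k \S_k^{n_k}$, and hence have Lebesgue measure zero there (equivalently, occur with probability $0$ under any absolutely continuous sampling law, in particular the uniform one). The guiding principle is: whenever a genericity condition fails, some determinant of a matrix whose entries are coordinates of the sampled vectors must vanish; such a determinant is a polynomial in those coordinates, and either it vanishes identically or its zero set is measure zero. So the real work is to exhibit, for each way the condition can fail, a witnessing polynomial that is \emph{not} identically zero.

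First I would dispatch condition (\ref{non-intersection}). Fix $k\ne\ell$ and an index $j$ with $x_j$ a column of, say, $X_k$; the event $x_j\in\spann(X_k)\cap\spann(X_\ell)$ requires in particular $x_j\in\spann(X_\ell)$, i.e. $x_j$ lies in a $\dim(\S_\ell)$-dimensional subspace while being a generic point of the $\dim(\S_k)$-dimensional space $\S_k\ne\S_\ell$. Intersecting, $x_j\in\S_k\cap\S_\ell$, a \emph{proper} subspace of $\S_k$ (properness uses $\S_k\ne\S_\ell$; if $\S_k\cap\S_\ell=\S_k$ then $\S_k\subseteq\S_\ell$, and since they are distinct this forces $\dim\S_k<\dim\S_\ell$—I would handle this by symmetry, or simply note $\S_k\cap\S_\ell$ is always a proper subspace of at least one of the two). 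A generic point of $\S_k$ avoids any fixed proper subspace with probability $1$, and there are finitely many triples $(j,k,\ell)$, so the union still has probability $0$. Also each $X_k$ individually has full rank equal to $\dim(\S_k)$ with probability $1$ (Vandermonde-type argument), and with $n_k>\dim(\S_k)$ this already makes each $\spann(X_k)=\S_k$ and rank-deficient, so each segment genuinely spans a minimal subspace; I would record this at the outset.

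Next, nondegenerate intersection. By Definition \ref{def:non-degenerate:int} I must show: almost surely, for every $k$, every subset $X_k'\subseteq X\setminus X_k$, writing $\S_k'=\spann(X_k)\cap\spann(X_k')$ and $\S_k''$ its orthogonal complement inside $\spann(X_k)=\S_k$, the $\S_k''$-component $Z_k$ of $X_k$ is zero or nondegenerate. Since almost surely $\spann(X_k')=\spann$ of the corresponding $\S_\ell$'s it involves—more precisely $\spann(X_k')$ is, with probability $1$, the sum of those $\S_\ell$ (or the part of them) meeting $X_k'$—there are only finitely many possible subspaces $\S_k'$ that can arise (each is $\S_k\cap V$ for $V$ ranging over the finitely many sums of subsets of $\{\S_1,\dots,\S_K\}$). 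Fix one such $\S_k'$, hence a fixed splitting $\S_k=\S_k'\oplus\S_k''$, and a fixed target rank $\rho=\min\{n(X_k'),\dim\S_k''\}$; nondegeneracy of $Z_k$ is the statement that every $\rho$-subset of its columns, and every smaller subset, has full rank. Projecting $\S_k$ orthogonally onto $\S_k''$ is a fixed linear map, so $Z_k$ consists of $n_k$ i.i.d. draws pushed forward to $\S_k''$; this pushforward of the uniform law on $\S_k$ is again absolutely continuous on $\S_k''$ (the projection is surjective onto $\S_k''$), hence a generic such configuration is in "general position" in $\S_k''$ and every appropriately-sized minor is nonzero with probability $1$. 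Taking the union over the finitely many $k$, finitely many admissible $\S_k'$, and finitely many choices of columns completes the argument.

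The main obstacle I anticipate is the bookkeeping in the previous paragraph: making rigorous the reduction from "all subsets $X_k'$ of $X\setminus X_k$" (exponentially many, and random) to "finitely many fixed subspaces $\S_k'$." The clean way is a two-stage conditioning: first invoke the probability-$1$ event $E$ on which every $X_\ell$ has full rank and every sub-collection of columns drawn from a given $\S_\ell$ is as independent as dimension allows, so that on $E$ the subspace $\spann(X_k')$ depends only on \emph{which} $\S_\ell$'s are touched by $X_k'$ and how many columns from each (capped at $\dim\S_\ell$); then, conditionally on $E$ and on the (finitely many) combinatorial types, argue the degeneracy polynomial for $Z_k$ is not identically zero by exhibiting one explicit configuration where it is nonzero—e.g. choose the columns of $X_k$ to include a basis of $\S_k''$ together with a basis of $\S_k'$ and generic extras. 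Verifying non-identical-vanishing of that polynomial is the only place a genuine (if short) computation is needed; everything else is the standard "proper subvariety has measure zero, finite unions of measure-zero sets are measure zero" packaging, together with Fubini to pass between the product measure on $\prod_k\S_k^{n_k}$ and the individual coordinates.
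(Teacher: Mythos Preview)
Your plan is sound and shares the paper's core idea: for each $k$ and each choice of $X_k'\subseteq X\setminus X_k$, project $X_k$ onto the complement $\S_k''$ and argue the projected matrix $Z_k$ is almost surely nondegenerate because its columns are drawn from an absolutely continuous law on $\S_k''$. The paper's execution is much shorter, however. It simply writes $X_k=U_kH_k$ in coordinates, re-chooses the orthonormal basis as $U_k=[U_k',U_k'']$ adapted to the splitting $\S_k=\S_k'\oplus\S_k''$, and observes $Z_k=U_k''H_k''$ where $H_k''$ is a block of (an orthogonal transform of) the random coefficient matrix $H_k$; since that transform depends only on $X_k'\subseteq X\setminus X_k$, which is independent of $X_k$, the block $H_k''$ remains absolutely continuous and is therefore nondegenerate with probability $1$. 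This one-line use of independence is exactly what lets the paper bypass the bookkeeping you flag as the ``main obstacle.''

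One point in your outline needs correction. The claim in your middle paragraph that ``there are only finitely many possible subspaces $\S_k'$ that can arise (each is $\S_k\cap V$ for $V$ a sum of some of the $\S_\ell$)'' is not right: if $X_k'$ contains fewer than $\dim\S_\ell$ columns from some $X_\ell$, then $\spann(X_k')$ is a genuinely random subspace, not one of those fixed sums, and your ``combinatorial types'' refinement in the last paragraph does not repair this either (two columns from a five-dimensional $\S_\ell$ span a random plane, not a fixed one). The clean fix---which is implicitly what the paper does---is to condition on \emph{all} of $X\setminus X_k$ at once: given $X\setminus X_k$, there are only finitely many subsets $X_k'$, hence finitely many (now fixed) complements $\S_k''$; then independence of $X_k$ from $X\setminus X_k$ gives the almost-sure nondegeneracy of each $Z_k$, and a finite union finishes. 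With that adjustment your argument goes through and is equivalent to the paper's, just packaged in the algebraic-subvariety language rather than the paper's coordinate computation.
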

\begin{proof}
The condition (\ref{non-intersection}) is obviously satisfied with probability 1. Let $U_k$ be an orthogonal basis matrix of $\S_k$, and let $X_k = U_kH_k$. By the assumption, $H_k$ is a random matrix whose entries are i.i.d.
To show the nondegenerate intersection of $\{X_0,X_1,\cdots,X_K\}$, we consider an arbitrary subset $X_k'$ of $X_k^c = X\!\setminus\!X_k$, and the splitting 
\[
    X_k = Y_k+Z_k,\quad
    Y_k\subset\S_k' = \S_k\cap\spann(X_k'),\quad
    Z_k\subset \S_k''
\]
where $Z_k\neq 0$ and $\S_k''$ is the orthogonal complement of $\S_k'$ restricted to $\S_k$. Let $Y_k=U_k'H_k'$ and $Z_k=U_k''H_k''$, where $U_k'$ and $U_k''$ are orthogonal basis matrices of $\S_k'$ and $\S_k''$, respectively. We have $U_k = [U_k',U_k'']$ and $H_k^T = [H_k'^T,H_k''^T]$. Since $H_k$ is a random matrix whose entries are i.i.d., so is $H_k''$. The entry distribution implies that $H_k''$ is nondegenerate with probability 1 since a matrix with i.i.d. entries is full rank with probability 1. Therefore, $Z_k = U_k''H_k''$ is also nondegenerate with probability 1. Hence, the proof is completed.
\end{proof}

Generally, the pure segment $X_0$ vanishes in applications. The following corollary further shows that if the samples are randomly chosen from the union of subspaces $\S_1,\cdots,\S_K$, then these subspaces are just the unique minimal sample subspaces of the samples with probability 1.

\begin{corollary}
Assume that the columns of $X_k$ are randomly sampled from subspace $\S_k$ and $n_k>\dim(\S_k)$ for $k\leq K$. Then, $\{X_1, \cdots, X_K\}$ is the unique minimal segmentation with probability 1.
\end{corollary}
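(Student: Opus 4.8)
The plan is to obtain the corollary by combining Theorem~\ref{thm:ndgt} with Theorem~\ref{thm:unique2}, the only genuine extra step being to check that $\{X_1,\dots,X_K\}$ is itself an MSS. Since only finitely many ``probability $1$'' events will be invoked, their intersection again has probability $1$, and the stated conclusion will hold on that event.

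First I would verify that $\{X_1,\dots,X_K\}$ satisfies Definition~\ref{def:partition} with probability $1$. Write $X_k=U_kH_k$ with $U_k$ an orthonormal basis matrix of $\S_k$ and $H_k$ a matrix with i.i.d.\ entries, exactly as in the proof of Theorem~\ref{thm:ndgt}. The standard fact that every submatrix of such an $H_k$ attains the largest rank allowed by its shape with probability $1$ gives, on the one hand, $\spann(X_k)=\S_k$ and $r(X_k)=\dim(\S_k)<n_k$, so $X_k$ is rank deficient; and, on the other hand, that every column subset $X_k'$ with $n(X_k')<r(X_k)$ has $r(X_k')=n(X_k')$, so $X_k$ is nondegenerate. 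Hence each $\spann(X_k)$ is minimal in the sense of Definition~\ref{def:minimal-subspace}. Condition~(2) of Definition~\ref{def:partition} holds because $\spann(X_k)=\S_k$ and $\S_1,\dots,\S_K$ are pairwise distinct, and condition~(3) is vacuous as there is no pure segment $X_0$. Thus $\{X_1,\dots,X_K\}$ is an MSS, and by Theorem~\ref{thm:ndgt} it is, with probability $1$, intersected nondegenerately and satisfies~(\ref{non-intersection}).

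At this point Theorem~\ref{thm:unique2} applies verbatim to the nondegenerately intersected MSS $\{X_1,\dots,X_K\}$ and shows that the MSS of $X$ satisfying~(\ref{non-intersection}) is unique, i.e.\ any such MSS equals $\{X_1,\dots,X_K\}$. I do not expect a real obstacle here: the corollary is essentially a repackaging of the two cited theorems together with the elementary random-matrix rank facts already used for Theorem~\ref{thm:ndgt}. The only points requiring a little care are the bookkeeping of the finitely many almost-sure events, and the reading of the phrase ``unique minimal segmentation'': as throughout Section~\ref{subsection:unique} it is to be understood among segmentations satisfying the necessary condition~(\ref{non-intersection}). A version without that qualifier would need one further (routine) remark ruling out that some other MSS of $X$ violates~(\ref{non-intersection}); that is where any additional work would be concentrated.
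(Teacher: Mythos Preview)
Your proposal is correct and follows essentially the same route as the paper: invoke Theorem~\ref{thm:ndgt} to get nondegenerate intersection and condition~(\ref{non-intersection}) with probability~$1$, check that $\{X_1,\dots,X_K\}$ is an MSS, and then apply Theorem~\ref{thm:unique2}. The only cosmetic difference is that the paper derives the nondegeneracy of each $X_k$ as a consequence of the nondegenerate-intersection property (taking $X_k'=\emptyset$ in Definition~\ref{def:non-degenerate:int} gives $Z_k=X_k$), whereas you argue it directly from the i.i.d.\ structure of $H_k$; both are equally valid.
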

\begin{proof}
By Theorem \ref{thm:ndgt}, $\{X_1,\cdots, X_K \}$ is intersected nondegenerately with probability 1. Hence, with this probability, each $X_k$ is non-degenerate, and hence, $\spann(X_k) = \S_k$ is a minimal sample subspace, that is, $\{X_1 ,\cdots, X_K\}$ is minimal by definition. Since (\ref{non-intersection}) is also satisfied with probability 1, by Theorem \ref{thm:unique2}, this MSS is unique with probability 1.
\end{proof}

In summary, the MSS of a given a set of finite samples always exists. It is possible to have multiple MSSs, but a fat MSS is unique, as shown in Theorem \ref{thm:unique1}. Furthermore, if the samples are well-distributed, only one MSS exists, as shown in Theorem \ref{thm:unique2}. In applications, samples from ground-truth subspaces are generally well-distributed or the MSS is fat. Therefore, the MSS is unique and  generally represents the ground-truth. However,  segmentation minimality is  extremely difficult to  confirm. In the next section, we show how to detect the minimality in a relatively simple way, which provides an insight for MSS detection. It is very helpful for modeling  minimality as an optimization problem so that we can practically determine the minimal subspace segmentation via solving  the optimization problem.

\section{Detection of Minimal Subspace Segmentation}\label{sec:equivalent}

Clearly, it is impractical to inspect the minimality of a given segmentation $\{X_{J_k}\}$  by checking  whether each segment is nondegenerate or not, where $X_{J_k} = X(:,J_k)$. Notice that we have used the notation $X_k$ for the minimal segment for the sake of simplicity, {\it i.e.}, $X_k=X_{J^*_k}$ with the index set $J_k^*$ of $X_k$. Fortunately, this complicated task can be relatively simplified if we have a little prior information on the MSS.

The insight  for the detection of MSS is that prior information on the MSS may narrow the set of segmentations,  and thus enabling relatively easy detection. To this end, and also for the sake of simplicity, we assume that MSS $\{X_k\}$ does not have a pure segment $X_0$ and is intersected nondegenerately.  Thus, MSS $\{X_k\}$ is unique  according to Theorem \ref{thm:unique2}. Obviously, there are at least three necessary conditions for segmentation $\{X_{J_k}\}$ to be the MSS:

(a) Its segment number  equals the number of the minimal segments;

(b)  The rank sum of its segments is not larger than the rank sum $d = \sum_kr(X_k)$;

(c) Each segment size  is larger than the smallest rank $d_{\min}=\min_kr(X_k)$.

Here the rank sum $d$ is equal to the dimension sum of the minimal subspaces. We  use the prior information to narrow the feasible domain of the MSS to the subset of those satisfying the above three restrictions.  Equivalently, we focus the index partitions in the following set:
\begin{align}\label{J}
    {\cal J}(K,d,d_{\min}) = \big\{J=\{J_1,\cdots,J_K\}: \ \min |J_k| > d_{\min}, \ r(X_{J_1})+\cdots+r(X_{J_K}) \leq d\big\}.
\end{align}
Obviously, index partition $J^* = \{J_k^*\}$ of the MSS $\{X_k\}$ belongs to ${\cal J}(K,d,d_{\min})$. If ${\cal J}(K,d,d_{\min}) $ contains only one partition, the detection of the MSS becomes to simply check whether a partition has only $K$ pieces and if the two conditions 
\[
    r(X_{J_1})+\cdots+r(X_{J_K}) \leq d,\quad
    \min |J_k| > d_{\min}
\]
are satisfied. Hence, the relevant question is: Could ${\cal J}(K,d,d_{\min})$ be a singleton? 

We will give a positive answer to this question under weak conditions shown later. To this end, let $\S_k =  \spann(X_k)$, $d_k = \dim(\S_k) = r(X_k)$, $n_k = n(X_k)$ for $k=1,\cdots,K$, and 
\begin{align}\label{def:ds}
	d_{\min}=\min_k d_{k},\quad
	d_0 = \max_k \dim\big( \S_{k} \cap \sum_{j \neq k} \S_{j} \big), \quad
	d_{\rm int} = \max_{k\neq j}\dim( \S_{k}\cap \S_j). 
\end{align}
Hence, $d = \sum_k d_k$.
We say that $J = \{J_k\}$ is a minimal partition if $\{X_{J_k}\}$ is an MSS of $X$. Example \ref{exmp:mult-MSS} shows that ${\cal J}(K,d,d_{\min})$ may have multiple minimal partitions in  special cases. To guarantee a single minimal partition in ${\cal J}(K,d,d_{\min}) $,  certain conditions  must be met. In the next subsection, we offer some sufficient conditions  that guarantee the singleness of ${\cal J}(K,d,d_{\min}) $. We may use the assumptions if necessary.
\begin{align}\label{assumption}
	(i) \ x_j \notin \S_k\cap \S_\ell,\quad \forall j\leq n,  k \neq \ell;  \  
	(ii)\ \{X_1,\cdots,X_K\} \ \mbox{is intersected nondegenerately.}
\end{align}
These sufficient conditions are tight. We will give some counterexamples  in which one of  the sufficient conditions is not satisfied and  further discuss how to refine $J\in {\cal J}(K,d,d_{\min}) $ in these cases.

 The number $K$ and dimension sum $d$ of minimal subspaces are generally known in applications. The smallest dimension $d_{\min}$ may also be known if the minimal subspaces have equal dimensions. In the computational model given later, we assume that $K$, $d$, and $d_{\min}$ are known.  However, the minimal dimension restriction is relaxed in our subsequent algorithms.

\subsection{Conditions of Singleness}\label{subsection:singleness}

Our analysis on the singleness of ${\cal J}(K,d,d_{\min})  $ is based on a discreet estimation on the rank of each segment $X_{J_k} $ for a given partition $J=\{J_k\}\in{\cal J}(K,d,d_{\min}) $. The simple equality for matrix partition $A=[B,C]$
\begin{align}\label{ABC}
	r(A) = r(B)+r(C)-\dim(\spann(B)\cap\spann(C))
\end{align}
will be repeatedly used in the rank estimation. 
For the sake of simplicity, let $t_+ = \max\{0,t\}$ and 
\[
	J_{ik} = \{j\in J_i: x_j\in X_k\},\quad |J_{ik}| = n(X_{J_{ik}}), 
\]
and let $\S\setminus\S'$ be the orthogonal complement of $\S'$ restricted in $\S$ for subspace $\S'$ of $\S$.

\begin{lemma}\label{lma:rank} 
Let $J=\{J_k\}\in{\cal J}(K,d,d_{\min}) $. If $x_j \notin \S_k\cap \S_\ell $ for all $j$ and $k \neq \ell$, then for any $s\neq t$ and $J_i$,
\begin{align}\label{rank:X_Ji}
	r(X_{J_i})\geq \min\{|J_{is}|,d_s\}+\big(\min\{|J_{it}|,d_t\}-d_{\rm int}\big)_++\sum_{k\neq s,t}\big(\min\{|J_{ik}|,d_{k}\}-d_0\big)_+.
\end{align}
Furthermore, if $\{X_k\}$ is intersected nondegenerately, then 
\begin{itemize}
	\item[(a)] \hspace{2pt}$r(X_{J_i})\geq d_{\min}$ for those $J_i$ having a single nonempty piece $J_{is}$. 
	\item[(b)] $r(X_{J_i})> d_{\min}$ for those $J_i$ having at least two nonempty pieces $J_{is}$ and $J_{it}$.
\end{itemize}
\end{lemma}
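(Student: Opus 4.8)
\noindent\emph{Proof plan.} I would start by recording two consequences of the standing hypotheses. By (i) every sample lies in exactly one $\S_k$, so up to a column permutation $X_{J_i}=[X_{J_{i1}},\dots,X_{J_{iK}}]$; and no nesting $\S_k\subseteq\S_\ell$ ($k\neq\ell$) can occur, since then the columns of $X_k$ would lie in $\S_k\cap\S_\ell$. Minimality of $\spann(X_k)$ makes $X_k$ nondegenerate, hence every $X_{J_{ik}}$ is nondegenerate and $r(X_{J_{ik}})=\min\{|J_{ik}|,d_k\}$.

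To get (\ref{rank:X_Ji}) I would order the nonempty pieces so that $s$ and $t$ come first and build $r(X_{J_i})$ one block at a time from $r([A,B])=r(A)+r(B)-\dim(\spann A\cap\spann B)$ together with $r([A,B])\ge r(A)$. The overlap of $X_{J_{is}}$ and $X_{J_{it}}$ lies in $\S_s\cap\S_t$, of dimension $\le d_{\rm int}$; when a later block $X_{J_{ik}}$ ($k\neq s,t$) is appended, its overlap with the accumulated block lies in $(\sum_{j\neq k}\S_j)\cap\S_k$, of dimension $\le d_0$; empty pieces add nothing; and replacing each increment by its positive part gives exactly the bound. Part (a) is then immediate: with only the piece $J_{is}$ nonempty, $X_{J_i}=X_{J_{is}}$ and $r(X_{J_i})=\min\{|J_{is}|,d_s\}\ge d_{\min}$, since $|J_{is}|=|J_i|>d_{\min}$ by the definition of ${\cal J}(K,d,d_{\min})$ and $d_s\ge d_{\min}$.

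Part (b) is the real work: (\ref{rank:X_Ji}) is too weak, because $d_{\rm int}$ or $d_0$ can exceed $d_{\min}$, so the strict inequality must be extracted from nondegenerate intersection. The plan is to prove the stronger claim that \emph{any $Y\subseteq X$ with $|Y|>d_{\min}$ and at least two nonempty pieces has $r(Y)>d_{\min}$} (part (b) being $Y=X_{J_i}$). Fix an order $Y_1,\dots,Y_p$ ($p\ge2$) of the nonempty pieces of $Y$, set $Y^{(j)}=Y_{j+1}\cup\cdots\cup Y_p$ (so $Y^{(0)}=Y$, $Y^{(p-1)}=Y_p$) and $\rho_j=r(Y^{(j)})$, and peel the pieces off in this order. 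At step $j\le p-1$ the set $Y^{(j)}$ is nonempty and, by (i), disjoint from $X_j$ with $\spann(Y^{(j)})\not\subseteq\S_j$, so $\dim(\S_j\cap\spann(Y^{(j)}))\le\rho_j-1$. Apply nondegenerate intersection of $X_j$ against $X_j'=Y^{(j)}$, writing $\S_j=\S_j'\oplus\S_j''$ with $\S_j'=\S_j\cap\spann(Y^{(j)})$. If $Z_j=0$ then $\S_j\subseteq\spann(Y^{(j)})$, so the previous bound forces $d_j\le\rho_j-1$, hence $r(Y)\ge\rho_j\ge d_j+1>d_{\min}$ and we are finished. If $Z_j\neq0$ it is nondegenerate (with $r(Z_j)=\dim(\S_j'')$, since $X_j$ spans $\S_j$); moreover $\spann(Y^{(j)})$ meets $\S_j''$ only in $\S_j'\cap\S_j''=\{0\}$, while the $\S_j'$-component of $Y_j$ already lies in $\spann(Y^{(j)})$, so $\spann(Y^{(j-1)})=\spann(Z_{(j)})\oplus\spann(Y^{(j)})$, where $Z_{(j)}$ is the $\S_j''$-projection of $Y_j$. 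Hence $\rho_{j-1}=c_j+\rho_j$ with $c_j=r(Z_{(j)})=\min\{|Y_j|,\dim(\S_j'')\}$ and $\dim(\S_j'')=d_j-\dim(\S_j')\ge d_j-\rho_j+1$; in particular the $\rho_j$ are nonincreasing.

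Finishing (b) from this recursion: if $|Y_j|\ge d_j-\rho_j+1$ at some step, then $c_j\ge d_j-\rho_j+1$, so $\rho_{j-1}\ge d_j+1>d_{\min}$ and $r(Y)\ge\rho_{j-1}$; otherwise $c_j=|Y_j|$ for all $j\le p-1$, and peeling to the end gives $r(Y)=\sum_{j=1}^{p-1}|Y_j|+\min\{|Y_p|,d_p\}$, which is $|Y|>d_{\min}$ when $|Y_p|\le d_p$ and at least $1+d_p\ge1+d_{\min}$ otherwise; so $r(Y)>d_{\min}$ in every case. I expect this last step to be the main obstacle: two pieces do not suffice (the two largest pieces of $J_i$ need not contain more than $d_{\min}$ samples between them, so all pieces must be processed), and the decisive ``$+1$'' in $\dim(\S_j'')$ is obtained only from assumption (i) through $\spann(Y^{(j)})\not\subseteq\S_j$. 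Without nondegenerate intersection $r(Z_{(j)})$ can drop below $\min\{|Y_j|,\dim(\S_j'')\}$, and one can then exhibit segments with $r(X_{J_i})=d_{\min}$, so the statement genuinely needs that hypothesis.
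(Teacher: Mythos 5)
Your proposal is correct and follows essentially the same route as the paper: inequality (\ref{rank:X_Ji}) via the block-by-block rank recursion $r([A,B])=r(A)+r(B)-\dim(\spann A\cap\spann B)$ with the $d_{\rm int}$/$d_0$ overlap bounds and nondegeneracy of each minimal segment, and parts (a)–(b) via nondegenerate intersection to pin the rank of the projected piece at $\min\{|J_{ik}|,\dim(\S_k'')\}$ together with assumption (i) to force the strict ``$+1$''. The only difference is presentational: you peel pieces off from the back (suffix spans $Y^{(j)}$), whereas the paper accumulates prefixes $M_k$ and then runs the corresponding case analysis, so the two arguments are mirror images of each other.
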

\begin{proof}
For the sake of simplicity, let $X_k' = X_{J_{ik}}$. We prove (\ref{rank:X_Ji}) with $(s,t) =(1,2)$ only since one can reorder $\{X_k'\}$ to have $X_s'$ and $X_t'$ as the first two segments in the general case. 
To this end, we merge the first $k$ pieces to 
$M_k =[X_1',\cdots,X_k']$ and let $\S_k' = \spann(M_{k-1})\cap\spann(X_k')$. By (\ref{ABC}), we have the following recursion: 
\begin{align}\label{rank:Mk}
	r(M_k) =r(M_{k-1})+\zeta_k,
\end{align}
where $\zeta_k = r(X_k')-\dim(\S_k')$.
Since $X_k$ is nondegenerate, $r(X_k')= \min\{|J_{ik}|,d_{k}\}$. Combining this with $\dim(\S_k')\leq d_{\rm int}$ for $k=2$ or $\dim(\S_k')\leq d_0$ for $k>2$, we have the following estimate:
\begin{align}\label{bound:zeta_k}
 	\zeta_k\geq \left\{
	\begin{array}{ll}
		 \big(\min\{|J_{ik}|,d_{k}\}-d_{\rm int}\big)_+, & k = 2;\\
		 \big(\min\{|J_{ik}|,d_{k}\}-d_0\big)_+, & k > 2.
	\end{array}\right.
\end{align}
Thus, taking the sum of all the equalities in (\ref{rank:Mk}) and using (\ref{bound:zeta_k}), we get (\ref{rank:X_Ji}) with $(s,t) =(1,2)$.

We further show that $\zeta_k$ can be represented with $\S_k''= \S_k \cap \spann(M_{k-1})$ as
\begin{align}\label{zeta_k}
	\zeta_k=\min\{|J_{ik}|, d_k-\dim(\S_k'')\}.
\end{align}
based on the nondegeneracy of the intersection of $\{X_k\}$.
To this end, we split $X_k' = Y_k'+Z_k'$ with $Y_k' \subset \S_k'$ and $Z_k' \subset \spann(X_k') \setminus \S_k'$, and rewrite $M_k = [M_{k-1}, Y_k']+[0, Z_k']$. 
By (\ref{ABC}), and $\spann([M_{k-1}, Y_k'])\cap\spann(Z_k') = \{0\}$, we also obtain (\ref{rank:Mk}) with $\zeta_k = r(Z_k')$ since
\[
	r(M_k) = r([M_{k-1},Y_k'])+r(Z_k') = r(M_{k-1})+r(Z_k').
\]
To estimate the rank of $Z_k'$, we extend the splitting of $X_k'$ to $X_k = Y_k+Z_k$ with $Y_k \subset \S_k''= \S_k\cap\spann(M_{k-1})$ and $Z_k \subset \S_k\!\setminus \S_k''$. Obviously, $r(Y_k)\leq\dim(\S_k'')$ 
and $r(Z_k) \leq d_k-\dim(\S_k'')$. These equalities should hold since $r(Y_k)+r(Z_k) = d_k$. Furthermore, $Z_k$ is nondegenerated or $Z_k=0$ by the nondegenerate intersection of $\{X_1, \cdots, X_K\}$.
Thus, as a column submatrix of $Z_k$, $Z_k'$ should have the rank 
$\min\{|J_{ik}|,r(Z_k)\} = \min\{|J_{ik}|, d_k-\dim(\S_k'')\}$. This is (\ref{zeta_k}). 

We now prove (a) and (b) of this lemma, using (\ref{rank:Mk}) and $r(X_{J_i}) = r(M_K) = \sum_{\ell=1}^{K} \zeta_k$, comparing $|J_{ik}|$ and $\delta_k = d_k-\dim(\S_k'')$ for determining $\zeta_k$ by its definition (\ref{zeta_k}).

(1) 
If $|J_{ik}|\leq \delta_k$ for all $k$, then $\zeta_k=|J_{ik}|$. Hence, 
$r(X_{J_i})=\sum_{k=1}^{K}|J_{ik}| = |J_i|> d_{\min}$ since $J=\{J_k\}\in{\cal J}(K,d,d_{\min}) $. 

(2) 
If $|J_{i1}|>\delta_1$, and $|J_{ik}|\leq \delta_k$ for $k\geq 2$,
then $\zeta_1 = \delta_1=d_1$ and $\zeta_k = |J_{ik}|$ for $k\geq 2$. Hence, 
$r(X_{J_i}) \geq d_1 +\sum_{k=2}^K|J_{ik}|$. 

(3) 
If $|J_{ik}|> \delta_k$ for a $k\geq 2$, $\zeta_k = \delta_k=d_k-\dim(\S_k'')$. Since $\spann(M_{k-1})\nsubseteq\S_k$, we have $\S_k''\subsetneq\spann(M_{k-1})$, {\it i.e.}, $r(M_{k-1})>\dim(\S_k'')$.
By (\ref{rank:Mk}), $r(X_{J_i})\geq r(M_k) = r(M_{k-1})+\zeta_k > d_k$. 

Hence, in each of the above cases, (a) and (b) are always true.
\end{proof}

The following lemma further shows that if each minimal segment $X_k$ has  a sufficient number of samples, it must be dominated by one piece of any $J\in{\cal J}(K,d,d_{\min}) $, in the sense that there exists at least one subset $J_{ik}$ whose size is not smaller than $d_k$. We will use this lemma to prove the singleness of ${\cal J}(K,d,d_{\min}) $.

\begin{lemma}\label{lma:large J_ik}
If $x_j \notin \S_k\cap \S_\ell $ for all $j$ and $k \neq \ell$, and $n_k > d_k+(K-1)d_0 $ for all $k$, then for $J\in{\cal J}(K,d,d_{\min}) $
\[
	\max_i |J_{ik}|\geq d_k,\quad k=1,\cdots, K.
\]
\end{lemma}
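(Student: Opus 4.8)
The plan is to argue by contradiction, handling one minimal segment at a time. Fix an index and, after relabelling the minimal segments, call it $1$; suppose toward a contradiction that $\max_i|J_{i1}|<d_1$, i.e. $|J_{i1}|\le d_1-1$ for every block $J_i$. (If $K=1$ there is nothing to prove, since then $|J_1|=n_1>d_1$; so assume $K\ge 2$.) The goal is to derive the impossible inequality $\sum_i r(X_{J_i})>d$, which contradicts $\sum_i r(X_{J_i})\le d$, a consequence of $J\in{\cal J}(K,d,d_{\min})$.

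The workhorse is the rank estimate (\ref{rank:X_Ji}) of Lemma \ref{lma:rank}, which applies since $J\in{\cal J}(K,d,d_{\min})$ and $x_j\notin\S_k\cap\S_\ell$. I would apply it to each block $J_i$ with the distinguished index $s=1$ (and $t=2$, or, when $J_{i1}=\emptyset$, with $s,t$ chosen among the nonempty pieces of $J_i$), then use $d_{\rm int}\le d_0$ and $\min\{|J_{i1}|,d_1\}=|J_{i1}|$ to obtain the single uniform bound
\begin{align*}
	r(X_{J_i})\ \ge\ |J_{i1}|\ +\ \sum_{k=2}^{K}\big(\min\{|J_{ik}|,d_k\}-d_0\big)_+,\qquad i=1,\dots,K,
\end{align*}
where an empty piece simply contributes $0$. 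The one non-obvious move is forcing $s=1$: this leaves the $\S_1$-contribution $|J_{i1}|$ \emph{undiscounted} by $d_0$, which is exactly what makes the subsequent summation tight. Summing over $i$ and noting that the blocks split the index set of each minimal segment, so $\sum_i|J_{ik}|=n_k$, yields
\begin{align*}
	d\ \ge\ \sum_i r(X_{J_i})\ \ge\ n_1\ +\ \sum_{k=2}^{K}\ \sum_i\big(\min\{|J_{ik}|,d_k\}-d_0\big)_+.
\end{align*}

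To close the estimate I would prove the elementary claim that $\sum_i(\min\{|J_{ik}|,d_k\}-d_0)_+\ge d_k-d_0$ for each $k\ge 2$: if some block has $|J_{ik}|\ge d_k$ the corresponding summand already equals $(d_k-d_0)_+\ge d_k-d_0$; otherwise $\min\{|J_{ik}|,d_k\}=|J_{ik}|$ for all $i$, and the sum is at least $\sum_i(|J_{ik}|-d_0)=n_k-Kd_0>d_k-d_0$ by the hypothesis $n_k>d_k+(K-1)d_0$. Inserting this, together with $n_1>d_1+(K-1)d_0$, into the last display gives
\[
	d\ \ge\ \sum_i r(X_{J_i})\ >\ d_1+(K-1)d_0+\sum_{k=2}^{K}(d_k-d_0)\ =\ \sum_{k=1}^{K}d_k\ =\ d,
\]
a contradiction; hence $\max_i|J_{i1}|\ge d_1$, and since the relabelled index was arbitrary, the conclusion holds for every $k$.

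The routine part is the bookkeeping in the per-block estimate: verifying that the choice of $(s,t)$ in Lemma \ref{lma:rank} is legitimate when $J_{i1}$ is empty or $J_i$ has a single nonempty piece, and reducing the $d_{\rm int}$-term to a $d_0$-term. The conceptual crux — what I expect to be the real obstacle — is recognizing that the quantitative hypothesis $n_k>d_k+(K-1)d_0$ has to be used for \emph{every} $k$, not just for the index $1$ under scrutiny: the surplus $(K-1)d_0$ supplied by $n_1$ is consumed precisely by the $K-1$ individual $d_0$-discounts that the other subspaces suffer in the per-block bound, so an equal surplus for each $k\ge 2$ is exactly what the auxiliary claim requires. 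This matching is also what singles out $d_k+(K-1)d_0$ as the natural threshold rather than anything weaker.
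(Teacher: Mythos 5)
Your proof is correct and follows essentially the same route as the paper's: both argue by contradiction, apply the blockwise rank bound (\ref{rank:X_Ji}) with the undiscounted index $s$ taken from the ``bad'' set, replace $d_{\rm int}$ by $d_0$, sum over the $K$ blocks, and use $n_k>d_k+(K-1)d_0$ (together with $\sum_i r(X_{J_i})\le d$) to force the impossible inequality $d>d$. The only difference is organizational: you eliminate one bad index at a time, whereas the paper rules out the entire set $\K^c=\{k:\max_i|J_{ik}|<d_k\}$ in a single summation.
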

\begin{proof}
Let $\K = \{k : \max_i |J_{ik}| \geq d_k\}$. This lemma is equivalent to saying that $\K = \{1,\cdots,K\}$.  we can prove this by letting $\I = \{i: \max_k |J_{ik}| \geq d_k\}$ and $\K_i = \{k : |J_{ik}| \geq d_k\}$ for each $i\in\I$. Then, $\K = \cup_{i\in \I} \K_i$. If $\K^c$ is not empty, we choose an $s\in\K^c$ and any $t\neq s$ in (\ref{rank:X_Ji}) of Lemma \ref{lma:rank} and use $d_{\rm int}\leq d_0$ to  obtain the following:
\[
	r(X_{J_i}) \geq \left\{\begin{array}{ll}
   	d_0 +\sum_{k \in \K^c}(|J_{ik}|-d_0)+ \sum_{k \in \K_i} (d_k-d_0)_+, & i\in \I; \\
    	d_0 + \sum_{k \in \K^c}(|J_{ik}|-d_0),& i\in \I^c.
    	\end{array}\right.
\]
Hence,
$	d=\sum_ir(X_{J_i})
    	 \geq Kd_0 +\sum_{k\in\K^c}\sum_{i}(|J_{ik}|-d_0)+ \sum_{i\in \I}\sum_{k\in \K_i}(d_k-d_0)_+ .
$
In the second term, $\sum_{i}(|J_{ik}|-d_0)\geq n_k-Kd_0>d_k-d_0$.
Since $\I = \cup_{k\in\K} \I_k$ with $\I_k = \{i : |J_{ik}| \geq d_k\}$, the last term becomes as follows:
\begin{align}\label{eq:ik}
	\sum_{i\in \I}\sum_{k\in \K_i}(d_k-d_0)_+ =
	\sum_{k\in \K}\sum_{i\in \I_k} (d_k-d_0)_+
	= \sum_{k\in \K}|\I_k| (d_k-d_0)_+\geq \sum_{k\in \K}(d_k-d_0).
\end{align}
Thus, $d>Kd_0 +\sum_{k}(d_k-d_0) = d$, which is
a contradiction. Therefore, $\K^c$ must be empty.
\end{proof}

 We are now ready to prove the singleness of ${\cal J}(K,d,d_{\min})$.

\begin{theorem}\label{thm:minimal partition}
Assume that $X$ has an MSS $\{X_1,\cdots,X_K\}$ satisfying the assumption (\ref{assumption}). If 
\begin{align}\label{cond:d}
	d_{\rm int} <d_{\min}, \quad
	d_0 \leq d_{\min}, \quad
	n_k > d_k+(K-1)d_0,\quad \forall k,
\end{align}
then ${\cal J}(K,d,d_{\min})$ is a singleton with the unique $J^*$.
\end{theorem}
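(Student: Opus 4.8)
The plan is to show that any partition $J=\{J_1,\dots,J_K\}\in{\cal J}(K,d,d_{\min})$ must coincide with the minimal partition $J^*=\{J_1^*,\dots,J_K^*\}$, and then invoke Theorem~\ref{thm:unique2} (uniqueness of the MSS under assumption~(\ref{assumption})) to conclude that $J^*$ is the only element. The engine of the argument is the rank estimate of Lemma~\ref{lma:rank} together with the ``domination'' property of Lemma~\ref{lma:large J_ik}: since $n_k>d_k+(K-1)d_0$ for all $k$, Lemma~\ref{lma:large J_ik} applies, so for every minimal segment index $k$ there is some piece $J_{i(k)}$ of $J$ with $|J_{i(k)k}|\geq d_k$. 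Because $r(X_{J_{i(k)}})\geq r(X_{J_{i(k)k}})=\min\{|J_{i(k)k}|,d_k\}=d_k$ already, and $\sum_i r(X_{J_i})\leq d=\sum_k d_k$, a counting argument will force the map $k\mapsto i(k)$ to be (essentially) a bijection and force each $r(X_{J_i})$ to equal exactly one $d_k$.

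First I would make the counting precise. Define, for each $i$, the set $\K_i^{\rm dom}=\{k:|J_{ik}|\geq d_k\}$; by Lemma~\ref{lma:large J_ik} the union $\bigcup_i\K_i^{\rm dom}$ is all of $\{1,\dots,K\}$. Suppose some piece $J_i$ dominates two distinct minimal segments $s,t\in\K_i^{\rm dom}$. Then $|J_{is}|\geq d_s\geq d_{\min}$ and $|J_{it}|\geq d_t$, and plugging $(s,t)$ into (\ref{rank:X_Ji}) with $d_{\rm int}<d_{\min}$ gives $r(X_{J_i})\geq d_s+(d_t-d_{\rm int})_+ = d_s+d_t-d_{\rm int}$. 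More generally, if $J_i$ dominates $m\geq 1$ segments, a similar application of (\ref{rank:X_Ji}), using $d_0\leq d_{\min}$ to control the remaining positive terms, yields $r(X_{J_i})\geq \sum_{k\in\K_i^{\rm dom}} d_k - (m-1)d_0 - \text{(correction for }d_{\rm int}\text{)}$, which is comfortably larger than $\sum_{k\in\K_i^{\rm dom}}(d_k - d_0) + d_0$. Summing over $i$ and using that the $\K_i^{\rm dom}$ cover $\{1,\dots,K\}$, I would compare $\sum_i r(X_{J_i})$ against $d=\sum_k d_k$; the slack $d_{\min}>d_{\rm int}$ and $d_{\min}\geq d_0$ makes the inequality strict unless every $\K_i^{\rm dom}$ is a singleton and the singletons are disjoint, i.e. $k\mapsto i(k)$ is a bijection and $r(X_{J_{i(k)}})=d_k$ with $J_{i(k)}$ dominated by the single segment $k$.

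Once we know each piece $J_i$ is dominated by exactly one minimal segment, say $J_{\sigma(k)}$ is the piece with $|J_{\sigma(k)k}|\geq d_k$, I would show $J_{\sigma(k)}\subseteq J_k^*$. Indeed $r(X_{J_{\sigma(k)}})=d_k$ forces $\spann(X_{J_{\sigma(k)}})=\S_k$ (it contains $\spann(X_{J_{\sigma(k)k}})=\S_k$ since $|J_{\sigma(k)k}|\geq d_k$ and $X_k$ is nondegenerate, and has the same dimension); then any column $x_j$ of $X_{J_{\sigma(k)}}$ lies in $\S_k$, so by assumption~(\ref{assumption})(i) it cannot also lie in $\S_\ell$ for $\ell\neq k$, hence $j\in J_k^*$. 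This gives $J_{\sigma(k)}\subseteq J_k^*$ for all $k$; since both $\{J_{\sigma(k)}\}_k$ (a reindexing of $J$) and $\{J_k^*\}_k$ partition the full index set $\{1,\dots,n\}$, the containments are equalities and $J=J^*$.

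The main obstacle I anticipate is the bookkeeping in the second paragraph: getting the general lower bound on $r(X_{J_i})$ for a piece dominating $m$ segments clean enough that the global sum $\sum_i r(X_{J_i})$ provably exceeds $d$ unless domination is one-to-one. One has to be careful that the ``$(\,\cdot\,)_+$'' truncations and the asymmetric roles of $d_{\rm int}$ (on the second segment) versus $d_0$ (on the rest) in (\ref{rank:X_Ji}) are exploited in the right direction, and that the pieces $J_i$ \emph{not} dominating anything (if any) are handled — but Lemma~\ref{lma:large J_ik}'s covering property plus the equality $\sum_k d_k=d$ should squeeze these out automatically, since a non-dominating piece still contributes a positive rank.
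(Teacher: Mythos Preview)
Your proposal follows exactly the paper's route: invoke Lemma~\ref{lma:large J_ik} to get domination of every minimal segment by some piece, then use the rank lower bounds of Lemma~\ref{lma:rank} in a global sum against the budget $\sum_i r(X_{J_i})\le d$ to force a bijection between pieces and segments, and finish with the containment $J_{\sigma(k)}\subseteq J_k^*$ via $\spann(X_{J_{\sigma(k)}})=\S_k$ and assumption~(\ref{assumption})(i). Your third paragraph is clean and matches the paper's endgame.

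The one concrete weakness is the phrase ``a non-dominating piece still contributes a positive rank.'' Positive rank is not enough to close the counting; you need at least $r(X_{J_i})\ge d_{\min}+1$ whenever $J_i$ dominates no segment. This is precisely what parts (a) and (b) of Lemma~\ref{lma:rank} supply (and this is where assumption~(\ref{assumption})(ii), the nondegenerate-intersection hypothesis, actually enters the proof --- your sketch only cites~(\ref{rank:X_Ji}), which does not use it). With that bound in hand, the paper organizes the sum as
\[
d \ \ge\ \sum_{i\in\I} r(X_{J_i}) + \sum_{i\in\I^c} r(X_{J_i})
\ \ge\ |\I|\,d_0 + \sum_{i\in\I}\delta_{|\K_i|>1}(d_0-d_{\rm int}) + \sum_{i\in\I}\sum_{k\in\K_i}(d_k-d_0) + |\I^c|\,(d_{\min}+1),
\]
and then rewrites the double sum as $\sum_k |\I_k|(d_k-d_{\min}) + \sum_{i\in\I}|\K_i|(d_{\min}-d_0)$, using $|\I_k|\ge 1$ and $|\K_i|\ge 1+\delta_{|\K_i|>1}$. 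The telescoping forces $|\I^c|=0$ and, because $d_{\min}>d_{\rm int}$ strictly, also $|\K_i|=1$ for every $i$; this simultaneously rules out both non-dominating pieces and pieces dominating two segments (as well as two pieces sharing a dominated segment). Your informal bound ``$\sum_{k\in\K_i^{\rm dom}} d_k - (m-1)d_0 - (\text{correction})$'' is morally right, but to make the global inequality go through you will need these exact ingredients rather than the softer ones you wrote down.
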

\begin{proof}
Let $J_k^*$ be the index set of $X_k$. By Lemma \ref{lma:large J_ik}, $\I_k = \{i : |J_{ik}| \geq d_k\}$ is nonempty. We further show that $\I_k$ has only a single index $i_k$ for each $k$ and $|\I|=K$. If it is proven,  the mapping from $k$ to $i_k$ is one-to-one; hence for $i=i_k$, $r(X_{J_i})\geq r(X_{J_{ik}})=\min\{|J_{ik}|,d_k\} = d_k$. This equality holds since $d\geq\sum_kr(X_{J_{ik}}\!)\geq\sum_kd_k=d$. Thus, $\spann(X_{J_{i_k}}) = \spann(X_{J_{i_k,k}}) = \S_k$ and $J_{i_k} \!=\! J_{i_k,k}=J_k^*$. That is, $\{X_{J_{i_k}}\}$ is equal to $\{X_k\}$, so $\J(K,d,d_{\min}) $ is a singleton with the unique $J^*$.

 We now prove that $|\I_k|=1$ for each $k$ and $|\I|=K$ by Lemma \ref{lma:rank}.
For $i\in \I^c$, we have $r(X_{J_i})\geq d_{\min}+1$ by (b) of Lemma \ref{lma:rank}, and then $\sum_{i\in\I^c}r(X_{J_i})\geq|\I^c|(d_{\min}+1)$.
For $i\in\I$, we choose $(s,t)$ in (\ref{rank:X_Ji}) such that $s\neq t\in\K_i$ if $|\K_i|>1$, or $s\in\K_i$ and $t\in \K_i^c$ if $|\K_i|=1$.
 We use the indication function $\delta_{|\K_i|>1}= 1$ if $|\K_i|>1$ or $\delta_{|\K_i|>1}=0$, otherwise , and obtain that for $i\in \I$,
\begin{align}\label{ineq:rankJi}
\begin{split}
	r(X_{J_i})  & \geq d_s+ \delta_{|\K_i|>1}(d_t-d_{\rm int})+\sum_{k\in \K_i,k\neq s,t}(d_k-d_0)_+ \\
	& = d_0 + \delta_{|\K_i|>1}(d_0-d_{\rm int})+ \sum_{k\in \K_i}(d_k-d_0),
\end{split}
\end{align}
and $\sum_{i\in\I}r(X_{J_i})\geq|\I|d_0+\sum_{i\in\I}\delta_{|\K_i|>1}(d_0-d_{\rm int})+\sum_{i\in \cal I}\sum_{k\in \K_i}(d_k-d_0)$. Hence, 
\begin{align}\label{ineq:d}
	d \geq \sum_{i}r(X_{J_i}) \geq |\I| d_0 +  \sum_{i\in \cal I}\delta_{|\K_i|>1}(d_0-d_{\rm int})
	+\sum_{i\in \cal I}\sum_{k\in \K_i}(d_k-d_0)+|\I^c| (d_{\min} + 1).
\end{align}
Since 
$|\I_k|\geq 1$, $|\K_i|\geq 1+\delta_{|\K_i|>1}$ for $i\in\I$, 
and $d_{\min}\geq d_0$, we estimate the third term as
\begin{align}
	\sum_{i\in\I}\sum_{k\in \K_i}(d_k-d_0) 
	& = \sum_{i\in\I}\sum_{k\in \K_i}(d_k-d_{\min})+\sum_{i\in\I}\sum_{k\in \K_i}(d_{\min}-d_0)	\nonumber\\	
	& = \sum_k|\I_k|(d_k-d_{\min})+\sum_{i\in\I}|\K_i|(d_{\min}-d_0) 	\nonumber\\
	& \geq d - Kd_{\min}+ |\I|(d_{\min}-d_0)+\sum_{i\in \I}\delta_{|\K_i|>1}(d_{\min}-d_0). \label{ineq:sumrank}
\end{align}
Substituting (\ref{ineq:sumrank}) into (\ref{ineq:d}), we obtain that $d\geq d+|\I^c|+\sum_{i\in \cal I}\delta_{|\K_i|>1}(d_{\min}-d_{\rm int})\geq d$. Hence, $|\I^c| = 0$. Furthermore, we have $|\K_i|=1$ for each $i\in \I$ if $d_{\min}>d_{\rm int}$. Since $|\K| = |\I| = K$, $|\K_i|=1$ for each $i\in \I$ is equivalent to  
$|\I_k| = 1$ for each $k$. The theorem is then proven.
\end{proof}

The condition $n_k > d_k+(K-1)d_0 $ for all $k$ is generally satisfied in applications. The other conditions on $d_{\rm int}$ and $d_0$ are also satisfied (in some cases, naturally) basically  because $\S_k \nsubseteq \S_j$ for any $k \neq j$. In practice, since $\S_k \cap \S_j \subsetneq \S_k$ for $k \neq j$, $d_{\rm int} < \max_{k \neq j} \min \{d_k, d_j\} \leq \max_k d_k$ and $d_0  \leq \max_k d_k $. Thus, if all the $d_k$'s are equal, $d_{\rm int}< \max_k d_k = d_{\min}$ and $d_0 \leq d_{\min}$. The equality restriction on $\{d_k\}$ can be released if $K = 2$ since $d_0 = d_{\rm int}  < \min\{d_1, d_2\} = d_{\min}$. We summarize  our conclusions as a corollary.

\begin{corollary}\label{coro:same_d}
Assume that $X$ has an MSS $\{X_1,\cdots,X_K\}$ satisfying the assumption (\ref{assumption}).
If $n_k > d_k + (K-1)d_0$ for each $k$, and $d_1 = \cdots = d_K$ when $K>2$ or arbitrary $d_1$ and $d_2$ when $K=2$, then $J^*$ is the single partition in ${\cal J}(K,d,d_{\min})$.
\end{corollary}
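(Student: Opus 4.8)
The plan is to derive Corollary \ref{coro:same_d} directly from Theorem \ref{thm:minimal partition} by verifying that the hypothesis (\ref{cond:d}) holds under the stated assumptions. Since Theorem \ref{thm:minimal partition} already gives the singleness of ${\cal J}(K,d,d_{\min})$ with a unique $J^*$, all that remains is to check the two inequalities $d_{\rm int} < d_{\min}$ and $d_0 \leq d_{\min}$; the third condition $n_k > d_k + (K-1)d_0$ is assumed verbatim in the corollary.

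First I would record the two elementary facts that hold for any MSS satisfying $\S_k \nsubseteq \S_j$ for $k \neq j$ (which is part of Definition \ref{def:partition}, condition (2), combined with minimality): since $\S_k \cap \S_j$ is a proper subspace of $\S_k$ for every $k \neq j$, we have $\dim(\S_k \cap \S_j) < d_k$, and symmetrically $\dim(\S_k \cap \S_j) < d_j$, so $d_{\rm int} = \max_{k \neq j}\dim(\S_k \cap \S_j) < \max_{k \neq j}\min\{d_k,d_j\} \leq \max_k d_k$. Likewise $\S_k \cap \sum_{j \neq k}\S_j$ is a subspace of $\S_k$, hence $\dim\big(\S_k \cap \sum_{j \neq k}\S_j\big) \leq d_k$, giving $d_0 \leq \max_k d_k$.

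Next I would split into the two cases of the corollary. In the case $d_1 = \cdots = d_K$ (used when $K > 2$), we have $\max_k d_k = d_{\min}$, so the two facts above immediately yield $d_{\rm int} < d_{\min}$ and $d_0 \leq d_{\min}$. In the case $K = 2$ with arbitrary $d_1, d_2$, note that $\sum_{j \neq k}\S_j$ reduces to the single subspace $\S_{3-k}$, so $d_0 = \max_k \dim(\S_k \cap \S_{3-k}) = \dim(\S_1 \cap \S_2) = d_{\rm int}$; and by the first elementary fact $d_{\rm int} < \min\{d_1,d_2\} = d_{\min}$, hence also $d_0 = d_{\rm int} < d_{\min}$. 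In both cases (\ref{cond:d}) is satisfied, so Theorem \ref{thm:minimal partition} applies and $J^*$ is the single partition in ${\cal J}(K,d,d_{\min})$.

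There is essentially no hard part here: the corollary is a routine specialization, and the only thing to be careful about is making explicit why $\S_k \cap \S_j \subsetneq \S_k$ — namely that distinctness of the minimal subspaces together with minimality forbids one containing another, so the intersection is a strict subspace and its dimension drops by at least one. The $K = 2$ subcase additionally relies on the observation that the ``sum over $j \neq k$'' collapses to a pairwise intersection, which is what lets us drop the equal-dimension requirement.
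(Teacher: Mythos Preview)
Your proposal is correct and follows essentially the same argument the paper gives in the paragraph preceding the corollary: verify the two inequalities $d_{\rm int}<d_{\min}$ and $d_0\le d_{\min}$ from the strict containment $\S_k\cap\S_j\subsetneq\S_k$, then invoke Theorem~\ref{thm:minimal partition}. The case split (equal $d_k$'s when $K>2$, and the collapse $d_0=d_{\rm int}$ when $K=2$) matches the paper exactly.
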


\subsection{Necessity of the Sufficient Conditions}

The conditions of Theorem \ref{thm:minimal partition} are tight. In this subsection, we give three counterexamples to show that if one of these conditions, except $n_k > d_k + (K-1)d_0$, is not satisfied, ${\cal J}(K,d,d_{\min})$ may not be a singleton. In detail, the MSS $\{X_k\}$ in Example \ref{eg:ndg_int} is not interacted nondegenerately, and the other conditions in (\ref{cond:d}) are satisfied. Example \ref{eg:ndint} is designed such that $d_{\rm int} < d_{\min}$ is not obeyed, and in Example \ref{eg:nd0}, there exists a $d_k<d_0$. 

\begin{example} \label{eg:ndg_int}
Let $X_k = U_k H\in\R^{8\times n_k}$ with $n_k > 30$, $k=1,\cdots,6$, where $H$ is nondegenerate and its first three columns are $e, e-e_1,e-e_1-e_2$, $e_i$ is the $i$-th column of the identity matrix of order 8, and $e$ is a the column vector of all ones. Each $U_k$ consists of 5 columns of the same identity matrix,
\begin{align*}
	&U_1 = [e_1, e_2, e_3, e_4, e_5], \ U_2 = [e_1, e_2, e_6, e_7, e_8], \ U_3 = [e_3, e_4, e_1, e_2, e_6],\\
	&U_4 = [e_3, e_4, e_5, e_7, e_8], \ U_5 = [e_5, e_6, e_1, e_2, e_4], \ U_6 = [e_5, e_6, e_4, e_7, e_8].
\end{align*}
\end{example}

Obviously, $\{X_k\}$ is an MSS of $X=\cup_k X_k$ with $K=6$ and $d = 30$ since $r(X_k) = 5$ for all $k\leq K$. The inequality conditions in (\ref{cond:d}) are satisfied since $d_0 = 5$, $d_{\rm int} =4$, $d_k=5$, and $n_k > 30=d_k+(K-1)d_0$ for all $k$. However, for $\S_1' = \S_1\cap\S_2$, the splitting $X_1 = Y_1+Z_1$ with $Y_1\subset S'_1$ and $Z_1\subset\S_1\setminus\S_1'$ results in a degenerate $Z_1$ whose first three columns are equal to $e_3+e_4+e_5$. Hence, $\{X_k\}$ is not interacted nondegenerately. 
In addition to MSS $\{X_k\}$, we have another segmentation of $6$ pieces as 
\begin{align*}
	&\tilde X_1 = [X_1(:,1:3), X_2(:,1:3)],\ 
	\tilde X_2 = [X_3(:,1:3), X_4(:,1:3)],\\ 
	& \tilde X_3 = [X_5(:,1:3), X_6(:,1:3)], \ 
	\tilde X_4 = X_1(:,4:n_1),  \ 
	\tilde X_5 = X_2(:,4:n_2),  \\ 
	& \tilde X_6 = [X_3(:,4:n_3),X_4(:,4:n_4),X_5(:,4:n_5), X_6(:,4:n_6)].
\end{align*}
Since $r(\tilde X_k) = 4$ for $k\leq 3$, $r(\tilde X_4) = r(\tilde X_5) = 5$, $r(\tilde X_6) = 8$, we also have $\sum_kr(\tilde X_k) = 30$. Hence, the partition $\tilde J$ corresponding to $\{\tilde X_k'\}$ also belongs to ${\cal J}(K,d,d_{\min})$. However, $\{\tilde X_k\}$ cannot be minimal since both $\{X_k\}$ and $\{\tilde X_k\}$ satisfy $n_k > 7(d_k-1)$ for each $k$, and by Theorem \ref{thm:unique1}, the MSS of $X$ with $K=6$ is unique. 

\begin{example}\label{eg:ndint}
Let $X=[X_1,X_2,X_3]$ with $X_k = U_kH_k$, where $U_1 = [e_1, e_2]$, $U_2 = [e_2, e_3]$, and $U_3 = e_4$ are three orthonormal matrices of four rows, and $H_1$, $H_2$, and $H_3$ are three non-degenerate matrices of 5 columns with  $2$, $2$, and $1$ row(s), respectively.
\end{example}

This segmentation is minimal by definition but does not satisfy $d_{\rm int}<d_{\min}$. We have a different one $\{\tilde X_1, \tilde X_2, \tilde X_3\}$  where $\tilde X_1 = [X_1,X_2]$ and the other two pieces $\tilde X_2$ and $\tilde X_3$ split from $X_3$, each  having at least two samples $\tilde X_2$ and $\tilde X_3$  Since $\sum_k r(X_k) = \sum_k r(\tilde X_k) = 5$, both  partitions belong to $\J(3,5,1)$.
\begin{example}\label{eg:nd0}
Let $X=[X_1,\cdots,X_5]$ of columns in $\R^8$ and $X_k = U_kH_k$ with orthonormal  
\[
	U_1 = [e_1, e_2, e_3],\  
	U_2 = [e_3, e_4, e_5], \
	U_3 = [e_1, e_4, e_6], \
	U_4 = [e_2, e_5, e_6], \
	U_5 = [e_7, e_8],
\]
and let $\{H_k\}$ be intersected nondegenerately with $n(X_k) = n(H_k) >15$.
\end{example}

The segmentation $\{X_k\}$ is also minimal with $d = 15$ since each $X_k$ is nondegenerate as $H_k$. Now, the condition $d_0\leq d_{\min}$ is not satisfied since $d_0 = 3$ and $2=d_{\min}$. If we merge the first 4 segments to be $\tilde X_1$ and split $X_5$ into 4 pieces as $\tilde X_2,\cdots,\tilde X_5$ without overlap, and each $\tilde X_k$ has at least three samples, then $\sum_k r(\tilde X_k)=15$. Hence, ${\J}(5,15,2)$ has at least two different partitions. 

\section{Segmentation Refinement}\label{sec:refine}

When  either of two conditions $d_{\rm int}<d_{\min}$ or $d_0 \leq d_{\min}$ in Theorem \ref{thm:minimal partition} are not satisfied, ${\cal J}(K,d,d_{\min})$ may have multiple $K$-partitions. Hence, there may be a partition $J=\{J_k\}$ in ${\cal J}(K,d,d_{\min})$  that is not minimal. However,  certain segments $J_k$ or $X_{J_k}$ can be further refined to be minimal under some weak conditions. Let us illustrate  this scenario on the examples shown in the last subsection.

In Example \ref{eg:ndint},  we take segment of $\{\tilde X_k\}$ with the smallest rank, say $\tilde X_2$,  and extend it to be the largest  segment containing all the samples belonging to $\spann(\tilde X_2)$. This extension merges $\tilde X_2$ and $\tilde X_3$  as $X_3$; hence, $X_3$ is recovered. Then, $\{X_1,X_2\}$ is an MSS of the remaining samples $X'=X\!\setminus\!X_3$. One may  search for a segmentation from ${\cal J}(K',d',d_{\min}')$ on $X'$ with $K' = K-1 = 2$, $d'=d-r(X_3) = 4$ and $d_{\min}' = 2$. Since the conditions of Theorem \ref{thm:minimal partition} are now satisfied , ${\cal J}(K',d',d_{\min}')$ has the single segmentation $\{X_1,X_2\}$. Hence, minimal segmentation $\{X_k\}$ is  recovered.  Similar, we can refine $\{\tilde X_k\}$ in Example \ref{eg:nd0}.

In Example \ref{eg:ndg_int}, each $\tilde X_k$, $k\leq 3$, has the smallest rank but is nonextendable. However, the extension works on the larger segments $\tilde X_4$ or $\tilde X_5$. That is, if we extend $\tilde X_4$ to the largest one, $X_1$ can be recovered immediately. Similarly, when $\tilde X_5$ is extended, $X_2$ can also be recovered.  Other segments can be determined from ${\cal J}(K',d',d_{\min}')$ on the remaining samples with $K'=K-2$, $d'=d-r(X_1)-r(X_2)$ and $d_{\min}' = 5$.

Motivated by  these observations, we  offer an approach  for refining a segmentation $\{X_{J_k}\}$ for $J\in{\cal J}(K,d,d_{\min})$ if it is not minimal. The approach consists of two strategies: reduction and extension. We emphasize that, in this section  our analysis is given under the same assumption as that given in the last subsection. Hence, we no longer mention the conditions for simplicity.

\subsection{Segment Reduction}

We observe that a partition $J=\{J_k\}\in{\cal J}(K,d,d_{\min})$ has at least one piece $J_k$ such that $X_{J_k}$ is a minimal segment, even if the whole segmentation $\{X_{J_i}\}$ is not an MSS of $X$. 
The following two propositions support this observation.

\begin{proposition}\label{prop:contain}
If $r(X_{J_i}) = d_{\min}<|J_i|$, then $J_i\subseteq J_k^*$ with $d_k=d_{\min}$.
\end{proposition}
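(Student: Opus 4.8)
The plan is to apply Lemma \ref{lma:rank} directly, exploiting the rank dichotomy it provides between a segment concentrated in a single minimal subspace and one that straddles several. Throughout I work with the index decomposition $J_i = \bigcup_k J_{ik}$, where $J_{ik} = \{j \in J_i : x_j \in X_k\}$; by part (i) of assumption (\ref{assumption}) no sample lies in an intersection $\S_k \cap \S_\ell$ with $k \neq \ell$, so this is a disjoint union and every $j \in J_i$ falls into exactly one piece. Since $|J_i| > d_{\min} \ge 1$, the set $J_i$ is nonempty, hence at least one piece $J_{is}$ is nonempty.

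First I would rule out that $J_i$ has two or more nonempty pieces. If it did, Lemma \ref{lma:rank}(b) --- which applies because $J \in {\cal J}(K,d,d_{\min})$ and $\{X_k\}$ is intersected nondegenerately (part (ii) of (\ref{assumption})) --- would give $r(X_{J_i}) > d_{\min}$, contradicting the hypothesis $r(X_{J_i}) = d_{\min}$. Therefore $J_i$ has exactly one nonempty piece $J_{is}$, and so $J_i = J_{is} \subseteq J_s^*$. Next I would pin down $s$: since $\spann(X_s)$ is a minimal sample subspace, $X_s$ is nondegenerate, so its column submatrix $X_{J_i} = X_{J_{is}}$ has rank $\min\{|J_i|, d_s\}$. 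The hypothesis $r(X_{J_i}) = d_{\min} < |J_i|$ then forces $\min\{|J_i|, d_s\} = d_{\min}$ with $|J_i| > d_{\min}$, which is possible only if $d_s = d_{\min}$. Hence $J_i \subseteq J_s^*$ with $d_s = d_{\min}$, proving the proposition with $k = s$.

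The only delicate point is checking that Lemma \ref{lma:rank}(b) is legitimately invoked --- it requires both parts of (\ref{assumption}) together with $J \in {\cal J}(K,d,d_{\min})$, all of which hold here --- and observing that $J_i$ cannot be empty, so ``at most one nonempty piece'' upgrades to ``exactly one''. Everything else is a one-line computation with the nondegeneracy rank identity $r(X') = \min\{n(X'), r(X)\}$, so I do not anticipate a genuine obstacle.
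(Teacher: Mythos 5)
Your proof is correct and follows essentially the same route as the paper: rule out two nonempty pieces via Lemma \ref{lma:rank}(b) using the hypothesis $|J_i|>d_{\min}$, then use nondegeneracy of the minimal segment to get $r(X_{J_i})=\min\{|J_i|,d_k\}$ and conclude $d_k=d_{\min}$. No gaps; the extra care about the standing assumptions and the nonemptiness of $J_i$ matches the paper's (implicit) reasoning.
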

\begin{proof}
By (b) of Lemma \ref{lma:rank}, if $J_i$ has two nonempty intersection parts $J_{ik}=J_i\cap J_k^*$ with two different $k$'s, then the condition $|J_i|> d_{\min}$ implies $r(X_{J_i}) > d_{\min}$.
Therefore, if we also have $r(X_{J_i}) = d_{\min}$, $J_i$ must have a single nonempty $J_{ik}$, that is, $J_i\subseteq J_k^*$, and $X_{J_i}\subseteq X_k$. Since $X_k$ is nondegenerate, 
\[
    d_{\min}=r(X_{J_i}) = \min\{|J_i|, d_k\} \geq \min\{d_{\min}+1, d_k\}.
\]
Combining this with $d_k\geq d_{\min}$, we can conclude that $d_k=d_{\min}$. 
\end{proof}
\begin{proposition}\label{prop:min}
If $d_0 \leq d_{\min}$ and $|J_k|>d_k+(K-1)d_0$ for all k, then $\min_i r(X_{J_i}) = d_{\min}$.
\end{proposition}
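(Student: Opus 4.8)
The plan is to prove the two inequalities $\min_i r(X_{J_i})\ge d_{\min}$ and $\min_i r(X_{J_i})\le d_{\min}$ separately, for an arbitrary $J=\{J_k\}\in\J(K,d,d_{\min})$. The lower bound is immediate from Lemma \ref{lma:rank}: a piece $J_i$ with a single nonempty part $J_{is}$ has $r(X_{J_i})\ge d_{\min}$ by part (a), while a piece with at least two nonempty parts has $r(X_{J_i})>d_{\min}$ by part (b); hence $r(X_{J_i})\ge d_{\min}$ for every $i$.

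For the upper bound I want to exhibit one piece of rank exactly $d_{\min}$, and the idea is to recycle the counting argument in the proof of Theorem \ref{thm:minimal partition}, noting that the only place it uses $d_{\rm int}<d_{\min}$ is its very last line; everything up to and including the estimate displayed just before that line uses only $d_0\le d_{\min}$ (and $d_{\rm int}\le d_0$, which holds unconditionally). So: Lemma \ref{lma:large J_ik} gives that each $\I_k=\{i:|J_{ik}|\ge d_k\}$ is nonempty, and then the chain (\ref{ineq:d})--(\ref{ineq:sumrank}) yields $d\ge\sum_i r(X_{J_i})\ge d+|\I^c|+\sum_i\delta_{|\K_i|>1}(d_{\min}-d_{\rm int})$. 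This forces $\I^c=\emptyset$ (so $\I=\{1,\dots,K\}$), $\sum_i r(X_{J_i})=d$, and, the sandwich being tight, equality in every estimate used. I will pull out three consequences of this tightness: (i) $r(X_{J_i})=d_0+\delta_{|\K_i|>1}(d_0-d_{\rm int})+\sum_{k\in\K_i}(d_k-d_0)$ for every $i$, where $\K_i=\{k:|J_{ik}|\ge d_k\}$; (ii) $|\I_k|=1$ whenever $d_k>d_{\min}$; and (iii) $\sum_i\delta_{|\K_i|>1}(d_{\min}-d_{\rm int})=0$.

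Now suppose, for contradiction, that $r(X_{J_i})>d_{\min}$ for all $i$, and put $B=\{k:d_k>d_{\min}\}$; I claim $\K_i\cap B\ne\emptyset$ for every $i$. If $|\K_i|=1$, say $\K_i=\{k\}$, then (i) collapses to $r(X_{J_i})=d_k$, so $d_k>d_{\min}$ and $k\in B$. If $|\K_i|\ge2$, then $\sum_j\delta_{|\K_j|>1}(d_{\min}-d_{\rm int})=(d_{\min}-d_{\rm int})\sum_j\delta_{|\K_j|>1}=0$ with $\sum_j\delta_{|\K_j|>1}\ge1$ forces $d_{\rm int}=d_{\min}$, hence $d_0=d_{\rm int}=d_{\min}$ (using $d_{\rm int}\le d_0\le d_{\min}$); then (i) reads $r(X_{J_i})=d_{\min}+\sum_{k\in\K_i}(d_k-d_{\min})$, so $\sum_{k\in\K_i}(d_k-d_{\min})\ge1$ and again some $k\in\K_i$ lies in $B$. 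Since $k\in\K_i$ iff $i\in\I_k$, the claim gives $\{1,\dots,K\}=\I\subseteq\bigcup_{k\in B}\I_k$, whence $K\le\sum_{k\in B}|\I_k|=|B|$ by (ii); so $B=\{1,\dots,K\}$, i.e.\ $d_k>d_{\min}$ for all $k$, contradicting $d_{\min}=\min_k d_k$. Therefore some piece has rank exactly $d_{\min}$, and combined with the lower bound, $\min_i r(X_{J_i})=d_{\min}$.

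The main obstacle is organizing this last step, not any individual estimate: unlike the proof of Theorem \ref{thm:minimal partition}, we cannot conclude $|\K_i|=1$ for every $i$, so pieces that substantially intersect two or more minimal segments must be handled, and the key observation that unlocks the argument is that the mere existence of such a piece pins $d_0=d_{\rm int}=d_{\min}$ through consequence (iii), after which the set-covering count over $\{\I_k\}$ closes everything uniformly. A minor but genuinely necessary bookkeeping step is to verify line by line that the borrowed portion of the proof of Theorem \ref{thm:minimal partition} invokes only $d_0\le d_{\min}$.
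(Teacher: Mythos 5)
Your proposal is correct and follows essentially the same route as the paper's proof: both rerun the equality chain from the proof of Theorem \ref{thm:minimal partition} (observing that $d_{\rm int}<d_{\min}$ is only needed at its last step), extract $|\I^c|=0$, tightness of (\ref{ineq:rankJi})--(\ref{ineq:sumrank}), and $|\I_k|=1$ whenever $d_k>d_{\min}$, and finish by counting the sets $\I_k$ over $\{k: d_k>d_{\min}\}$. The only differences are organizational: you argue by contradiction with a covering count and force $d_0=d_{\rm int}=d_{\min}$ only when some $|\K_i|\ge 2$ occurs, whereas the paper splits off the case $d_{\rm int}<d_{\min}$ via Theorem \ref{thm:minimal partition}'s uniqueness and then directly exhibits an uncovered index $i_0$ with $r(X_{J_{i_0}})=d_{\min}$.
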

\begin{proof}
If we further have that $d_{\min}>d_{\rm int}$, this proposition is obviously true since $\{J_k\}$ is unique by Theorem \ref{thm:minimal partition}. Hence, we can assume $d_{\min}\leq d_{\rm int}$, which implies $d_{\rm int} = d_0 = d_{\min}$ since $d_{\rm int} \leq d_0 \leq d_{\min}$. Thus, in the proof of Theorem \ref{thm:minimal partition}, 
we have $|\I^c|=0$ and each of the inequalities between (\ref{ineq:rankJi}) and (\ref{ineq:sumrank}) holds in equality, where we do not use $d_{\rm int} < d_{\min}$.  
The equalities $d_{\rm int} = d_0 = d_{\min}$ simplify (\ref{ineq:sumrank}) to $\sum_k|\I_k|(d_k-d_{\min})= d - Kd_{\min}$. Thus, $|\I_k|=1$ if $d_k>d_{\min}$. Consider the union of those $\I_k$ with $k\in\K' = \{k: d_k > d_{\min}\}$. The size of this union is equal to $|\K'|$ and $|\K'|<K$ since there is a $k\notin \K'$ with $d_k=d_{\min}$.
For each $k\in \K_{i_0}$ with $i_0 \notin \cup_{k \in \K'} \I_k$, we also have $i_0\in \I_k$. Hence, we conclude that for all $k\in \K_{i_0}$, $k\notin\K'$, {\it i.e.}, $d_k = d_{\min}$. Moreover, (\ref{ineq:rankJi}) becomes $r(X_{J_i}) = d_{\min} + \sum_{k \in \K_i}(d_k-d_{\min})$ for all $i$. Hence, $r(X_{J_{i_0}}) = d_{\min} \leq r(X_{J_i})$ for all $i$.
\end{proof}

Therefore, if the conditions of Proposition \ref{prop:min} are satisfied, by Proposition \ref{prop:contain}, $J_i \subseteq J_{k_i}^*$ for those $J_i$ and $J_{k_i}^*$ with the smallest rank $r(X_{J_i})=d_{\min}=d_{k_i}$. That is, these minimal segments $X_{k_i}$ have been retrieved. Let $K_0$ be the number of retrieved segments. The conditions of Proposition \ref{prop:min} remain satisfied for $J'\in{\cal J}(K',d',d_{\min}')$, where $K' = K-K_0$, $d'=d-d_{\min}K_0
$ and $d_{\min}'\geq d_{\min}$. Thus, repeating this reduction procedure, we can retrieve all the minimal segments. Therefore, we have proven the following theorem.

\begin{theorem}
If $d_0 \leq d_{\min}$ and $n_k>d_k+(K-1)d_0$ for all k, then the MSS $\{X_k\}$ of $X$ can be recovered via a reduction procedure on any $J\in{\cal J}(K,d,d_{\min})$.
\end{theorem}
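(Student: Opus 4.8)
The plan is to argue by strong induction on the number $K$ of minimal segments. I would show that a single ``reduction round'' applied to the given partition $J$ correctly recovers at least one minimal segment, and that the sample set left after removing the recovered segments inherits all the hypotheses (including the standing assumptions, no pure segment and (\ref{assumption})) with a strictly smaller value of $K$, so that the induction hypothesis finishes the job. The base case $K\le 1$ is immediate: the single piece of any $J\in{\cal J}(1,d_1,d_1)$ has rank $\min\{|J_1|,d_1\}=d_1$ by nondegeneracy, so it spans $\S_1$ and its extension is all of $X=X_1$.

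\textbf{One reduction round.} By Proposition \ref{prop:min}, applicable under our hypotheses, $\min_i r(X_{J_i})=d_{\min}$; fix any piece $J_i$ attaining this value. Membership $J\in{\cal J}(K,d,d_{\min})$ forces $|J_i|>d_{\min}$, so Proposition \ref{prop:contain} gives $J_i\subseteq J_k^*$ for some $k$ with $d_k=d_{\min}$, whence $\spann(X_{J_i})=\S_k$ by equality of dimensions ($r(X_{J_i})=d_{\min}=d_k$). Here I would invoke condition (i) of (\ref{assumption}): each sample $x_j\in\S_k$ belongs to exactly one minimal segment $X_\ell$, hence $x_j\in\S_\ell$, and $\ell\neq k$ would put $x_j\in\S_k\cap\S_\ell$, contradicting (i); therefore the set of all samples lying in $\spann(X_{J_i})$ is precisely $X_k$, i.e. the ``extension'' step exactly recovers the minimal segment $X_k$ from $J_i$. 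Performing this for every minimal-rank piece produces a nonempty set $\K_0\subseteq\{k:d_k=d_{\min}\}$ of correctly recovered minimal segments, with $K_0:=|\K_0|\ge 1$.

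\textbf{The residual problem.} Put $X'=\bigcup_{k\notin\K_0}X_k$. The partition $\{X_k:k\notin\K_0\}$ is a non-pure-segment MSS of $X'$ satisfying (\ref{assumption}) (all three properties restrict from $X$), so it is the unique MSS of $X'$ by Theorem \ref{thm:unique2}, with parameters $K'=K-K_0<K$, $d'=\sum_{k\notin\K_0}d_k=d-K_0d_{\min}$ (each recovered $d_k$ equals $d_{\min}$), and $d_{\min}'=\min_{k\notin\K_0}d_k\ge d_{\min}$. Deleting whole minimal subspaces only shrinks the intersection quantities: $d_0'=\max_{k\notin\K_0}\dim\bigl(\S_k\cap\sum_{j\notin\K_0,\,j\neq k}\S_j\bigr)\le d_0$, so $d_0'\le d_0\le d_{\min}\le d_{\min}'$. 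The surviving segments keep all their samples, so $n_k>d_k+(K-1)d_0\ge d_k+(K'-1)d_0'$ for $k\notin\K_0$; hence $X'$ satisfies the theorem's hypotheses with $(K',d',d_{\min}')$, and ${\cal J}(K',d',d_{\min}')$ (on $X'$) is nonempty since it contains $\{J_k^*:k\notin\K_0\}$ (rank sum $d'$, and $|J_k^*|=n_k>d_k\ge d_{\min}'$). By the induction hypothesis, a reduction procedure applied to any partition in ${\cal J}(K',d',d_{\min}')$ recovers $\{X_k:k\notin\K_0\}$; together with $\{X_k:k\in\K_0\}$ this is the MSS of $X$.

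\textbf{Main obstacle.} The one place that requires care is the hand-off to the residual problem. It is tempting to simply restrict $J$ to $X'$ and recurse, but (as Example \ref{eg:ndint} shows) the restricted partition can have strictly fewer than $K'$ nonempty pieces and so need not lie in ${\cal J}(K',d',d_{\min}')$; the correct move is to re-enter the procedure on $X'$ with a freshly chosen member of the provably nonempty set ${\cal J}(K',d',d_{\min}')$, which is why the statement says ``a reduction procedure'' rather than ``the reduction on the restricted partition.'' Everything else is routine bookkeeping, namely the inequalities $d_0'\le d_0$ and $n_k>d_k+(K'-1)d_0'$, both of which hold because removing entire minimal segments cannot increase any of the quantities $K$, $d_0$, or decrease any $n_k$ of a surviving segment.
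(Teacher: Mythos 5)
Your proposal is correct and follows essentially the same route as the paper: identify the minimum-rank pieces via Propositions \ref{prop:min} and \ref{prop:contain}, retrieve the corresponding minimal segments with $d_k=d_{\min}$, and recurse on the residual problem with parameters $(K-K_0,\,d-K_0d_{\min},\,d_{\min}')$. The only difference is that you make explicit the bookkeeping the paper leaves implicit (the inequalities $d_0'\le d_0$ and $n_k>d_k+(K'-1)d_0'$, the use of assumption (i) to pass from $J_i\subseteq J_k^*$ to the full segment $X_k$, and the need to restart on a fresh partition of the residual set), which strengthens rather than changes the argument.
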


\subsection{Segment Extension} 

We say that $J_i$ or $X_{J_i}$ is extendable if there is at least one $x_j\in\spann(X_{J_i})$ with $j\notin J_{i}$. The extension strategy enlarges an extendable $J_i$ as much as possible by adding all these $j$'s into $J_{i}$ similar to that given in the proof of Theorem \ref{thm:min seg} without checking for segment nondegeneracy. 
\begin{proposition}\label{prop:notextendable}
If each $J_i$ of $J\in{\cal J}(K,d,d_{\min})$ is nonextendable, then $\{J_i\}= \{J_k^*\}$.
\end{proposition}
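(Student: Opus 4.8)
The plan is to show that a partition $J=\{J_i\}\in{\cal J}(K,d,d_{\min})$ all of whose pieces are nonextendable must actually be the minimal partition $J^*$. The key structural fact available to us is Lemma \ref{lma:rank}, which estimates $r(X_{J_i})$ from below in terms of the overlaps $|J_{ik}|$ between $J_i$ and the true minimal index sets $J_k^*$. Since $J\in{\cal J}(K,d,d_{\min})$ forces $\sum_i r(X_{J_i})\le d=\sum_k d_k$, any slack in those lower bounds is forbidden; so the combinatorial content is that nonextendability rules out exactly the configurations that would be consistent with $\sum_i r(X_{J_i})\le d$ yet different from $J^*$.

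First I would record what nonextendability says in terms of the $J_{ik}$. If $J_{ik}=J_i\cap J_k^*$ is nonempty but $|J_{ik}|<d_k$, then generically $X_{J_{ik}}$ is still of full rank $|J_{ik}|$, so $\spann(X_{J_{ik}})\subsetneq\S_k$ and there is room to add further columns of $X_k$ lying in $\spann(X_{J_i})$ — unless $\spann(X_{J_i})$ happens to meet $\S_k$ in exactly $\spann(X_{J_{ik}})$. More usefully, if $|J_{ik}|\ge d_k$ then $\S_k\subseteq\spann(X_{J_i})$, and then \emph{every} column of $X_k$ lies in $\spann(X_{J_i})$; nonextendability then forces $J_k^*\subseteq J_i$, i.e. $J_{ik}=J_k^*$. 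So the first real step is: for each $k$, if some $|J_{ik}|\ge d_k$ then that piece swallows all of $J_k^*$. By Lemma \ref{lma:large J_ik} (whose hypotheses follow from assumption (\ref{assumption}) and the standing conditions of this section, which we are told not to restate), for every $k$ there \emph{is} an $i$ with $|J_{ik}|\ge d_k$. Hence every $J_k^*$ is entirely contained in a single piece $J_{i(k)}$.

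Next I would run the counting argument. Because each $J_k^*$ sits inside one piece, the map $k\mapsto i(k)$ partitions $\{1,\dots,K\}$ into fibers, and each piece $J_i$ is the union of the $J_k^*$'s in its fiber together with possibly some leftover indices — but there can be no leftover indices because $J$ is a partition of the same index set as $\{J_k^*\}$ and $\bigcup_k J_k^* $ is everything, so $J_i=\bigcup_{k:i(k)=i}J_k^*$ exactly. If some fiber had two or more indices $s\ne t$, then that piece $J_i$ contains both $J_s^*$ and $J_t^*$, so $r(X_{J_i})\ge \dim(\S_s+\S_t)=d_s+d_t-\dim(\S_s\cap\S_t)\ge d_s+d_t-d_{\rm int}>d_s+d_t-d_{\min}\ge d_s+(d_t-d_{\min})\ge d_s$ and in fact, more carefully using (\ref{rank:X_Ji}) with these two pieces, $r(X_{J_i})\ge d_s+(d_t-d_{\rm int})>d_s$, while pieces that contain a single $J_k^*$ contribute exactly $d_k$; summing, $\sum_i r(X_{J_i})>\sum_k d_k=d$, contradicting $J\in{\cal J}(K,d,d_{\min})$. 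Therefore every fiber is a singleton, the map $k\mapsto i(k)$ is a bijection, and $\{J_i\}=\{J_k^*\}$, which is exactly the claim.

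The main obstacle I anticipate is the step where nonextendability is converted into "$|J_{ik}|\ge d_k \Rightarrow J_{ik}=J_k^*$" and, dually, ruling out the borderline case $|J_{ik}|<d_k$ for \emph{every} $i$ and some fixed $k$: one must be sure that Lemma \ref{lma:large J_ik} genuinely closes this gap so that every true minimal subspace is captured whole by exactly one piece, rather than being split across several pieces each seeing it only partially. Once that is pinned down, the rank-sum accounting via Lemma \ref{lma:rank} is routine — the inequality $d_{\rm int}<d_{\min}$ is precisely what makes any two-subspace piece strictly too expensive, forcing the bijection.
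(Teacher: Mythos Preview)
Your first two steps match the paper's proof exactly: invoke Lemma~\ref{lma:large J_ik} to get, for each $k$, some $i(k)$ with $|J_{i(k),k}|\ge d_k$, hence $\S_k\subseteq\spann(X_{J_{i(k)}})$; then nonextendability forces $J_k^*\subseteq J_{i(k)}$, and since $\{J_i\}$ and $\{J_k^*\}$ partition the same index set, each $J_i$ is exactly the union of the $J_k^*$'s in its fiber.

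The gap is in your final step. Your rank-counting argument does not give the contradiction you claim. If one piece $J_i$ absorbs $J_s^*\cup J_t^*$ while the remaining $K-2$ minimal segments each sit in their own piece, then by pigeonhole one piece $J_{i'}$ has an \emph{empty} fiber, hence $J_{i'}=\emptyset$ and $r(X_{J_{i'}})=0$. Summing ranks then gives at most
\[
(d_s+d_t-d_{\rm int})+\sum_{k\ne s,t}d_k+0=d-d_{\rm int}\le d,
\]
so the constraint $\sum_i r(X_{J_i})\le d$ is \emph{not} violated. The inequality $d_{\rm int}<d_{\min}$ does not rescue this; it only tells you the merged piece has rank $>d_s$, which is not enough once another piece drops to rank $0$.

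The fix is simpler than what you attempted, and it is what the paper does: the empty piece $J_{i'}$ is itself the contradiction, because $J\in{\cal J}(K,d,d_{\min})$ requires $|J_{i'}|>d_{\min}\ge 1$. Equivalently, since every $J_i$ is nonempty and each $J_i$ is a union of fibers of $k\mapsto i(k)$, the map must be onto, hence (both sides having cardinality $K$) a bijection. No rank accounting is needed here, and in particular the hypothesis $d_{\rm int}<d_{\min}$ plays no role in this proposition.
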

\begin{proof}
By Lemma \ref{lma:large J_ik}, for each $k$, there exists an $i_k$ such that $|J_{i_k,k}| \geq d_k$ and hence, $\spann(X_{J_{i_k,k}}) = \S_k $. Since $J_{i_k}$ cannot be extended, we must have $J_{i_k,k}= J_k^*$ and $J_{ik}$ is empty for $i\neq i_k$. Because each $J_i$ is not empty, the mapping from $k$ to $i_k$ is one-to-one and onto. Therefore, $\{J_i\} = \{J_k^*\}$. 
\end{proof}
\begin{proposition}\label{prop:extendable}
If $J_i$ is extendable, the extended $\tilde J_i$ has $\tilde J_{ik} \!=\! J_{ik}$ or $\tilde J_{ik} = J_k^*$ for all $k$.
\end{proposition}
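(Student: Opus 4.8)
The plan is to translate the set-theoretic claim about $\tilde J_{ik}$ into a rank comparison between two column submatrices of one nondegenerate matrix, and then to read off the dichotomy from nondegeneracy. Since $\tilde J_i=\{j:\ x_j\in\spann(X_{J_i})\}$ and, by assumption (i) in (\ref{assumption}), every $x_j$ lies in exactly one of the subspaces $\S_1,\dots,\S_K$, I first rewrite $\tilde J_{ik}=\{j\in J_k^*:\ x_j\in W_k\}$, where $W_k:=\S_k\cap\spann(X_{J_i})$. So the claim is that $W_k$ "sees" among the columns of $X_k$ either only those indexed by $J_{ik}$, or all of them. Writing $J_i=J_{ik}\cup J_i'$ with $J_i'$ the indices of $J_i$ lying outside $J_k^*$, we have $\spann(X_{J_i})=\spann(X_{J_{ik}})+\spann(X_{J_i'})$; since $\spann(X_{J_{ik}})\subseteq\S_k$, the modular law gives $W_k=\spann(X_{J_{ik}})+\S_k'$ with $\S_k':=\S_k\cap\spann(X_{J_i'})$.

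Next I would invoke the nondegenerate intersection of $\{X_1,\dots,X_K\}$ (Definition \ref{def:non-degenerate:int}) with the subset $X_k'=X_{J_i'}$ of $X\setminus X_k$: this yields the orthogonal decomposition $\S_k=\S_k'\oplus\S_k''$ and the columnwise splitting $X_k=Y_k+Z_k$ with $Y_k\subset\S_k'$, $Z_k\subset\S_k''$, where $Z_k$ is either zero or nondegenerate and $\spann(Z_k)=\S_k''$. If $Z_k=0$, then $\S_k=\S_k'\subseteq\spann(X_{J_i})$, so $W_k=\S_k$ and $\tilde J_{ik}=J_k^*$, which is one of the two alternatives.

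For the case that $Z_k$ is nondegenerate, the key step is to project $W_k=\spann(X_{J_{ik}})+\S_k'$ onto $\S_k''$ along $\S_k'$. Writing $x_j=y_j+z_j$ with $y_j\in\S_k'$ and $z_j\in\S_k''$ for $j\in J_{ik}$, each such $x_j$ and $y_j$ lie in $W_k$, hence so does $z_j$, which together with $\spann(X_{J_{ik}})\subseteq\S_k'+\spann(Z_{J_{ik}})$ gives $W_k=\S_k'\oplus\spann(Z_{J_{ik}})$ and therefore $W_k\cap\S_k''=\spann(Z_{J_{ik}})$. Applying the same splitting to an arbitrary $j\in J_k^*$ shows $x_j\in W_k$ iff $z_j\in\spann(Z_{J_{ik}})$, so $\spann(Z_{\tilde J_{ik}})\subseteq\spann(Z_{J_{ik}})$ and in particular $r(Z_{\tilde J_{ik}})\le r(Z_{J_{ik}})$. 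Nondegeneracy of $Z_k$ now gives $r(Z_{\tilde J_{ik}})=\min\{|\tilde J_{ik}|,r(Z_k)\}$ and $r(Z_{J_{ik}})=\min\{|J_{ik}|,r(Z_k)\}$; combined with $J_{ik}\subseteq\tilde J_{ik}$ this leaves exactly two possibilities — either $|J_{ik}|<r(Z_k)$, which forces $|\tilde J_{ik}|=|J_{ik}|$ and hence $\tilde J_{ik}=J_{ik}$, or $|J_{ik}|\ge r(Z_k)$, which forces $\spann(Z_{J_{ik}})=\S_k''$, so $W_k=\S_k'\oplus\S_k''=\S_k$ and $\tilde J_{ik}=J_k^*$.

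The main obstacle is the passage from "$x_j\in W_k$" to a statement purely about the nondegenerate matrix $Z_k$; this is precisely what the modular-law identity $W_k=\spann(X_{J_{ik}})+\S_k'$ and the projection identity $W_k\cap\S_k''=\spann(Z_{J_{ik}})$ accomplish, after which the conclusion is a one-line rank argument. The only remaining care is bookkeeping for degenerate corners — an empty $J_{ik}$, an empty $J_i'$, or an index $j$ belonging to more than one subspace; the last is ruled out by assumption (i), while the first two are handled verbatim by the same identities once $\spann$ of an empty family is read as $\{0\}$ (so that $\S_k'$ or $\spann(Z_{J_{ik}})$ degenerates to the trivial subspace).
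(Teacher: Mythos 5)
Your proof is correct and follows essentially the same route as the paper's: both decompose $\S_k=\S_k'\oplus\S_k''$ with $\S_k'=\S_k\cap\spann(X_{J_{ik}^c})$, split $X_k=Y_k+Z_k$ accordingly, and apply the nondegeneracy of $Z_k$ to the column submatrices indexed by $J_{ik}$ and $\tilde J_{ik}$. The only difference is bookkeeping: you characterize $W_k=\S_k\cap\spann(X_{J_i})$ directly via the modular law and case-split on $|J_{ik}|$ versus $r(Z_k)$, whereas the paper reaches the same dichotomy by comparing $r(Z_k')$ and $r(\tilde Z_k')$ through the rank equality $r(X_{\tilde J_i})=r(X_{J_i})$ and the identity $r([X_k,X_{J_{ik}^c}])=r(X_{J_i})$.
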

\begin{proof}
Assume $\tilde J_i = J_i\cup J_i'$, where $X_{J_i'}\subset \spann(X_{J_i})$. Then, $\tilde J_{ik}=J_{ik}\cup J_{ik}'$, where $J_{ik}'=J_i'\cap J_k^*$. 
If the minimal segment $X_k\subseteq\spann(X_{J_{ik}^c})\subset \spann(X_{J_i})$, where $J_{ik}^c=J_i\setminus J_{ik}$, then we must have $\tilde J_{ik} = J_k^*$.

We will show that if $X_k\nsubseteq\spann(X_{J_{ik}^c})$ and $\tilde J_{ik} \! \neq \! J_{ik}$, then $r([X_k, X_{J_{ik}^c}]) = r(X_{J_i})$, which implies $X_k\subset\spann(X_{J_i})$, and hence, $\tilde J_{ik} = J_k^*$. To this end, we split $X_k = Y_k + Z_k$ with $Y_k\subset\S_k'$ and $Z_k\subset\S_k''$, according to the direct sum $\S_k = \S_k'\oplus\S_k''$ where $\S_k' = \S_k\cap \spann(X_{J_{ik}^c})$ and $\S_k''$ is the orthogonal complement of $\S_k''$ restricted in $\S_k$. Obviously, we have $Z_k\neq 0$ when $\tilde J_{ik} \! \neq \! J_{ik}$, and hence, $Z_k$ is nondegenerate since the MSS $\{X_k\}$ is intersected non-degenerately. 
Therefore, $r([X_k, X_{J_{ik}^c}]) =r([Y_k, X_{J_{ik}^c}])+r(Z_k) = r([Z_k, X_{J_{ik}^c}])$. 
Similarly, 
\[
	r(X_{J_i}) =r([X_{J_{ik}}, X_{J_{ik}^c}])= r([Z_k',X_{J_{ik}^c}]),\quad
	r(X_{\tilde J_i}) =r([X_{\tilde J_{ik}}, X_{J_{ik}^c}])= r([\tilde Z_k',X_{J_{ik}^c}]),
\]
where $Z_k'$ and $\tilde Z_k'$ are two subsets of $Z_k$ corresponding to the subsets $X_{J_{ik}}$ and $X_{\tilde J_{ik}}$ of $X_k$, respectively. We have $r(\tilde Z_k')=r(Z_k')$ since $r(X_{\tilde J_i})=r(X_{J_i})$. By the nondegeneracy of $Z_k$, $r(Z_k')=r(Z_k)$. Therefore, $r([X_k, X_{J_{ik}^c}]) = r(X_{J_i})$, and the proof is completed. 
\end{proof}

\begin{theorem}\label{thm:divide-conquer}
Assume that $n_k > d_k+(K-1)d_0 $ for all $k$. After segment extension on all the extendable segments of $J$ one-by-one, then each nonempty segment $\tilde J_i$ of the resulting segmentation $\tilde J$ must be a union of several segments of the MSS $\{X_k\}$. Furthermore, $\tilde J=J^*$ if all the $\tilde J_i$'s are nonempty.
\end{theorem}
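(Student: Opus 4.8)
\textbf{Proof plan for Theorem \ref{thm:divide-conquer}.} The plan is to proceed in two stages corresponding to the two assertions. For the first assertion, I would use Proposition \ref{prop:extendable} as the workhorse: after performing segment extension on the extendable pieces one by one, I claim that for every nonempty piece $\tilde J_i$ of the final partition and every $k$, either $\tilde J_{ik}$ is empty or $\tilde J_{ik}=J_k^*$, which says precisely that $\tilde J_i=\bigcup_{k\in S_i}J_k^*$ for some index set $S_i$. The subtlety is that extension of one segment can change which segments are extendable and can alter the intersection pattern $\tilde J_{ik}$ of \emph{other} segments, so I cannot simply apply Proposition \ref{prop:extendable} once. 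The clean way to handle this is to check that the property ``$J\in{\cal J}(K,d,d_{\min})$'' and the bound $r(X_{J_i})=\sum_k r(X_{J_{ik}})$-type estimates are stable under a single extension step, so that after each extension the resulting partition still lies in ${\cal J}(K,d,d_{\min})$ (the rank sum cannot increase under extension, since $\spann(X_{\tilde J_i})=\spann(X_{J_i})$, and $|\tilde J_i|\ge|J_i|>d_{\min}$), and then argue that the process terminates after finitely many steps with a partition all of whose pieces are nonextendable.

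Once termination is established, I would run Proposition \ref{prop:extendable} at every stage: each single extension step sends $J_{ik}\mapsto\tilde J_{ik}\in\{J_{ik},J_k^*\}$, and one checks by induction on the number of extension steps already performed that every piece is, at every stage, a union of subsets of the $J_k^*$ with the additional property that each nonempty intersection is either already equal to $J_k^*$ or still a proper subset — and proper subsets can only be enlarged to the full $J_k^*$ by Proposition \ref{prop:extendable}, never split. Since at the end no piece is extendable, I would then invoke the argument in the proof of Proposition \ref{prop:notextendable}: by Lemma \ref{lma:large J_ik} (whose hypothesis $n_k>d_k+(K-1)d_0$ is exactly what we assumed, together with $x_j\notin\S_k\cap\S_\ell$ from the standing assumption (\ref{assumption})), for each $k$ there is some piece $\tilde J_{i}$ with $|\tilde J_{i,k}|\ge d_k$, hence $\spann(X_{\tilde J_{i,k}})=\S_k$, hence $X_k\subseteq\spann(X_{\tilde J_i})$; nonextendability forces $\tilde J_{i,k}=J_k^*$ and moreover forces this $k$ to contribute to no other piece. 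This shows each $J_k^*$ sits entirely inside exactly one $\tilde J_i$, i.e.\ $\tilde J_i=\bigcup_{k\in S_i}J_k^*$ with the $S_i$ partitioning $\{1,\dots,K\}$, which is the first assertion.

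For the second assertion, suppose all the $\tilde J_i$'s are nonempty. Then the nonempty pieces number exactly $K$ (the partition has $K$ slots), and we have just shown each is a nonempty union $\bigcup_{k\in S_i}J_k^*$ with $\{S_i\}$ a partition of $\{1,\dots,K\}$; a partition of a $K$-element set into exactly $K$ nonempty blocks must be the partition into singletons, so each $S_i$ is a singleton and $\{\tilde J_i\}=\{J_k^*\}=J^*$. Alternatively, and perhaps more in the spirit of the paper, once the first assertion gives $\tilde J\in{\cal J}(K,d,d_{\min})$ with every piece a union of $J_k^*$'s and all pieces nonempty, one can note $\sum_i r(X_{\tilde J_i})=\sum_i\dim(\sum_{k\in S_i}\S_k)\le\sum_i\sum_{k\in S_i}d_k=d$ with equality only if each $S_i$ is a singleton, and equality must hold because $\tilde J\in{\cal J}(K,d,d_{\min})$ forces $\sum_i r(X_{\tilde J_i})\le d$ while $r(X_{\tilde J_i})\ge\max_{k\in S_i}d_k\ge$ the corresponding $d_k$'s sum only when blocks are singletons — so $|S_i|=1$ for all $i$ and $\tilde J=J^*$.

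\textbf{Main obstacle.} I expect the genuinely delicate point to be \emph{the stability of the ``union of $J_k^*$'' structure across a sequence of extensions}, i.e.\ proving that once a piece has the form $\bigcup J_k^*$ it keeps that form and extending \emph{another} piece cannot spoil it. Proposition \ref{prop:extendable} is stated for a single extendable $J_i$ relative to a fixed ambient partition, so the careful bookkeeping is to verify that its hypotheses (principally nondegenerate intersection of the MSS, which is intrinsic to $\{X_k\}$ and unaffected by what we do to $J$) and its conclusion transfer through the induction, and that the relevant quantities $J_{ik}^c=J_i\setminus J_{ik}$ behave predictably. Establishing finite termination is routine (pieces only grow, the ground set is finite), so the real work is this inductive invariant together with the final appeal to Lemma \ref{lma:large J_ik} and Proposition \ref{prop:notextendable}.
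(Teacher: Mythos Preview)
Your overall plan is sound and closely parallels the paper's approach, but there is a concrete error in your stability claim. You assert that after each extension step the partition remains in ${\cal J}(K,d,d_{\min})$, citing $|\tilde J_i|\ge|J_i|>d_{\min}$. This inequality holds only for the piece $J_i$ being extended; the \emph{other} pieces $J_j$, $j\ne i$, lose indices to $\tilde J_i$ and may shrink below $d_{\min}$ or become empty. So the intermediate and final partitions need not lie in ${\cal J}(K,d,d_{\min})$, and you cannot invoke Lemma~\ref{lma:large J_ik} on $\tilde J$ directly as stated.

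The paper sidesteps this by applying Lemma~\ref{lma:large J_ik} \emph{once}, to the original $J\in{\cal J}(K,d,d_{\min})$, to obtain for each $k$ an index $i_k$ with $|J_{i_k,k}|\ge d_k$, and then tracking the state of every slice $J_{ik}$ through the sequence of extensions. The key observation (which you should make explicit) is that after a single extension of $J_i$: for that $i$, Proposition~\ref{prop:extendable} gives $\tilde J_{ik}\in\{J_{ik},J_k^*\}$; and for every $j\ne i$, the new $J_{jk}$ is either unchanged (when $\tilde J_{ik}=J_{ik}$, so nothing from $J_k^*$ moved) or emptied (when $\tilde J_{ik}=J_k^*$, since all of $J_k^*$ now sits in piece $i$). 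Hence each $J_{ik}$ lives in the three-element set $\{J_{ik}^{\rm orig},\,J_k^*,\,\emptyset\}$ at every stage. For each $k$, either $J_{i_k,k}$ survives unchanged until $J_{i_k}$ is processed---in which case $|J_{i_k,k}|\ge d_k$ forces $\S_k\subseteq\spann(X_{J_{i_k}})$ and extension absorbs $J_k^*$---or it was emptied earlier, meaning some other piece already absorbed $J_k^*$. Either way each $J_k^*$ ends up entirely inside exactly one $\tilde J_i$, which gives the first assertion directly, without any appeal to Proposition~\ref{prop:notextendable}.

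Your ``main obstacle''---whether Proposition~\ref{prop:extendable} continues to apply to the intermediate partitions---is in fact a non-issue: its proof uses only the nondegenerate intersection of the MSS $\{X_k\}$ and nothing about the ambient partition, so it applies to any subset $J_i\subseteq\{1,\dots,n\}$ at any stage. Your counting argument for the second assertion is correct and is exactly what the paper leaves implicit.
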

\begin{proof}
Consider the changes of $J_{ik}$ during the extension process. In the step involving an extendable $J_i$, $J_{ik}$ is unchanged or changed to $J_k^*$ by Proposition \ref{prop:extendable}. After this extension step, for other $j\neq i$, $J_{jk}$ is unchanged or changed to the empty set. That is, after an extension step, $J_{ik}$ is unchanged or becomes to $J_k^*$ or the empty set. By Lemma \ref{lma:large J_ik}, for each $k$, there is an $i_k$ such that $|J_{i_k,k}| \geq d_k$ for the original $J_{i_k}$. Hence, $J_{i_k,k}$ becomes $J_k^*$ in the extension of $J_{i_k}$ if it is unchanged in the earlier extension steps. Otherwise, $J_{i_k,k}$ has already been changed to the empty set. Therefore, after all the extension steps, for each $k$, the eventually modified $\tilde J_{ik}$ must be empty or $J_k^*$. That is, each $\tilde J_i$ must be a union of some $J_k^*$ or the empty set.
\end{proof}

Note that the extension results may depend on the extending order of each $J_i$. We suggest extending in the ascending order of $\{r(X_{J_i})\}$ because if $X_{J_i}$ has a smaller rank, most of its pieces $\{J_{ik}\}$ are likely to be empty or small. These small pieces will be removed, leaving the largest pieces after extension. This strategy reduces the risk of merging multiple minimal segments ($X_k$'s) together into a single segment of $\tilde J$.

The extension procedure cannot increase the rank sum of the segments, and the rank sum is decreased if a $\tilde J_i$ is empty. In this case, $\tilde J$ has a smaller number of nonempty segments. Hence, $\tilde J\notin {\cal J}(K,d,d_{\min})$. Assume that $\tilde J$ has $\ell$ nonempty segments, say $\tilde J_1,\cdots,\tilde J_\ell$, and let $\K_t$ be the index set of those $X_k$ that are merged to $X_{\tilde J_t}$ because of the extension. MSS $\{X_k\}$ is then partitioned into $\ell$ smaller groups $X^{(t)}$, $t=1,\cdots,\ell$, each of which is a union of the minimal segments $\{X_k: k\in {\cal K}_t\}$. Therefore, one may further determine the MSS of $X^{(t)}$ via determining ${\cal J}(K^{(t)},d^{(t)}, d_{\min}^{(t)})$ on $X^t$, where $K^{(t)} = |\K_t|$, $d^{(t)} = \sum_{k\in\K_t} r(X_k)$, and $d_{\min}^{(t)} = \min_{k \in \K_t} r(X_k)$. This amounts to a divide-and-conquer approach. We do not touch upon this technique further in this paper.

\section{Computable Modeling for Minimal Subspace Segmentation}\label{sec:modeling}

Our study of MSS detection simplifies its inspection. However, because detecting the MSS works on $K$-partitions of indices, it is difficult to implement efficiently. Thus, computable modeling is needed. To this end, we adopt the commonly used self-expressiveness approach. 

The self-expressiveness method looks for a matrix $C$ with special structures to represent the sample matrix $X$ as $X = XC$, hoping that subspace clustering is well-determined via spectral clustering on the graph matrix $|C|+|C|^T$. The effectiveness of the self-expressiveness method is conditioned by two issues: (1) the correctness of the learned partition $J = \{J_k\}$ under which $C$ has a block-diagonal form, and (2) the connection of each diagonal block $C_k = C(J_k^*,J_k^*)$ of $C$ in the minimal partitions $\{J_k^*\}$. 
As mentioned before, the connection of matrix $C_k$ refers to the connection of the undirected graph constructed from $|C_k|+|C_k|^T$. Our previous analysis addresses the first issue for theoretically detecting the MSS. 

In this section, we address the issue of connection to support a computable optimization problem that we will propose for determining the MSS. Closed-form representation matrices are first given. Based on these closed-form representation matrices, we then exploit the conditions of connected diagonal blocks of
a representation matrix in block-diagonal form. In addition, we discuss solutions of SSC and LRR.

\subsection{Structures of Representation Matrices}

Obviously, the representation matrix $C$ of $X$ is not unique since adding a matrix of null vectors of $X$ to $C$ results in another representation matrix of $X$. Notice that because $C$ solves the linear system $X = XC$, it should have a closed-form structure. We use the singular value decomposition (SVD) of $X$ in thin form:
\begin{align}\label{SVD of X}
	X = U\Sigma V^T,
\end{align}
to represent the closed-form representation matrices, where $U$ and $V$ are the orthonormal matrices of the left and right singular vectors of $X$ corresponding to its nonzero singular values $\sigma_1\geq \cdots\geq \sigma_r>0$, where $r = r(X)$, which are given in the diagonals of the diagonal matrix $\Sigma$. If $r<n$, $V$ has an orthogonal complement $V_\bot$ for forming an orthogonal matrix $[V,V_\bot]$. We use the SVD together with orthogonal complement $V_\bot$ to characterize the representation matrix. 

\begin{lemma}\label{lma:C}
$C$ is a representation matrix of $X$ if and only if it has the following form 
\begin{align}\label{def:C}
	C = VV^T+V_{\bot}H,
\end{align}
with a matrix $H\in{\cal R}^{(n-r)\times n}$. Thus, $r(C)\geq r(X)$ and $\|C\|_*\geq r(X)$. Furthermore,
\begin{itemize}
	\item[(a)] If $C$ is symmetric, $H = SV_{\bot}^T$, that is, $C = VV^T+V_{\bot}SV_{\bot}^T$ with a symmetric $S$.
	\item[(b)] If $\tr(C)=0$, then $r(C)\geq r(X)+1$.
\end{itemize}
\end{lemma}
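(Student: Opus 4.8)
The plan is to characterize all solutions $C$ of the linear system $X = XC$ directly from the thin SVD $X = U\Sigma V^T$, and then read off the rank and nuclear-norm bounds together with the two supplementary claims. First I would substitute $X = U\Sigma V^T$ into $X = XC$: since $U$ has orthonormal columns and $\Sigma$ is invertible (its diagonal entries are the nonzero singular values), the equation $U\Sigma V^T = U\Sigma V^T C$ is equivalent to $V^T = V^T C$, i.e.\ $V^T(C - I) = 0$. This says exactly that every column of $C - I$ lies in the null space of $V^T$, which is the column space of $V_\bot$. Hence $C - I = V_\bot H$ for some $H \in {\cal R}^{(n-r)\times n}$, giving $C = I + V_\bot H$. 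To match the stated form $C = VV^T + V_\bot H$, I use the orthogonal decomposition $I = [V,V_\bot][V,V_\bot]^T = VV^T + V_\bot V_\bot^T$, so $C = VV^T + V_\bot(V_\bot^T + H)$; absorbing $V_\bot^T$ into the free parameter (replacing $H$ by $H - V_\bot^T$, still an arbitrary $(n-r)\times n$ matrix) yields $C = VV^T + V_\bot H$. Conversely, any such $C$ satisfies $X C = U\Sigma V^T(VV^T + V_\bot H) = U\Sigma V^T = X$ because $V^T V = I$ and $V^T V_\bot = 0$.

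Next I would derive the rank and nuclear-norm inequalities. Writing $C = [V, V_\bot]\begin{bmatrix} V^T \\ H\end{bmatrix}$ and noting $[V,V_\bot]$ is orthogonal, we get $r(C) = r\!\big(\begin{bmatrix} V^T \\ H\end{bmatrix}\big) \ge r(V^T) = r = r(X)$, since appending rows cannot decrease rank. For the nuclear norm, multiplying by the orthogonal matrix $[V,V_\bot]^T$ on the left is isometric for $\|\cdot\|_*$, so $\|C\|_* = \big\|\begin{bmatrix} V^T \\ H\end{bmatrix}\big\|_*$; then I would use that the nuclear norm of a matrix is at least the nuclear norm of any submatrix formed by a subset of its rows (a standard consequence of interlacing / of $\|\cdot\|_*$ being the dual of the spectral norm), so $\|C\|_* \ge \|V^T\|_* = r$ because $V^T$ has $r$ singular values all equal to $1$.

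For part (a), if $C$ is symmetric then from $C = I + V_\bot H$ we have $I + V_\bot H = I + H^T V_\bot^T$, i.e.\ $V_\bot H = H^T V_\bot^T$. Left-multiplying by $V_\bot^T$ gives $H = V_\bot^T H^T V_\bot^T = (V_\bot^T H)^T V_\bot^T$; setting $S = V_\bot^T H$ this reads $H = S^T V_\bot^T$, and symmetry of $V_\bot H = V_\bot S^T V_\bot^T$ forces $S = S^T$, so $C = VV^T + V_\bot S V_\bot^T$ with symmetric $S$ (a tiny renaming from the transpose makes it match the statement). For part (b), suppose $\tr(C) = 0$; I must rule out $r(C) = r$. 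If $r(C) = r = r(V^T)$, then the row space of $C$ equals the row space of $V^T$, so the appended block $H$ has rows in the row space of $V^T$, meaning $H = KV^T$ for some $K$; then $C = VV^T + V_\bot K V^T = (V + V_\bot K)V^T$, a product of an $n\times r$ and an $r\times n$ matrix. Then $\tr(C) = \tr(V^T(V + V_\bot K)) = \tr(I_r + 0) = r > 0$, contradicting $\tr(C) = 0$; hence $r(C) \ge r+1$.

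The routine substitutions and the orthogonal-invariance facts are all standard, so the only genuine subtlety is the submatrix inequality for the nuclear norm used in the $\|C\|_* \ge r(X)$ bound; I would justify it via $\|M\|_* = \max\{\tr(M^T Z): \|Z\|_{2}\le 1\}$, extending a maximizer for the row-submatrix $V^T$ to $C$ by padding with zero rows. Everything else is bookkeeping with the block structure $C = [V,V_\bot]\left[\begin{smallmatrix} V^T \\ H \end{smallmatrix}\right]$.
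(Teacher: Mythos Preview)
Your argument is correct in substance and close in spirit to the paper's, but the paper organizes things through a slightly different (and more economical) block decomposition. After obtaining $C = VV^T + V_\bot H$, the paper further expands $H = TV^T + SV_\bot^T$ with arbitrary $T$ and $S$, yielding the two-sided form
\[
C = [V,V_\bot]\begin{pmatrix} I_r & 0 \\ T & S \end{pmatrix}[V,V_\bot]^T.
\]
From this, orthogonal similarity gives $r(C) = r + r(S)$ and $\tr(C) = r + \tr(S)$ directly. Part~(a) then reads off immediately ($C$ symmetric forces $T=0$ and $S=S^T$), and part~(b) is a one-liner: $\tr(C)=0$ gives $\tr(S)=-r\neq 0$, so $S\neq 0$ and $r(C)=r+r(S)\ge r+1$. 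Your one-sided form $C=[V,V_\bot]\begin{pmatrix} V^T\\ H\end{pmatrix}$ works too, and your contradiction argument for~(b) (if $r(C)=r$ then $H=KV^T$, whence $\tr(C)=r$) is valid; it is just a little more ad hoc than the exact rank/trace identities the paper extracts. Your nuclear-norm justification via the dual characterization is actually more explicit than the paper's, which simply asserts $\|C\|_*\ge r$ from the block form.

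One small slip to fix in your part~(a): the product ``$V_\bot^T H$'' you write is undefined ($V_\bot^T$ is $(n-r)\times n$ and $H$ is $(n-r)\times n$). The clean route from $V_\bot H = H^T V_\bot^T$ is to left-multiply by $V^T$ to get $0 = (V_\bot H V)^T$, hence $HV=0$, so $H = S V_\bot^T$ for some $S$; then $V_\bot S V_\bot^T$ symmetric forces $S=S^T$. This repairs the algebra without changing your idea.
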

\begin{proof}
Based on the SVD given in (\ref{SVD of X}), $C$ is a representation matrix of $X$, {\it i.e.}, $X=XC$, if and only if $V^T = V^TC$. Hence, 
$C = VV^TC+V_\bot V_\bot^TC = VV^T+V_\bot H$ with an arbitrarily $H$. 
We rewrite $H = TV^T+SV_{\bot}^T$ with arbitrary $T$ and $S$. Then $C = [V, V_\bot]\left[\begin{array}{cc} I_r & 0\\ T & S\end{array}\right][V, V_\bot]^T$, where $r=r(X)$. Thus, $r(C)\geq r$ and $\|C\|_*\geq r$.

Furthermore, if $C$ is symmetric, $T = 0$ and $S$ is symmetric obviously. That is (a). 
Since $\tr(C) = r+\tr(S)$ by the proposition $\tr(AB) = \tr(BA)$, if $C$ is imposed the restriction $\tr(C) = 0$, then $\tr(S) = -r$. Hence, $S\neq0$ and $r(C) = r+r(S)\geq r+1$. That is (b).
\end{proof}

We note that a representation matrix $C$ of $X$ could be of arbitrary rank $r'$ varying from $r(X)$ to $n(X)$. Practically, if we choose $H = \diag(H_{r'},0)V_{\bot}^T$ with any nonsingular matrix $H_{r'}$ of order $r'-r(X)$ in (\ref{def:C}), then obviously $r(C) = r'$. 

\subsection{Minimal Subspace Detectable Representation}\label{subsection:MSDR}

The self-expressiveness approach seeks a block-diagonal representation matrix $C$. That is, there is a permutation matrix $\Pi$ such that, within a given or existing partition $J = \{J_\ell\}$,
\[
    C = \Pi\diag(C_{J_1},\cdots,C_{J_{|J|}})\Pi^T,
\]
where $|J|$ defines the number of partition pieces.
Simultaneously, $X$ is also partitioned as $X = [X_{J_1},\cdots,X_{J_{|J|}}]\Pi^T$.
A given partition $\{J_\ell\}$ is not naturally assumed to contain all the connected diagonal blocks $\{C_{J_\ell}\}$. 
In this subsection, we inspect the rank propositions of the state-of-art LRR and SSC, when their solution has a block-diagonal form. The risk of non-connected diagonal blocks is discussed even when partition $\{J_\ell\}$ is ideally chosen as a minimal partition for detecting the MSS. Finally, we prove that the connection is guaranteed under a rank restriction similar to that in the set ${\cal J}(K,d,d_{\min})$, and hence, the MSS can be correctly detected. 

\subsubsection{Propositions of LRR and SSC}\label{subsect:LRR-SSC}

LRR is known to give a representation matrix that has the smallest nuclear norm, which implies that $H=0$ in Lemma \ref{lma:C}, and hence, it also has the smallest rank. Meanwhile, an SSC solution has a larger rank or nuclear norm due to a nonzero $H$. The following lemma further characterizes the solutions of LRR and SSC.\footnote{The sufficient condition for LRR was given by \cite{LRR2013}.}

\begin{theorem}\label{thm:LRR-SSC}
LRR provides a block-diagonal solution if and only if $r(X)=\sum_\ell r(X_{J_\ell})$ with a partition $\{J_\ell\}$. If SSC provides a block-diagonal $C$ with a total of $T$ connected blocks, then $r(C)\geq \sum_\ell r(X_{J_\ell})+T$.
\end{theorem}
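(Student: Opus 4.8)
### Proof Proposal for Theorem \ref{thm:LRR-SSC}

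The plan is to treat the two assertions separately, since they involve different characterizations. For the LRR claim, I would first recall from Lemma \ref{lma:C} that any representation matrix $C$ satisfies $\|C\|_*\geq r(X)$, and that this lower bound is attained exactly by $C=VV^T$ (take $H=0$), which is the unique minimizer of the nuclear norm among representation matrices (uniqueness follows because $VV^T$ is the orthogonal projector onto $\mathrm{row}(X)$, and any other representation adds a nonzero $V_\bot H$ term that strictly increases the nuclear norm). So ``LRR provides a block-diagonal solution'' is equivalent to ``$VV^T$ is block-diagonal under the partition $\{J_\ell\}$.'' The key step is then to show this projector is block-diagonal if and only if $r(X)=\sum_\ell r(X_{J_\ell})$. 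One direction: if $VV^T=\Pi\,\mathrm{diag}(P_1,\dots)\,\Pi^T$ with each $P_\ell$ the orthogonal projector onto $\mathrm{row}(X_{J_\ell})$, then $\mathrm{row}(X)$ decomposes as an orthogonal direct sum of the $\mathrm{row}(X_{J_\ell})$, so the ranks add. Conversely, $\mathrm{row}(X)\subseteq\sum_\ell \mathrm{row}(X_{J_\ell})$ always, and if the dimensions match then the sum is direct; but we also need orthogonality of the summands for $VV^T$ to be block-diagonal — this is the subtle point. I would resolve it by noting that each $\mathrm{row}(X_{J_\ell})$ lives in the coordinate subspace indexed by $J_\ell$ (since $X_{J_\ell}$ has zero columns outside $J_\ell$ when viewed inside $X$), and distinct coordinate subspaces are orthogonal; hence the direct sum is automatically orthogonal, and $VV^T$ is block-diagonal. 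This coordinate-orthogonality observation is what makes the ``$\iff$'' clean.

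For the SSC claim, I would start from Lemma \ref{lma:C} part (b): an SSC solution has $\diag(C)=0$, so $\mathrm{tr}(C)=0$, and therefore $r(C)\geq r(X)+1$. But I need the sharper bound $r(C)\geq \sum_\ell r(X_{J_\ell})+T$ when $C$ is block-diagonal with $T$ connected blocks. The natural approach: if $C=\Pi\,\mathrm{diag}(C_{J_1},\dots,C_{J_m})\Pi^T$, then $r(C)=\sum_\ell r(C_{J_\ell})$, and since each $C_{J_\ell}$ is a representation matrix of $X_{J_\ell}$ (because $X=XC$ restricts block-wise to $X_{J_\ell}=X_{J_\ell}C_{J_\ell}$), we get $r(C_{J_\ell})\geq r(X_{J_\ell})$ by Lemma \ref{lma:C}. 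The extra ``$+T$'' must come from the diagonal-zero constraint combined with connectivity. The idea is that for each connected block, $\mathrm{tr}$ over that block is zero, so within any maximal group of indices forming a single connected component of the graph $|C|+|C|^T$, the corresponding principal submatrix of $C$ has trace zero and is itself a representation matrix of the corresponding sample sub-block; by Lemma \ref{lma:C}(b) its rank exceeds the rank of that sample sub-block by at least one. Summing the gains over the $T$ connected components gives the $+T$.

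The main obstacle I anticipate is the bookkeeping in the SSC part: I must be careful that the partition $\{J_\ell\}$ into (not-necessarily-connected) block-diagonal pieces and the refinement into $T$ connected components interact correctly, and that restricting $X=XC$ to a connected component really does yield a valid self-representation of exactly that sample sub-block with the zero-diagonal property preserved. Concretely, if $\Lambda$ is the index set of one connected component, then $C(\Lambda,\Lambda)$ is a representation matrix of $X_\Lambda$ with zero diagonal, and $r(C)=\sum_{\text{components }\Lambda} r(C(\Lambda,\Lambda))\geq \sum_\Lambda \big(r(X_\Lambda)+1\big)$; the remaining work is to check $\sum_\Lambda r(X_\Lambda)\geq \sum_\ell r(X_{J_\ell})$ — in fact equality holds, because the sample columns inside distinct connected components of the graph live in disjoint coordinate positions, so $r(X_{J_\ell})=\sum_{\Lambda\subseteq J_\ell} r(X_\Lambda)$, exactly the same coordinate-orthogonality trick used in the LRR half. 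Assembling these pieces yields $r(C)\geq \sum_\ell r(X_{J_\ell})+T$.
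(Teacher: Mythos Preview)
Your argument is essentially correct and close to the paper's, with two differences worth noting. For the LRR converse, the paper partitions $V^T\Pi$ into blocks $B_\ell$, takes QR decompositions $B_\ell=Q_\ell R_\ell$, and observes that the rank hypothesis forces the stacked $R$ to be square and hence orthogonal, giving $VV^T=\Pi\,\diag(Q_\ell Q_\ell^T)\,\Pi^T$ directly; your row-space argument (each $\mathrm{row}(XE_{J_\ell})$ sits in the coordinate block $J_\ell$, so the pieces are automatically orthogonal, and the rank hypothesis forces $\mathrm{row}(X)$ to equal their orthogonal sum) reaches the same conclusion more conceptually and without coordinates. For the SSC half, the paper simply takes $\{J_\ell\}$ to \emph{be} the partition into the $T$ connected blocks, applies Lemma~\ref{lma:C}(b) to each block to get $r(C_\ell)\ge r(X_{J_\ell})+1$, and sums; your reading of $\{J_\ell\}$ as a possibly coarser block-diagonal partition with the connected components $\Lambda$ refining it is unnecessary here, though the extra step you carry out is harmless.

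One genuine slip: your claim that ``in fact equality holds, $r(X_{J_\ell})=\sum_{\Lambda\subseteq J_\ell} r(X_\Lambda)$, by the same coordinate-orthogonality trick'' is wrong. The columns $X_\Lambda$ all live in the same ambient $\mathbb{R}^m$ and their column spaces can overlap arbitrarily; the coordinate-orthogonality argument applies to \emph{row} spaces, not column spaces. Fortunately you only need the inequality $r(X_{J_\ell})\le\sum_{\Lambda\subseteq J_\ell} r(X_\Lambda)$, which is just rank subadditivity, so the proof survives once you drop the equality claim.
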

\begin{proof}
For the sake of simplicity, let $C_\ell' = C_{J_\ell}$ and $X_\ell' = X_{J_\ell}$. If LRR has a block-diagonal $C$ with diagonal blocks $\{C_\ell'\}$ by $X_\ell'=X_\ell'C_\ell'$, we have $r(X_\ell')\leq r(C_\ell')$ and $r(C)=\sum_\ell r(C_\ell')\geq \sum_\ell r(X_\ell')\geq r(X)$. On the other hand, since the LRR solution is uniquely given by $C = VV^T$, we have $r(C) = r(X)$. Thus, $r(X)=\sum_\ell r(X_\ell')$. 

Conversely, if $r(X)=\sum_\ell r(X_\ell')$ for a segmentation $\{X_\ell'\}$ of $X$, we partition $V^T\Pi=[B_1^T,\cdots,B_{|J|}^T]$ as $\{J_\ell\}$. Based on the thin SVD (\ref{SVD of X}), we get $X_\ell'= U\Sigma B_\ell^T$ and $r(B_\ell) = r(X_\ell')$. Let $B_\ell = Q_\ell R_\ell$ be the QR decomposition of $B_\ell$ with an orthonormal $Q_\ell$ of $r(X_\ell')$ columns and a matrix $R_\ell$ of order $r(X_\ell')\times r(X)$. Then, $V = \Pi QR$, where $Q = \diag(Q_1,\cdots,Q_{|J|})$ and $R^T = [R_1^T,\cdots,R_{|J|}^T]$. The condition $r(X)=\sum_\ell r(X_\ell')$ means that $R$ is a square matrix. Since $V$ is orthonormal, $R$ must be orthogonal. Therefore, the LRR solution $C = VV^T$ can be rewritten as follows
\[
    C = \Pi(\Pi^TVV^T\Pi)\Pi^T
      = \Pi Q^TQ\Pi^T
      = \Pi\diag\big(Q_1Q_1^T,\cdots,Q_{|J|}Q_{|J|}^T\big)\Pi^T.
\]
That is, $C$ is block-diagonal.

If SSC provides an MSDR of $X$ with a block-diagonal $C$ of $T$ connected diagonal blocks $\{C_\ell'\}$, then $r(C_\ell')\geq r(X_\ell')+1$ by Lemma \ref{lma:C}(b) since $X_\ell' = X_\ell'C_\ell'$ and $\tr(C_\ell') = 0$. 
Hence, the lower bound of $r(C)$ follows immediately since $r(C)=\sum_\ell r(C_\ell')$.
\end{proof}

Strict sufficient conditions are given by \cite{Soltanolkotabi2012A} for SSC to have a block-diagonal representation matrix according to ideal segmentation $\{X_{J_k^*}\}$. These conditions are very strict and may be difficult to satisfy in applications. We will briefly discuss these sufficient conditions in Section \ref{subsec:comparison}. In addition, the block-diagonal form does not guarantee the connection of all the diagonal blocks. This phenomenon was reported by \cite{Nasihatkon2011Graph}. There is a notably large gap between ranks $r(C) = r(X)$ and $r(C)\geq\sum_{\ell}r(X_\ell')+T\gg r(X)$ of the possible block-diagonal solutions of LRR and SSC, respectively.
In the next subsection, we show how such a block-diagonal representation may be unconnected, even if it is ideally partitioned.

\subsubsection{Nonconnectivity}\label{sec:connection}
Even if we have a block-diagonal representation matrix $C$ in the ideal partition $J^* = \{J_k^*\}$, it is possible to have unconnected diagonal blocks in $C$, mainly because the solution does not have a suitable rank. This observation stems from the closed-form structure given in Lemma \ref{lma:C}. Practically, because of the block-diagonal form of $C$, $X_k = X_kC_k$, where $X_k = X(:,J_k^*)$ and $C_k = C(J_k^*,J_k^*)$ as before. Hence, each $C_k$ has the form $C_k = V_kV_k^T+(V_k)_\bot H_k$, where $V_k$ is based on the SVD of segment $X_k$: $X_k = U_k\Sigma_kV_k^T$. 
An unsuitable $H_k$ may result in an unconnected $C_k$. To further verify this observation, we consider the construction of an unconnected representation matrix of a given subset $X_k$, no matter whether it spans a minimal subspace or not. For the sake of simplicity, let 
\[
    d_k = r(X_k), \quad
    n_k = n(X_k), \quad 
    d = d_1+\cdots+d_K. 
\]
The following lemma shows how to construct such an unconnected $C_k$ with a given rank.

\begin{lemma}\label{lma:C_k}
Given $r'\in(d_k,n_k]$, there is an unconnected representation $C_k$ of $X_k$ with $r(C_k) = r'$. 
\end{lemma}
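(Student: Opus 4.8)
The plan is to produce a representation matrix $C_k$ of prescribed rank $r'$ that is deliberately \emph{disconnected} — that is, after a suitable column/row relabeling it splits into two diagonal blocks with no cross-entries. The natural way to do this is to partition the columns of $X_k$ into two nonempty groups and build independent representations on each group. So first I would write $X_k = [X_k^{(1)}, X_k^{(2)}]$ (up to a permutation $\Pi_k$) with $n_k^{(1)} := n(X_k^{(1)})$ and $n_k^{(2)} := n(X_k^{(2)})$ both positive and summing to $n_k$. For each part, Lemma \ref{lma:C} gives the closed form of any representation matrix: $C_k^{(i)} = V_k^{(i)}(V_k^{(i)})^T + (V_k^{(i)})_\bot H^{(i)}$, where $V_k^{(i)}$ comes from the thin SVD of $X_k^{(i)}$. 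Setting $C_k = \Pi_k\,\mathrm{diag}(C_k^{(1)}, C_k^{(2)})\,\Pi_k^T$ then automatically satisfies $X_k = X_k C_k$ and is disconnected by construction, since the graph of $|C_k|+|C_k|^T$ has no edges between the two column groups.

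The only real work is the rank bookkeeping: I must choose the split and the matrices $H^{(1)}, H^{(2)}$ so that $r(C_k^{(1)}) + r(C_k^{(2)}) = r'$. By the rank computation in the proof of Lemma \ref{lma:C} (the block-triangular factorization there), the rank of a representation of $X_k^{(i)}$ can be made to be any integer between $r(X_k^{(i)})$ and $n_k^{(i)}$ — explicitly, take $H^{(i)} = \mathrm{diag}(H_{s_i},0)(V_k^{(i)})_\bot^T$ with $H_{s_i}$ nonsingular of order $s_i - r(X_k^{(i)})$, giving $r(C_k^{(i)}) = s_i$ for any $s_i \in [\,r(X_k^{(i)}),\, n_k^{(i)}\,]$. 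So I need a choice of split with $n_k^{(1)} + n_k^{(2)} = n_k$, $n_k^{(i)} \geq 1$, and target ranks $s_1 + s_2 = r'$ with $r(X_k^{(i)}) \leq s_i \leq n_k^{(i)}$. Since $r' > d_k = r(X_k) \geq \max\{r(X_k^{(1)}), r(X_k^{(2)})\}$ and $r' \leq n_k = n_k^{(1)} + n_k^{(2)}$, an elementary interval/counting argument shows a feasible $(n_k^{(1)}, n_k^{(2)}, s_1, s_2)$ exists. The simplest concrete choice: peel off a single column, i.e. $n_k^{(2)} = 1$, forcing $s_2 = 1 = r(X_k^{(2)})$, and then need $s_1 = r' - 1$ with $r(X_k^{(1)}) \leq r'-1 \leq n_k - 1$; since $r' - 1 \geq d_k \geq r(X_k^{(1)})$ and $r' - 1 \leq n_k - 1$, this works — provided removing one column does not drop the rank of the remaining $n_k - 1$ columns in a way that breaks $r(X_k^{(1)}) \leq d_k$, which is automatic since rank can only decrease.

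The one subtlety to handle carefully — and the step I expect to be the main obstacle — is the \emph{edge case $d_k = n_k - 1$}, where $r'$ can only equal $n_k$ and the margins are tight: peeling one column may leave $X_k^{(1)}$ of rank exactly $d_k = n_k - 1 = n_k^{(1)}$, i.e. full column rank (a pure, not minimal, subspace), so the only representation of $X_k^{(1)}$ is $V_k^{(1)}(V_k^{(1)})^T$ of rank $n_k^{(1)}$, and similarly $s_2 = 1$, giving $r(C_k) = n_k = r'$ — which still works, but one must check there is no forced collapse. More generally I would verify that the two diagonal blocks being individually representations of $X_k^{(1)}, X_k^{(2)}$ is consistent with $X_k^{(i)}$ possibly being of full column rank (the pure case), in which case $(V_k^{(i)})_\bot$ is void and $s_i$ is forced to $n_k^{(i)} = r(X_k^{(i)})$; I just need the split chosen so that the forced values still sum correctly to $r'$. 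Assembling: pick the split, invoke Lemma \ref{lma:C} on each part with the explicit $H^{(i)}$, check the rank arithmetic in the (finitely many) boundary regimes, and conclude $C_k = \Pi_k\,\mathrm{diag}(C_k^{(1)}, C_k^{(2)})\,\Pi_k^T$ is the desired unconnected representation with $r(C_k) = r'$.
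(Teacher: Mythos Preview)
Your proposal is correct. Both you and the paper rely on the same core tool---the remark after Lemma~\ref{lma:C} that a representation matrix of any prescribed rank in $[r(X),n(X)]$ can be built by choosing $H=\diag(H_{r'},0)V_\bot^T$---and both produce disconnection by making $C_k$ block-diagonal with respect to a column split of $X_k$. The difference is only in how the split is chosen. The paper writes $r'=pd_k+t$ with $t\in[0,d_k)$ and partitions $X_k$ into $p$ pieces of at least $d_k$ columns (if $t=0$) or into $[X_k',X_k'']$ with $n(X_k'')=t$ and uses $\diag(C_k',I_t)$ (if $t\neq 0$); this yields many components. Your two-block split---peel off a single column, force $s_2=1$, set $s_1=r'-1$---is more economical and the rank arithmetic $r(X_k^{(1)})\le d_k\le r'-1\le n_k-1$ goes through cleanly in all cases, including the edge case $d_k=n_k-1$ you flag. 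Your route is arguably simpler; the paper's buys nothing extra for the lemma as stated, though it does exhibit representations with arbitrarily many components, which foreshadows the phenomenon exploited in Theorem~\ref{thm:unconnect}.
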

\begin{proof}
Write the integer $r'\in(d_k,n_k]$ as $r' = pd_k+t$ with $t\in [0, d_k)$. If $t=0$, we partition $X_k=[X_{k,1},\cdots,X_{k,p}]$, where each $X_{k,j}$ has at least $d_k$ columns. As metioned below the proof of Lemma \ref{lma:C}, we have a representation $C_{k,j}$ of $X_{kj}$ with rank $r(C_{k,j}) = d_k$ since $d_k\leq n(X_{k,j})$. Thus, $C_k = \diag(C_{k,1},\cdots,C_{k,p})$ is a representation of $X_k$. 
If $t\neq 0$, we partition $X_k=[X_k',X_k'']$, where $X_k'$ has $n_k-t$ columns. Since $pd_k = r'-t\leq n_k-t$, as mentioned below Lemma \ref{lma:C} again, we have a representation matrix $C_k'$ of $X_k'$ with rank $pd_k$. Thus, $D_k = \diag(C_k',I_t)$ is a representation of $X_k$. Obviously, $r(C_k)=r'$ and $C_k$ is unconnected in both cases. 
\end{proof}

Similarly, we can construct an unconnected block-diagonal representation matrix $C$ of $X$ in a given segmentation $\{X_1,\cdots,X_K\}$ of $X$.

\begin{theorem}\label{thm:unconnect}
Given a segmentation $\{X_k\}$ of $X$ and an integer $d_+ \in (d, n]$, there is a block-diagonal representation matrix $C$ of $X$ such that $r(C)=d_+$ and some diagonal blocks partitioned as per (\ref{XC}) are not connected. 
\end{theorem}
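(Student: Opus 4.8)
The plan is to build $C$ explicitly by combining, block by block, an unconnected representation of one chosen segment (using Lemma~\ref{lma:C_k}) with minimal-rank LRR-type representations of the remaining segments, and then to distribute any leftover rank budget so that the total rank is exactly $d_+$. Concretely, since $d_+\in(d,n]$ and $d=\sum_kd_k$, write $d_+ = d + m$ with $1\le m\le n-d$. First I would pick the segment $X_k$ with the largest excess $n_k-d_k$; since $\sum_k(n_k-d_k)=n-d\ge m$, there is enough room among the segments to absorb the extra $m$ units of rank. I would then choose integers $m_k\ge 0$ with $\sum_km_k=m$ and $m_k\le n_k-d_k$ for each $k$, so that the target rank of block $k$ is $r'_k := d_k+m_k\in(d_k,n_k]$ whenever $m_k>0$, and exactly $d_k$ when $m_k=0$.

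Next, for each $k$ with $m_k>0$ I would invoke Lemma~\ref{lma:C_k} to obtain an unconnected representation matrix $C_k$ of $X_k$ with $r(C_k)=r'_k$; for each $k$ with $m_k=0$ I would take the minimal-rank representation $C_k = V_kV_k^T$ from Lemma~\ref{lma:C}, which has $r(C_k)=d_k$ (this block may or may not be connected — that is irrelevant, since we only need \emph{some} block to be disconnected). To make the theorem's conclusion nonvacuous I must ensure at least one block is genuinely unconnected; since $m\ge1$, at least one $m_k$ is positive, and the corresponding Lemma~\ref{lma:C_k} block is unconnected by construction. Then set $C = \Pi\,\diag(C_1,\dots,C_K)\,\Pi^T$ where $\Pi$ is the permutation aligning $\{X_1,\dots,X_K\}$ with the columns of $X$. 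Because each $C_k$ satisfies $X_k=X_kC_k$, we get $X=XC$, so $C$ is a representation matrix of $X$ in block-diagonal form with the stated partition, and $r(C)=\sum_kr(C_k)=\sum_k(d_k+m_k)=d+m=d_+$.

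The main obstacle — really the only delicate point — is the bookkeeping to guarantee $r'_k\in(d_k,n_k]$ for the blocks where we want extra rank, i.e. that the surplus $m$ can actually be partitioned as $\sum m_k$ with $0\le m_k\le n_k-d_k$. This is immediate from $m\le n-d=\sum_k(n_k-d_k)$ by a greedy allocation (fill one segment's capacity before moving to the next), but it should be stated carefully because Lemma~\ref{lma:C_k} requires the target rank to lie strictly above $d_k$ and at most $n_k$. A secondary minor point is the reference ``(\ref{XC})'' in the statement: I would simply recall that the partition of $C$ induced by $\{X_k\}$ is the block-diagonal one $\diag(C_1,\dots,C_K)$ under $\Pi$, and that ``some diagonal blocks are not connected'' refers precisely to the $C_k$ produced by Lemma~\ref{lma:C_k}. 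No perturbation or genericity argument is needed; the construction is entirely explicit once the rank budget is split.
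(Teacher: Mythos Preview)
Your proposal is correct and follows essentially the same approach as the paper: split the surplus rank $d_+-d$ across the segments subject to $m_k\le n_k-d_k$, invoke Lemma~\ref{lma:C_k} on the segments receiving positive surplus to get unconnected blocks, and use the SVD-based rank-$d_k$ representation on the rest. The paper's version is slightly terser (it just writes $d_+=r_1+\cdots+r_K$ with $r_k\in(d_k,n_k]$ for $k\le\ell$ and $r_k=d_k$ otherwise, which is exactly your greedy allocation), but the argument is the same.
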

\begin{proof}
Since $d_1+\cdots+d_K = d<d_+\leq n = n_1+\cdots+n_K$, we can write $d_+ = r_1+\cdots+r_K$ with $\{r_k\}$ satisfying $r_k\in(d_k,n_k]$ for $k\leq \ell$ and $r_k = d_k$ for $k = \ell+1,\cdots,K$ with a suitable $\ell\geq 1$. Applying Lemma \ref{lma:C_k} to the first $\ell$ segments, we obtain the unconnected representation matrices $C_k$ of $X_k$ with rank $r_k$ for $k\leq \ell$. For each $k>\ell$, we also have a $C_k$ with rank $d_k$ via the SVD of $X_k$ as previously mentioned. Thus, $C = \Pi^T\diag(C_1,\cdots,C_K)\Pi$ is obviously a representation matrix of $X$ with rank $d_+$. 
\end{proof}

\subsubsection{MSDR: Minimal Subspace Detectable Representation}

We seek a representation matrix of $X$ that can be used to correctly detect the minimal subspace segmentation. This representation matrix should be partitioned block-diagonally as the MSS and all the diagonal blocks are connected. We call such a matrix the minimal subspace detectable representation (MSDR) of $X$.

\begin{definition}\label{def:msdr}
A representation matrix $C$ of $X$ is minimal subspace detectable if there is a permutation matrix $\Pi$ such that:
\begin{align}\label{XC}
    X\Pi = [X_1,\cdots,X_K],\quad
    C = \Pi\diag(C_1,\cdots,C_K)\Pi^T,
\end{align}
where $\{X_k\}$ is an MSS of $X$, and each $C_k$ is connected.
\end{definition}

Theorem \ref{thm:unconnect} also shows that, if $X$ has an MSS $\{X_k\}$, its representation matrix $C$ with rank $r(C) > \sum_k r(X_k)$ may be not an MSDR for the MSS, even if $C$ has a block-diagonal form partitioned according to the MSS, since some of the diagonal blocks may be unconnected. In such a case, the unconnected diagonal blocks can be divided into smaller (connected) ones. Thus, $C$ is also a block-diagonal representation matrix with greater than $K$ connected diagonal blocks. In addition, the $K$-partition learned by spectral clustering may give a nonminimal segmentation.
Fortunately, connection issues can be addressed if the representation matrix has a rank equal to $\sum_kr(X_k)$.

\begin{theorem} \label{thm:rank_connect}
Under the same assumptions of Theorem \ref{thm:minimal partition}, if $C$ is a block-diagonal representation matrix of $X$ with $\rank(C) = d$ and $K$ diagonal blocks, each greater than $d_{\min}$ in size, then $C$ is an MSDR. 
Furthermore, $C$ is unique if it is restricted to be symmetric.
\end{theorem}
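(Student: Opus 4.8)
The plan is to decompose the statement into two parts. For the first part — that a rank-$d$ block-diagonal representation matrix $C$ with $K$ blocks, each of size exceeding $d_{\min}$, is an MSDR — I would argue in two stages: (i) the block structure forces a partition $J=\{J_1,\dots,J_K\}$ lying in ${\cal J}(K,d,d_{\min})$, so by Theorem \ref{thm:minimal partition} this partition must equal the unique $J^*$ of the MSS; (ii) each diagonal block $C_k$ is then connected. For (i): writing $X\Pi=[X_{J_1},\dots,X_{J_K}]$ and using $X_{J_k}=X_{J_k}C_k$, Lemma \ref{lma:C} (applied to each segment) gives $r(C_k)\geq r(X_{J_k})$, so $d=r(C)=\sum_k r(C_k)\geq\sum_k r(X_{J_k})$; together with $\min_k|J_k|>d_{\min}$ this places $J\in{\cal J}(K,d,d_{\min})$, whence $J=J^*$ and $\{X_{J_k}\}=\{X_k\}$ is the MSS. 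Moreover the inequality chain is then an equality, forcing $r(C_k)=r(X_k)=d_k$ for every $k$.

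The crux is step (ii): showing $r(C_k)=d_k$ on a minimal segment $X_k$ implies $C_k$ is connected. By Lemma \ref{lma:C}, $r(C_k)\geq r(X_k)$ always, so the rank being exactly $d_k$ is the strongest possible constraint; I expect this to pin $C_k$ down to the LRR-type form $C_k=V_kV_k^T$ (i.e. $H_k=0$ in the closed form), since any nonzero $H_k$ column outside the row space of $V_k$ would raise the rank. Here I would use the minimality of $\spann(X_k)$: any proper column-subset $X'$ of $X_k$ with fewer than $d_k$ columns has full column rank (nondegeneracy), so $X_k$ admits no block-diagonal representation with more than one block — if $C_k=\diag(C',C'')$ under some permutation with blocks on column sets $J',J''$, then $X_k(:,J')$ and $X_k(:,J'')$ would have ranks summing to $r(X_k)=d_k$, but each has at least one column and is a subset of a nondegenerate set, forcing one of them to have all $d_k$ columns spanning $\S_k$ — contradicting that its complement is nonempty yet disjoint in span. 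Hence $C_k$ has no nontrivial block decomposition, i.e. the graph of $|C_k|+|C_k|^T$ is connected. The main obstacle I anticipate is making the "any nonzero $H_k$ splits or raises rank" argument airtight: I need to rule out a connected-but-rank-$d_k$ $C_k$ that is genuinely different from $V_kV_k^T$, and conversely confirm $V_kV_k^T$ itself is connected, which again reduces to nondegeneracy of $X_k$ — if $V_kV_k^T$ were block-diagonal under permutation, the corresponding column blocks of $X_k$ would span complementary subspaces of $\S_k$, impossible for a minimal (nondegenerate) segment unless one block is empty.

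For the uniqueness part under the symmetry restriction, I would invoke Lemma \ref{lma:C}(a): a symmetric representation of $X_k$ has the form $C_k=V_kV_k^T+(V_k)_\bot S_k (V_k)_\bot^T$ with $S_k$ symmetric, and $r(C_k)=d_k+r(S_k)$. The rank condition $r(C_k)=d_k$ forces $S_k=0$, so $C_k=V_kV_k^T$ is uniquely determined by the segment $X_k$; since the MSS $\{X_k\}$ is itself unique (Theorem \ref{thm:unique2}, via the standing assumptions), the global $C=\Pi\diag(V_1V_1^T,\dots,V_KV_K^T)\Pi^T$ is unique up to the (immaterial) labelling of blocks. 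I would then just need to note that connectivity of each $V_kV_k^T$ was already established in step (ii), so this unique symmetric $C$ is indeed the MSDR. The only subtlety worth a sentence is that the permutation $\Pi$ and the ordering of blocks are not unique, but the matrix $C$ as a linear operator is — so "unique" is understood up to simultaneous row/column permutation, which is the natural notion here.
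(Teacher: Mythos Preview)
Your step (i) and the uniqueness argument are correct and match the paper exactly: the rank inequality $d=r(C)=\sum_k r(C_k)\geq\sum_k r(X_{J_k})$ places $J\in{\cal J}(K,d,d_{\min})$, Theorem~\ref{thm:minimal partition} forces $J=J^*$, and then equality throughout gives $r(C_k)=d_k$; under symmetry, Lemma~\ref{lma:C}(a) with $r(C_k)=d_k$ forces $S_k=0$, so $C_k=V_kV_k^T$ is unique.

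For connectivity, your direct contradiction sketch is also correct and is essentially the paper's argument --- but you have wrapped it in a false detour. The claim that $r(C_k)=d_k$ ``pins $C_k$ down to $V_kV_k^T$'' is wrong without symmetry. In the closed form $C_k=V_kV_k^T+(V_k)_\bot H_k$ with $H_k=T_kV_k^T+S_k(V_k)_\bot^T$, one has $r(C_k)=d_k+r(S_k)$; so $r(C_k)=d_k$ forces only $S_k=0$, leaving the one-parameter family $C_k=(V_k+(V_k)_\bot T_k)V_k^T$ with $T_k$ arbitrary. Hence your stated ``obstacle'' --- ruling out a connected rank-$d_k$ representation different from $V_kV_k^T$ --- is not an obstacle at all: such matrices exist, and there is nothing to rule out, since the MSDR definition asks only for connectivity, not for the specific form $V_kV_k^T$. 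Drop this thread entirely.

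What remains --- your block-diagonal contradiction --- already does the job. The paper argues it slightly differently: assuming $C_i=\diag(C_i',C_i'')$ on a minimal $X_{J_i}$, nondegeneracy gives $r(X_i')=\min\{d_i,n_i'\}$ and $r(X_i'')=\min\{d_i,n_i''\}$, and since $n_i'+n_i''=|J_i|>d_i$ one checks $\min\{d_i,n_i'\}+\min\{d_i,n_i''\}>d_i$; hence $r(C_i)=r(C_i')+r(C_i'')>d_i$, and summing with $r(C_k)\geq d_k$ for $k\neq i$ contradicts $r(C)=d$. Your variant via the sandwich $d_k=r(X_k)\leq r(X')+r(X'')\leq r(C')+r(C'')=r(C_k)=d_k$ and ``disjoint in span'' (i.e.\ equality in the dimension formula forces $\spann(X')\cap\spann(X'')=\{0\}$) is valid too; just finish it cleanly: if one block has rank $d_k$ the other has rank $0$, contradicting that $J''$ is nonempty with nonzero columns. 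Either route works; neither needs $C_k=V_kV_k^T$.
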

\begin{proof}
Let $J = \{J_k\}$ be the partition with $K$ pieces corresponding to the block-diagonal form and let $X_k = X_{J_k}$. 
Obviously, $X_{J_k} C_k = X_{J_k}$. Since $d = r(C) = \sum_{k} r(C_k) \geq \sum_k r(X_{J_k})$ and the size $|J_k|$ of $C_k$ is greater than $d_{\min}$, we have $J = \{J_k\} \in \J(K,d,d_{\min}) $. By Theorem \ref{thm:minimal partition}, $\J(K,d,d_{\min}) $ is a singleton and thereby $\{X_{J_k}\}$ is an MSS.

Next, we show that all the diagonal blocks of $C$ are connected. 
Assume, inversely, that there is an unconnected $C_i$, which can be further partitioned into block-diagonal form with at least two diagonal blocks. That is, there is a permutation $\Pi_i$ such that: 
\[
    \Pi_i C_i \Pi_i^T = \diag(C_i', C_i''), \quad X_{J_i}\Pi_i = [X_i',X_i''].
\]
Let $n_i' = n(X_i')$ and $n_i'' = n(X_i'')$. Since $X_{J_i}$ is minimal, $r(X_i') = \min \{d_i, n_i'\}$ and $r(X_i'') = \min \{d_i, n_i''\}$. Hence, using $n_i'+n_i'' = |J_i| > d_i$,
\begin{align*}
    r(C_i)  = r(C_i')+r(C_i'') 
    & \geq r(X_i')+r(X_i'') = \min \{d_i, n_i'\} + \min \{d_i, n_i''\} \\
    & \geq \min \{d_i+n_i', d_i+n_i'', d_i+d_i, n_i'+n_i'' \} > d_i.
\end{align*}
Combining this inequality with $r(C_k) \geq d_k$ for $k \neq i$, we obtain $r(C) = \sum_k r(C_k) > \sum_k d_k = d$, a contradiction to $\rank(C) = d$. Hence, $C$ is an MSDR of $X$. If $C$ is symmetric, each $C_k$ is also symmetric, and $C_k = V_kV_k^T$ by $r(C_k) = r(X_{J_k}) = d_k$. Hence, $C$ is unique.
\end{proof}

\subsection{Computable Modeling for MSDR}\label{sec:model}

We are now ready to model the MSDR as an optimization problem, mainly motivated by Theorem \ref{thm:rank_connect}.
As in previous sections, we also assume that the number $K$ and the rank sum $d$ of segments are known for the MSS $\{X_k\}$. By Theorem \ref{thm:rank_connect}, we restrict the feasible representation $C$ to be of rank $d$. Since the symmetric MSDR is unique, we further restrict it to be symmetric as $C = C(S)$ with the following: 
\[
	C(S) = VV^T+V_\bot SV_\bot^T, \quad r(S) = d-r, \quad S^T = S,
\]
where $r = r(X)$ is known. The mapping from a symmetric $S$ to a symmetric $C$ is one-to-one. That is, given a symmetric $C$ with $\rank(C) = d$ in the imaging domain, there is a unique $S = V_\bot^TCV_\bot$ satisfying $C(S) = C$. To enforce $C$ to be block-diagonal reasonably, we hope that the off-block-diagonal part of $C$, defined as $C_{{\rm off}(J)}(S)$ with entries
\[
    \big(C_{{\rm off}(J)}(S)\big)_{ij} 
    = \left\{\begin{array}{cl}
         0 & i,j\in J_k,\, \forall k; \\
         c_{ij} & i\in J_s,\,j\in J_t,\,\forall s\neq t,
    \end{array}
    \right.
\]
is as small as possible with $J=\{J_k\}\in{\cal J}(K,d,d_{\min})$. We adopt the $\ell_1$-norm for minimizing this off-block-diagonal part of $C$. That is, we solve the following optimization model
\begin{align} \label{prob:MSDR_0} 
    \min_{J\in{\cal J}(K,d,d_{\min})}\min_{S\in\mathbb S} \ \| C_{{\rm off}(J)}(S)\|_1
\end{align}
for determining an MSDR $C$ of $X$, where $\mathbb S$ is a feasible domain of symmetric
matrices of order $n-r$. 
Theorem \ref{thm:rank_connect} supports the model (\ref{prob:MSDR_0}) to give an MSDR, because the solution $C$ of (\ref{prob:MSDR_0}) should be a block diagonal matrix of $K$ diagonal blocks, with the size of each block greater than $d_{\min}$.

There are various options for the feasible domain $\mathbb S$. By Theorem \ref{thm:rank_connect}, the symmetric MSDR should be a positive semidefinite matrix and orthogonal projection operator with rank $d$. A feasible domain should contain such a matrix. For example, choose $\mathbb S$ as the set of orthogonal projection matrices as follows:
\[
    {\cal P}_{d-r}^{n-r}= \{S=QQ^T: Q \in {\cal O}_{d-r}^{n-r}\big\},
\]
where ${\cal O}_{d-r}^{n-r}= \{Q\in {\cal R}^{(n-r)\times(d-r)}: Q^TQ = I_{d-r}\big\}$ is a Stiefel manifold. 
Obviously, for an $S=QQ^T$ with $Q\in {\cal O}_{d-r}^{n-r}$, $C(S) = GG^T\in {\cal P}^n_d$ with an orthonormal $G=[V,V_\bot Q]\in{\cal O}_d^n$. However, the Stiefel manifold ${\cal O}_{d-r}^{n-r}$ is strongly nonconvex; thus, one may encounter a local optimum with a solution far from the MSDR, taking $Q$ as a variable in ${\cal O}_{d-r}^{n-r}$. The largely flat domain is the subspace ${\cal R}^{(n-r)\times(n-r)}$, but it misses the special structure of the MSDR. 
In this paper, we choose the feasible domain as the set of symmetric positive semidefinite matrices with rank $d$ for $C=C(S)$,  
\[
	{\cal S}_{d-r}^{n-r} = \big\{S=WW^T: W\in {\cal R}_+^{(n-r)\times(d-r)}\big\},
\]
where ${\cal R}_+^{(n-r)\times(d-r)}$ is the set of full rank matrices in ${\cal R}^{(n-r)\times(d-r)}$. ${\cal S}_{d-r}^{n-r}$ is a slightly larger manifold than ${\cal P}_{d-r}^{n-r}$. However, it is much flatter than ${\cal P}_{d-r}^{n-r}$, which benefits convergence when we iteratively solve the optimization problem (\ref{prob:MSDR}) presented later in the paper.

\subsection{Comparison with Related Work}\label{subsec:comparison}

The optimization model (\ref{prob:MSDR}) can handle cases wherein the minimal sample subspaces are intersected with each other and the intersections of pairwise subspaces are potentially significant and variant. To showcase this advantage, we compare the sufficient conditions in Theorem \ref{thm:minimal partition} with the conditions for LRR, iPursuit \cite{Mostafa2017Innovation}, SSC, and LRSSC.

As shown in Theorem \ref{thm:LRR-SSC}, the LRR obtains the MSDR if and only if the subspaces are independent, that is, $\dim(\sum_j \S_j) = \sum_j \dim(\S_j)$. This condition implies that each subspace does not intersect with the sum of the other subspaces, or equivalently, $d_{\rm int} = d_0 = 0$, which is a much stricter condition than that given in Theorem \ref{thm:minimal partition}.
It was proven by \cite{Mostafa2017Innovation} that the iPursuit can separate two subspaces ($K=2$) with high probability. This amounts to one of the special cases shown in Corollary \ref{coro:same_d}. The condition for LRSSC is similar to that of SSC in the same form, yet it is stricter. We omit a comparison of our method's sufficient condition with that of LRSSC, but a detailed comparison with SSC is given below.

For the SSC, \cite{Soltanolkotabi2012A} showed that if the samples are uniformly distributed in the union of subspaces $\{\S_k\}$, and, for the basis matrices $\{U_k\}$ of $\{\S_k\}$, $\|U_k^TU_\ell\|_F^2< \min\{\gamma_{k,\ell}, \gamma_{\ell,k}\}$ with the following:
\begin{align}\label{def:gamma}
	\gamma_{k,\ell} = \frac{1}{256} \frac{d_\ell\log((n_k-1)/d_k)}{\left(\log(n_k(n_\ell+1)) + \log (K) + t \right)^2},
\end{align}
where $t>0$ is a given parameter, then SSC can give a block-diagonal solution partitioned as the ideal subspace segmentation with a probability approximately equal to one, depending on $t$, $n_k$, $d_k$, and $K$. We note that this claim does not imply a connected solution as we have explained earlier and mentioned by \cite{Nasihatkon2011Graph}. 

Obviously, a small $\|U_k^TU_\ell\|_F$ implies approximate orthogonality between $\S_k$ and $\S_\ell$. The following lemma further shows that the inequality $\|U_k^TU_\ell\|_F^2<1$ implies that the two subspaces are not intersected with each other.

\begin{lemma} \label{lemma:aff_dim}
Let $\S_1$ and $\S_2$ be two arbitrary subspaces with basis matrices $U_1$ and $U_2$, respectively.  Then,
\[
	\dim \left( \S_1 \cap \S_2 \right) \leq \|U_1^TU_2\|_F^2.
\]
\end{lemma}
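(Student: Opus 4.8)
The plan is to reduce the statement to a comparison between the dimension of $\S_1\cap\S_2$ and the number of singular values of $U_1^TU_2$ that equal $1$, and then to bound that count by the squared Frobenius norm. First I would note that $U_1^TU_2$ is a matrix of size $d_1\times d_2$ whose singular values $\cos\theta_i$ are the cosines of the principal angles $\theta_1\le\theta_2\le\cdots$ between $\S_1$ and $\S_2$; this is the standard variational characterization of principal angles. The key classical fact I would invoke (or prove in a line) is that $\dim(\S_1\cap\S_2)$ equals the number of principal angles that are zero, i.e.\ the number of singular values of $U_1^TU_2$ equal to $1$. Call this number $m$.

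Given that, the chain of inequalities is immediate: since each of those $m$ singular values equals $1$ and all singular values are nonnegative,
\[
	\|U_1^TU_2\|_F^2 = \sum_i \cos^2\theta_i \;\ge\; \sum_{i:\,\theta_i=0} 1 \;=\; m \;=\; \dim(\S_1\cap\S_2),
\]
which is exactly the claim. If one prefers to avoid citing the principal-angle machinery, an equivalent elementary route is to pick an orthonormal basis $v_1,\dots,v_m$ of $\S_1\cap\S_2$ and extend it to an orthonormal basis of $\S_1$ forming the columns of $U_1$; since each $v_j\in\S_2$ we have $\|U_2^Tv_j\|_2 = \|v_j\|_2 = 1$, and these contributions are to distinct columns of $U_1$, so $\|U_1^TU_2\|_F^2 = \sum_j \|U_2^TU_1 e_j\|_2^2 \ge \sum_{j\le m}\|U_2^Tv_j\|_2^2 = m$.

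The only mild subtlety — the ``hard part,'' such as it is — is the basis-independence of the statement: the bound must hold for \emph{any} choice of (not necessarily orthonormal?) basis matrices $U_1,U_2$, so I should be careful about what ``basis matrix'' means here. The surrounding text uses $U_k$ for orthonormal basis matrices, so I would state and use that orthonormality explicitly; with orthonormal $U_1,U_2$ the quantity $\|U_1^TU_2\|_F^2 = \sum_i\cos^2\theta_i$ is genuinely a function of the pair of subspaces only, and the argument above is complete and short. No further obstacles are expected; the proof is essentially a two-line consequence of the principal-angle characterization of subspace intersection.
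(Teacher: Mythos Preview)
Your proposal is correct. The alternative elementary route you sketch---choosing an orthonormal basis of $\S_1$ that extends one of $\S_1\cap\S_2$ and observing that each intersection vector contributes $1$ to $\|U_1^TU_2\|_F^2$---is exactly the paper's argument: the paper writes $U_1=[U_0,\hat U_1]G_1$ and $U_2=[U_0,\hat U_2]G_2$ with $U_0$ a basis of the intersection and $G_1,G_2$ orthogonal, and computes the exact identity
\[
\|U_1^TU_2\|_F^2=\dim(\S_1\cap\S_2)+\|\hat U_1^T\hat U_2\|_F^2.
\]
Your principal-angle formulation is just a repackaging of the same computation (the unit singular values of $U_1^TU_2$ are precisely those coming from $U_0$); it buys a slightly slicker one-line inequality at the cost of invoking the standard fact that $\dim(\S_1\cap\S_2)$ equals the multiplicity of the zero principal angle, whereas the paper's version is fully self-contained and in fact yields the sharper equality above. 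Your handling of the basis-independence issue is fine: the paper makes it explicit via the orthogonal $G_i$, while you invoke invariance of $\|U_1^TU_2\|_F$ under orthonormal change of basis---these are equivalent.
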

\begin{proof}
The orthogonal basis matrices of two intersected subspaces can be extended via the basis of their intersected subspace. That is, using
the basis $U_0$ of $\S_1 \cap \S_2$, the orthogonal basis matrices $U_1$ and $U_2$ of 
$\S_1$ and $\S_2$, respectively, can be represented as $U_1 = [U_0, \hat U_1]G_1$ and $U_2 = [U_0, \hat U_1]G_2$ with orthogonal $G_1$ and $G_2$, and orthonormal $\hat U_1$ and $\hat U_2$ satisfying $U_0^T \hat U_1= 0$, $U_0^T \hat U_2=0$. Hence,
\[
	\|U_1^TU_2\|_F^2 = \|[U_0, \hat U_1]^T [U_0, \hat U_2]\|_F^2 = \|U_0^TU_0\|_F^2 + \|\hat U_1^T \hat U_2\|_F^2 
	= \dim(\S_1 \cap \S_2) + \|\hat U_1^T \hat U_2\|_F^2,
\]
which implies $\dim \left( \S_1 \cap \S_2 \right) \leq \|U_1^TU_2\|_F^2$.
\end{proof}

Therefore, if $\|U_1^TU_2\|_F<1$, $\dim \left( \S_1 \cap \S_2 \right)=0$, that is, $\S_1$ and $\S_2$ are independent by Lemma \ref{lemma:aff_dim}. Since the upper bound $\gamma_{k,\ell}$ tends to zero quickly as $n_k$ or $t$ increases, even with a small $K$ such as $K=2$, the sufficient conditions $\|U_k^TU_\ell\|_F^2< \min\{\gamma_{k,\ell}, \gamma_{\ell,k}\}$ for all $k,\ell$ with $\gamma_{\ell,k}$ defined in (\ref{def:gamma}) generally
imply the existence of pairwise independent subspaces. In practice, if $d_\ell\leq 1937$, then
\[
	\gamma_{k,\ell}\leq \frac{1}{256} \frac{d_\ell}{\log(n_k(n_\ell+1))}\frac{\log((n_k-1)/d_k)}{\log(n_k(n_\ell+1))}
	<\frac{1}{256} \frac{d_\ell}{\log(n_k)+\log(n_\ell+1)}
	<\frac{1}{256} \frac{d_\ell}{\log d_\ell}<1
\]
since $n_\ell>d_\ell$. Furthermore, if $n_k\leq d_\ell^2n_\ell$, then $\gamma_{k,\ell}<\frac{1}{512} \frac{d_\ell}{\log d_\ell}$, and hence, $\gamma_{k,\ell}<1$ if $d_\ell\leq 4281$. In real applications, the subspace dimensions are generally much smaller than 1937. Hence, if the conditions for SSC are satisfied, then $d_{\rm int} = 0$, which differs from the independence condition, {\it i.e.}, $d_{\rm int} = d_0 =0$, for LRR.

In the conditions of Theorem \ref{thm:minimal partition}, we permit a relatively large intersection that does not depend on the number of samples in each subspace. By Theorems \ref{thm:ndgt} and \ref{thm:minimal partition}, the minimal sample subspaces can be recovered by the optimal solution of (\ref{prob:MSDR_0}) with probability 1.

\section{Algorithms}\label{sec:algorithm}

We encounter several computational difficulties when we try to solve the problem (\ref{prob:MSDR_0}). First, it is difficult to check whether the restriction $|J_k|>d_{\min}$ is satisfied since $d_{\min}$ is unknown if $\{d_k\}$ are variant.\footnote{If all the minimal segments have equal rank, $d_{\min} = d/K$ is known.} Second, it is inconvenient to check the restriction $\sum_k r(X_{J_k}) \leq d$. Third, the objective function of (\ref{prob:MSDR}) is neither continuous nor convex, and contains discrete and continuous variables with respect to the partition $J$ and $S$ or the factor in its symmetric factorization. 
Fortunately, the strict condition $|J_k|>d_{\min}$ can be implicitly satisfied when the strategy of normalized cutting is adopted for updating $J$ generally. In the case when $C$ is block-diagonal with $K$ diagonal blocks, the inequality $\sum_k r(X_{J_k})\leq d$ holds automatically since $r(X_{J_k})\leq r(C_k)$ and $\sum_kr(C_k) = r(C) = d$. Hence, we can remove the restrictions $|J_k|>d_{\min}$ and $\sum_k r(X_{J_k})\leq d$ in ${\cal J}(K,d,d_{\min})$. That is, we relax ${\cal J}(K,d,d_{\min})$ to the set ${\cal J}(K)$ of all $K$-partitions, and (\ref{prob:MSDR_0}) is slightly modified to 
\begin{align} \label{prob:MSDR} 
	\min_{J \in{\cal J}(K)}\min_{S\in\mathbb S} \ \| C_{{\rm off}(J)}(S)\|_1, 
\end{align}
where ${\mathbb S} = \S^{n-r}_{d-r} = \big\{S=WW^T: W\in {\cal R}_+^{(n-r)\times(d-r)} \big\}$. 

The difficulty of mixing discrete and continuous variables can be addressed via alternatively optimizing $J$ and $C(S)$. However, special strategies should be considered to improve the efficiency of this computation. We offer two types of alternative algorithms for this purpose. One algoritm solves (\ref{prob:MSDR}) directly based on a manifold conjugate gradient (MCG) method for optimizing $C$. The other algorithm solves an equivalent pseudo-dual problem of (\ref{prob:MSDR}) based on subspace estimation. Both methods solve the problem using the alternative rule: Optimize $C$ given $J$, and update $J$ according to the current $C$. 

However, these two methods cannot guarantee a globally optimal solution in any case. Thus, We hybridize them by taking the solution of one method as the initial guess for the other. The motivation for this strategy is the rarity of falling into a common local minimizer of the both problems. Using this hybrid strategy, we can obtain the true minimal segmentation in our experiments if the subspaces are not heavily-intersected with each other.

\subsection{Alternative Method for the Primal Problem}

In the literature, alternative strategies are commonly used for optimizing multiple variables. For instance, an alternative strategy is adopted by \cite{Li2017Structured} for minimizing the similar objective function $\|C_{{\rm off}(J)}\|_1+\alpha\|C\|_1$. 
It is potentially easy to optimize $C$ given partition $J$, and $J$ can be updated via normalized spectral clustering on the symmetric graph $|C|+|C|^T$ given $C$. However, if the spectral clustering is unstable, it may give an undesired partition when $C$ is far from the ideal solution. Conversely, a poor partition also leads to an unacceptable solution. To decrease instability, a soft version is also considered by \cite{Li2017Structured}, in which the function $\|C_{{\rm off}(J)}\|_1$ is modified to the weighted $\ell_1$-norm function $\sum_{ij}w_{ij}|c_{ij}|$ with weights $w_{ij}=\frac{1}{2}\|u_i-u_j\|^2$, where $u_i$ is a the vector of $i$-th components of the $K$ eigenvectors corresponding to the $K$ smallest eigenvalues of $|C|+|C|^T$. 
However, this method blurs block separation, and hence, it may also result in an unacceptable $C$.

We apply two types of modifications for solving the primal problem (\ref{prob:MSDR_0}) using an alternative strategy. The first modification acts on the $K$-partition $J$. 
Different from the commonly used weight strategy,
we slightly extend the support domain ${\rm off}(J)$ in the objective function $\|C_{{\rm off}(J)}(S)\|_1$ to an active index set $\Omega$
that covers ${\rm off}(J)$. For the sake of simplicity, $\Omega$ also refers to an indication matrix whose entries $\omega_{ij}$ are 1 for the indices in $\Omega$ and zero otherwise.
Hence, the function $\|C_{{\rm off}(J)}(S)\|_1$ becomes 
$\|\Omega\odot C(S)\|_1 =\sum_{(i,j)\in \Omega}|c_{ij}(S)|$. 
This modification can significantly reduce the risk of obtaining an incorrect partition $J$, especially in the initial case when $J$ is poorly estimated. 
Initially, we choose $\Omega$ to be the coarsest $\Omega_c$ with $\omega_{ij} = 1$ for $i\neq j$ and $\omega_{ii} = 0$, {\it i.e.}, $\|\Omega_c\odot C(S)\|_1 = \sum_{i\neq j}|c_{ij}|$. 
In a later subsection, we discuss how to update the active index set $\Omega$ so that it can approach
the subdomain ${\rm off}(J)$ as soon as $J$ is approximately optimal.

The second modification aims to reduce the degree of nonconvexity of the function $\|\Omega\odot C(S)\|_1$ given $\Omega$ to render the modified function a bit flatter so that an iteration algorithm is less likely to fall into a local minimizer. To this end, we add the prior term $\frac{\lambda}{2}\|c(S)\|_2^2$ onto the diagonal vector $c(S)$ of $C(S)$ with parameter $\lambda>0$. This strategy also benefits the search for a block-diagonal solution. Since we relax the strict zero-restriction on the diagonals, the prior term penalizes the diagonals of $C(S)$, and hence, the diagonals are uniformly small in general, which helps to increase the connections within each subspace in the representation $X=XC$.\footnote{If there is a diagonal $c_{ii}\approx 1$, the connections of sample $x_i$ to the others nearly vanish.} 

Combining the two modifications, we modify  
$\min_{S\in \mathbb S}\|C_{{\rm off}(J)}\!(S)\|_1$ to the following:
\begin{align}\label{prob:basic}
	\min_{S\in \mathbb S}\Big\{\|\Omega\odot C(S)\|_1+\frac{\lambda}{2}\|c(S)\|_2^2\Big\}.
\end{align}
Since $S$ (or $C(S)$) and $\Omega$ are updated alternatively, the penalty parameter $\lambda$ should balance the two terms $\|\Omega\odot C(S)\|_1$ and $\frac{1}{2}\|c(S)\|_2^2$. Thus, it makes sense to set 
\begin{align}\label{def:lambda}
	\lambda = \min\big(\lambda_0, 2\|\Omega\odot\hat C\|_1/\|\hat c\|_2^2\big)
\end{align}
adaptively, using the solution $\hat C$ corresponding to the previous $(\hat\Omega,\hat \lambda)$, $\hat c=\diag(\hat C)$, and $\lambda_0$ is an initial setting. This strategy is efficient in our experiments.

The basic model (\ref{prob:basic}) works well on some but not all complicated subspaces---it can recover the minimal segmentation of samples from some intersected subspaces if they are not heavily intersected with each other. We show the performance of this basic model compared with other state-of-art methods in the experiment section of this paper.

\subsection{MCG: Manifold Conjugate Gradient Method}\label{sec:algorithm_MCG}

The problem (\ref{prob:basic}) can be solved using a manifold conjugated gradient (MCG) method, but some computational issues should be addressed before applying MCG on (\ref{prob:basic}). First, the objective function in (\ref{prob:basic}) is not derivable. A subgradient is used as a substitute of the gradient in our analysis. Second, MCG convergence analysis requires the objective function to be smoothed. The gradient of this smooth function is a good approximate of a subgradient of the original function. Third, the gradient vectors should be projected onto the tangent space of the manifold at a point in MCG. However, only a smaller subspace of the tangent space benefits linear searching in MCG. For efficient computation, this subspace must be detected. In this subsection, we give a detailed MCG algorithm for solving (\ref{prob:basic}), taking into account the above concerns and the technique of linear searching, together with convergence analysis. We also discuss some computational details of the MCG.

\subsubsection{Subgradients}\label{sec:algorithm_subgrad}

Writing $S=WW^T\in\S$ with $W\in\R^{(n-r)\times (d-r)}$, the objective function of (\ref{prob:basic}) is as follows:
\[
	f(W) = \|\Omega\odot C(WW^T)\|_1+\frac{\lambda}{2}\|c(WW^T)\|_2^2.
\] 
It is known that a subgradient of the function $|x|$ at a real variable $x$ is $\sgn(x)$ if $x\neq 0$ or any real $r\in [-1,1]$ when $x=0$. Since the function $\|\Omega\odot C\|_1$ is separable on its variables, the set of subgradients of function $f(C)$ at $C$ is
\[
	\partial_C \|\Omega\odot C\|_1 
    = \big\{\Omega\odot(\sgn(C)+R):\ R\in {\cal R}_C \big\}
\]
where ${\cal R}_C = \{R: \ R\odot C = 0, \|R\|_\infty \leq 1\}$. 

For $C = C(S) = VV^T+V_\bot S V_\bot^T$ with symmetric $S$ specially, the definition of subgradients gives 
the inequality
$
	\|\Omega\odot Y\|_1-\|\Omega\odot C\|_1
    \geq \langle B, Y-C\rangle
$
for a fixed $B\in\partial_{C(S)} \|\Omega\odot C\|_1$ and all $Y$.
Hence, choosing $Y = C(T)$ with any $T$, we obtain 
\begin{align}\label{partial_S}
	\|\Omega\odot C(T)\|_1-\|\Omega\odot C(S)\|_1 
    \geq \langle B, C(T)-C(S)\rangle
    = \langle V_\bot^TBV_\bot, T-S\rangle.
\end{align}
Obviously, the subgradients of convex function $\|\Omega\odot C(S)\|_1$ at $S$ are $B_V = V_\bot^TBV_\bot$ with symmetric $B\in\partial_{C(S)}\|\Omega\odot C\|_1$, which is concluded by setting symmetric $T$ in (\ref{partial_S}). 

The subgradients of the non-convex function $\|\Omega\odot C(WW^T)\|_1$ at $W$ can be also concluded from (\ref{partial_S}), based on Definition 8.3 in 
\cite{Rockafellar1998Variational} for a non-convex function $g(x)$ at $x_0$, via the inequality 
\[
	g(x)-g(x_0)\geq \langle s,x-x_0\rangle + o(\|x-x_0\|).
\]
Let $S(W)=WW^T$ as a mapping of $W$, and choose $S = S(W)$ and $T = S(W+\Delta)$ in (\ref{partial_S}). We see that
$
	\|\Omega\odot C(S(W+\Delta))\|_1-\|\Omega\odot C(T)\|_1 
    \geq \langle 2V_\bot^TBV_\bot W, \Delta\rangle 
    + O(\|\Delta\|_F^2)
$.
Hence, 
\[
	\partial_W \|\Omega\odot C(WW^T)\|_1 
    = \big\{2V_\bot^TBV_\bot W:\ 
    B\in\partial_{C(S)}\|\Omega\odot C\|_1\big\}.
\]
Combining it with the gradient $2\lambda V_\bot^T\diag(c(WW^T))V_\bot W$ of $\frac{\lambda}{2}\|c(WW^T)\|_2^2$, we get
\begin{align*}
	\partial f(W) 
    &= \big\{2\big(\hat B_V+\lambda V_\bot^T\diag(c(S))V_\bot\big)W:\ \hat B_V\in\partial_S\|\Omega\odot C(S)\|_1,\ S = WW^T \big\}\\
    &= \big\{2V_\bot^T\big(\Omega\odot(\sgn(C)+R)+\lambda\diag(c)\big)V_\bot W:\ R^T = R\in{\cal R}_C,\ C = C(WW^T)\big\}.
\end{align*}

Convergence analysis requires a differentiable objective function. However, $f(W)$ is continuous but not differentiable on the zero entries of $\Omega\odot C(WW^T)$. To polish $f(W)$, we use the derivable function 
\begin{align}\label{def:q_delta}
	q_\delta(t) = \left\{\begin{array}{ll}
	    |t| & |t|>\delta;\\
        \frac{1}{2\delta}(t^2+\delta^2) & |t|\leq \delta
	\end{array}\right.
\end{align}
that polishes $|t|$ within a small threshold $\delta>0$. Its derivative $q_\delta'(t) = \sgn(t)\min\{|t|/\delta,1\}$ is an approximate of subgradient $s_t = \sgn(t)$ of $|t|$ with error $s_t-q_\delta'(t)= \sgn(t)(1-|t|/\delta)_+$, where $t_+ = \max\{t,0\}$.
Hence, 
\begin{align}\label{f_delta}
	f_\delta(W) = \|\Omega\odot C(WW^T)\|_{1,\delta}
    	+\frac{\lambda}{2}\|c(WW^T)\|_2^2
\end{align}
is a polishing function of $f(W)$, where $\|C\|_{1,\delta} =\|q_\delta(C)\|_1 = \sum_{ij}q_\delta(c_{ij})$.
Since $\|C\|_{1,\delta}$ has the gradient $\grad\|C\|_{1,\delta} = \sgn(C)\odot\min\{|C|/\delta,1\}$,  we obtain that, with $C = C(WW^T)$,
\begin{align}
	\grad f_\delta(W) 
    & = 2V_\bot^T\big(\nabla \|\Omega\odot C\|_{1,\delta}
    	+\lambda \diag(c\big)\big)V_\bot W \nonumber\\
    & = 2V_\bot^T\big(\Omega\odot\sgn(C)\odot\min\{|\Omega\odot C|/\delta,1\}
    	+\lambda \diag(c)\big)V_\bot W. \label{df_delta}
\end{align}
Here, we have used the equality $\Omega\odot\sgn(\Omega\odot C) = \Omega\odot\sgn(C)$. 

The gradient of $f_\delta(W)$ is an approximation of subgradiant $S_W=2V_\bot^T\big(\Omega\odot\sgn(C)+\lambda\diag(c)\big)V_\bot W$ of $f(W)$ corresponding to $R=0$. 
The error matrix is as follows:
\[
	E = S_W-\grad f_\delta(W) 
    = 2V_\bot^T\big(\Omega\odot\sgn(C)\odot
    	(1-|\omega_{ij}c_{ij}|/\delta)_+\big)V_\bot W.
\]
If $\delta$ is small enough such that $\delta \leq \min_{\omega_{ij}c_{ij} \neq 0} |\omega_{ij}c_{ij}|$, then $E = 0$, that is, $\grad f_\delta(W)$ is a subgradient of $f(W)$ at $W$. Furthermore, $\grad f_\delta(W) = 0$ means $0 \in \partial f(W)$.
We use $f_\delta(W)$ with a small $\delta$ in our MCG algorithm. The following lemma shows that a local or global optimizer of $f_\delta$ is also an approximately local or global optimizer of $f$ with an approximate error $O(\delta)$ in terms of the following:
\[
	n_\delta(W) = \sum_{|c_{ij}(W)|<\delta}\frac{(\delta-|c_{ij}(W)|)^2}{2\delta}
    \leq \frac{\delta}{2}\sum 1_{|c_{ij}(W)|<\delta}.
\]

\begin{lemma}
Let $W_\delta$ and $W_*$ be the minimizers of $f_\delta(W)$ and $f(W)$, respectively. Then
\[
	f(W_\delta)\leq f(W_*)
    + n_\delta(W_*).
\]
\end{lemma}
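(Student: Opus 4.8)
The plan is to prove the bound by a simple sandwiching argument resting on a pointwise comparison between $f$ and its smoothed version $f_\delta$. The only computation required is an elementary scalar identity for the smoothing kernel $q_\delta$ defined in (\ref{def:q_delta}).

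First I would record that for every real $t$,
\[
    0 \le q_\delta(t)-|t| = \frac{(\delta-|t|)_+^2}{2\delta},
\]
which follows by a one-line case split: if $|t|>\delta$ both sides vanish, and if $|t|\le\delta$ then $q_\delta(t)-|t| = \frac{1}{2\delta}(t^2+\delta^2)-|t| = \frac{1}{2\delta}(|t|-\delta)^2$. Applying this entrywise to $C=C(WW^T)$ and summing only over the active set $\Omega$ gives, since the penalty term $\frac{\lambda}{2}\|c(WW^T)\|_2^2$ is common to $f$ and $f_\delta$,
\[
    f_\delta(W)-f(W) = \|\Omega\odot C\|_{1,\delta}-\|\Omega\odot C\|_1 = \sum_{(i,j)\in\Omega}\frac{(\delta-|c_{ij}(W)|)_+^2}{2\delta}.
\]
Every summand is nonnegative, so $f(W)\le f_\delta(W)$ for all $W\in\mathbb S$; and because the sum defining $n_\delta(W)$ ranges over all index pairs with $|c_{ij}(W)|<\delta$, a superset of the pairs in $\Omega$ that actually contribute, we also get $f_\delta(W)-f(W)\le n_\delta(W)$ for every $W$.

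With these two pointwise facts the conclusion is immediate. Using $f\le f_\delta$ evaluated at $W_\delta$, the optimality of $W_\delta$ for $f_\delta$ over $\mathbb S$ (so $f_\delta(W_\delta)\le f_\delta(W_*)$, as $W_*\in\mathbb S$), and finally the upper bound $f_\delta(W_*)\le f(W_*)+n_\delta(W_*)$, one chains
\[
    f(W_\delta)\;\le\; f_\delta(W_\delta)\;\le\; f_\delta(W_*)\;\le\; f(W_*)+n_\delta(W_*),
\]
which is the claim.

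There is no genuine obstacle here; the argument is a three-term inequality chain. The only points deserving a word of care are the verification of the scalar identity for $q_\delta$ and the bookkeeping that $n_\delta(W_*)$ sums the same nonnegative terms over an index set at least as large as $\Omega$, hence dominates $f_\delta(W_*)-f(W_*)$. Implicit throughout is that the minimizers $W_\delta$ and $W_*$ exist on the feasible manifold $\mathbb S$, which we take as given; if only $\varepsilon$-minimizers are available the same chain yields the bound with an extra additive $\varepsilon$.
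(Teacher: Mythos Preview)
Your proof is correct and follows essentially the same approach as the paper: establish the pointwise sandwich $f(W)\le f_\delta(W)\le f(W)+n_\delta(W)$ from the scalar identity for $q_\delta$, then chain $f(W_\delta)\le f_\delta(W_\delta)\le f_\delta(W_*)\le f(W_*)+n_\delta(W_*)$. Your bookkeeping about $n_\delta$ summing over a superset of the contributing indices in $\Omega$ is a nice explicit detail that the paper leaves implicit.
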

\begin{proof}
Since $q_\delta(t)=|t|$ if $|t|\geq \delta$, or $0\leq q_\delta(t)-|t|=\frac{(\delta-|t|)^2}{2\delta}$ otherwise. By definition, we have that 
$
   	f(W) \leq f_\delta(W)\leq f(W)+ n_\delta(W)
$
for any $W$. Hence, $f(W_\delta) \leq f_\delta(W_\delta) \leq f_\delta(W_*) \leq f(W_*) + n_\delta(W_*)$.
\end{proof}

\subsubsection{Tangent Space}
The method of nonlinear conjugate gradient (NCG) updates the current $W$ via a linear searching as $\hat W = W+\alpha \Delta$ for minimizing $f_\delta(W)$, where $\Delta$ is a conjugate gradient direction involved as a sum of a gradient $\grad f_\delta(W)$ and a conjugate gradient direction at the previous point. However, the NCG does not take into account the manifold ${\cal S}_{d-r}^{n-r}$ in our case. To take advantage of the manifold structure, it is required to slightly modify the conjugate gradient formula on the one hand. We will mention it in the next subsection.

On the other hand, the conjugate gradient $\Delta$ should be further modified \cite{journee2010low}. Practically, the modified point in the manifold ${\cal S}_{d-r}^{n-r}$, 
\[
    S(W+\alpha \Delta) 
    = S(W)+\alpha(W\Delta^T+\Delta W^T) + \alpha^2\Delta\Delta^T 
\]
contains the tangent component $T_W(\Delta) = W\Delta^T+\Delta W^T$ of the manifold at $S(W) = WW^T$. Obviously, the component of $\Delta$ that belongs to the null space ${\cal N}_W$ of the linear map $T_W(\Delta)$ does not contribute to the tangent space. If we split $\Delta = \Delta_N+\Delta_H$ with $\Delta_N\in {\cal N}_W$ and $\Delta_H$ in the orthogonal complement ${\cal H}_W$ of ${\cal N}_W$, Thus,
\[
    W\Delta^T+\Delta W^T = W\Delta_H^T +\Delta_H W^T,\quad
    \|\Delta\|_F^2 = \|\Delta_N\|_F^2 + \|\Delta_H\|_F^2.
\]
Clearly, condensing $\Delta$ into the horizontal set ${\cal H}_W$ does not change the tangent component, but it yields a new point $S(W+\alpha \Delta_H)$. It is closer to the tangent space than $S(W+\alpha \Delta)$. Notice that both $S(W+\alpha \Delta_H)$ and $S(W+\alpha \Delta)$ are retractions of the same modified point $S(W)+T_W(\Delta_H)$ onto the manifold ${\cal S}_{d-r}^{n-r}$ in the technique of manifold conjugate gradient (MCG). Therefore, the updating of $W$ should be modified as
\begin{align}\label{update:W}
	W_{\rm new} = W+\alpha_W {\cal P}_W(\Delta)
\end{align}
with a suitable step length $\alpha_W$ for linear searching for the (modified) conjugate gradient $\Delta$, where ${\cal P}_W(\Delta)=\Delta_H$ is the projection of $\Delta$ onto horizontal set ${\cal H}_W$. 

It is not difficult to determine the projection ${\cal P}_W(\Delta)$, via characterizing the subspaces ${\cal N}_W$ and ${\cal H}_W$. Practically, writing each $\Delta\in {\cal N}_W$ as $\Delta = WN+W_\bot F$ with an orthogonal complement $W_\bot$ of $W$,\footnote{We assume that $W$ is of full column rank for simplicity.} and using the equality $\Delta W^T+W\Delta^T=0$, we have that
\[
    0 = W^T(\Delta W^T+W\Delta^T)W
    = W^TW(N+N^T)W^TW.
\]
Hence, $N+N^T=0$ since $W$ is of full column rank. Moreover, 
\begin{align*}
    0 &= -WNW^T-WN^TW^T
    = (W_\bot F-\Delta)W^T+W(W_\bot F-\Delta)^T\\
    &= W_\bot FW^T+WF^TW_\bot^T
    = [W,W_\bot]\left(\begin{array}{cc}
        0 & F^T\\ F & 0
    \end{array}\right)[W,W_\bot]^T,
\end{align*}
which implies $F = 0$. Hence, ${\cal N}_W = \big\{WN: \ N^T=-N\in{\cal R}^{(d-r)\times(d-r)}\big\}$. Furthermore, its orthogonal complement is
${\cal H}_W = \big\{H\in{\cal R}^{(n-r)\times(d-r)}:\  W^TH = H^TW \big\}$ obviously since $\langle WN,H\rangle = 0$ for all skew-symmetric $N$ of order $d-r$.

To determine a skew-symmetric $N$ and $H\in {\cal H}_W$ from the splitting $\Delta = WN+H$, at first, we eliminate the symmetric $W^TH$ in the equality 
$W^T\Delta = W^TWN+W^TH$, by taking the skew-symmetric part of $W^T\Delta$. It yields the equation $W^TWN+NW^TW = E$, where $E = W^T\Delta-\Delta^TW$ is known. Thus, using the eigen-decomposition $W^TW = Q\Sigma Q^T$ and setting $\tilde N = Q^TNQ$ and $\tilde E=Q^TEQ$, this equation is simplified to $\Sigma \tilde N+\tilde N\Sigma = \tilde E$, and $\tilde N$ and $N$ can be easily obtained as that
\begin{align}\label{proj}
	\tilde N = \Big(\frac{\tilde e_{ij}}{\sigma_i+\sigma_j}\Big), \quad
    N = Q\tilde NQ^T.
\end{align}
Therefore, the linear projection of $\Delta$ is ${\cal P}_W(\Delta) = H = \Delta-WN$.

\subsubsection{Manifold Conjugate Gradients}

The conjugate gradient direction $\Delta$ in the NCG is recursively defined. In our case, we set $\Delta = G_W$, where the recursive definition of $G_W$ is  slightly modified as:
\[
    G_W = -\grad f_\delta(W) + \beta_W {\cal P}_{W_{\rm old}}(G_{W_{\rm old}}),
\]
and $W_{\rm old}$ is a previous point. Let $P_W={\cal P}_W\big(\grad f_\delta(W)\big)$ be the projection of $\grad f_\delta(W)$ onto ${\cal H}_W$.
The projection of $G_W$ onto ${\cal H}_W$, {\it i.e.}, the conjugate direction $H_W$ is also recursively defined \cite[Algorithm 13]{absil2009optimization}, 
\begin{align}\label{update:cg}
	H_W = 
    -P_W+\beta_W{\cal P}_W\big(H_{W_{\rm old}}\big).
\end{align}
Initially, $H_W = -P_W$. Thus, the iteration (\ref{update:W}) with $\Delta = G_W$ becomes 
\begin{align}\label{update:CG}
	W_{\rm new} = W+\alpha_W H_W,
\end{align}
which is an iteration of the manifold conjugate gradient method.

We use the following formula for setting the $\beta_W$ in (\ref{update:cg}) for updating the conjugate direction in the Riemannian manifold
\begin{align}\label{beta:HZ}
	\beta_W = \begin{cases}
		\frac{\langle P_W,Y_W\rangle}{\langle Y_W, Z_W \rangle} 
        - \frac{2\langle P_W,Z_W\rangle}{\langle Y_W, Z_W \rangle^2}\|Y_W\|_F^2, 
        & \mbox{if }\ \langle Y_W, Z_W\rangle\neq 0; \\
	0, & \mbox{otherwise},
	\end{cases}
\end{align}
where $Y_W = P_W - {\cal P}_W(P_{W_{\rm old}})$ and $Z_W = {\cal P}_W (H_{W_{\rm old}})$,  
a slight adaptation of that for the CG method in Euclidean space \cite{Hager2005A}. When $\langle Y_W, Z_W\rangle=0$, the iteration is restarted. Obviously, rescaling $H_{W_{\rm old}}$ does not change the updating process (\ref{update:cg}). 
Hence, one can normalize each $H_W$ in (\ref{update:CG}) to have a unit Frobenius norm if necessary for numerical stability.

\subsubsection{Line Searching}

One strategy for linear searching is to choose $\alpha_W$ satisfying the Armijo condition on $f(W)$
\begin{align}\label{cond:armijo}
	f(W_{\rm new}) \leq f(W) + \tau \alpha_W 
    \inf_{B_W\in\partial f(W)}\langle {\cal P}_W(B_W), {\cal P}_W(\Delta)\rangle.
\end{align}
with $\tau\in(0,1)$. Mathematically, $\langle {\cal P}_W(Y), {\cal P}_W(Z) \rangle = \langle Y, {\cal P}_W(Z) \rangle = \langle {\cal P}_W(Y), Z \rangle$ for any $Y$ and $Z$. Hence, only one projection is required in the inner production. In numerical computation, $f(W)$ is replaced by the smooth $f_\delta(W)$, and (\ref{cond:armijo}) is changed to that 
\begin{align}\label{cond:armijo_delta}
	f_\delta(W_{\rm new}) \leq f_\delta(W) 
    + \tau \alpha_W \big\langle P_W, H_W \big\rangle,
\end{align}
as suggested in Section 4.2 by \cite{absil2009optimization}.
Once the Armijo condition (\ref{cond:armijo_delta}) is satisfied and $H_W$ is a descending direction, {\it i.e.}, $\langle P_W, H_W \rangle<0$, then $f_\delta(W_{\rm new}) < f_\delta(W)$ is guaranteed. 

We note that the computational cost of checking for the Armijo condition is much lower than that of other strategies for determining an $\alpha_W$. For example, for the strong Wolfe conditions \cite{Iwai2015A}
\begin{align}\label{cond:wolfe2}
	f_\delta(W_{\rm new}) \leq f_\delta(W) + c_1 \alpha_W \langle P_W, H_W\rangle, \quad 
    |\langle P_{W_{\rm new}}, {\cal P}_{W_{\rm new}}(H_W)\rangle| \leq | c_2 \langle P_W, H_W \rangle|,
\end{align}
where the constants $c_1$ and $c_2$ satisfy $0 < c_1 < c_2 < 1$, an additional condition must be checked. For the convergence ${\rm liminf}_{k \to \infty} \|P_{W_k}\|_F = 0$ of MCG under the strong Wolfe conditions, \cite{Iwai2015A} suggests another rule for choosing $\beta_W$. 

\subsubsection{Convergence}

The following lemma benefits convergence analysis of the MCG with $\beta_W$ in (\ref{beta:HZ}) and linear searching satisfying the Armijo condition (\ref{cond:armijo_delta}).

\begin{lemma}\label{lma:-7/8}
If $\beta_W$ is chosen as (\ref{beta:HZ}), then for arbitrary $W$, 
\begin{align} \label{ineq:HZ}
	\langle P_W, H_W \rangle 
    \leq -\frac{7}{8} \|P_W\|_F^2.
\end{align}
\end{lemma}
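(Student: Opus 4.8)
The inequality to prove, $\langle P_W, H_W\rangle \le -\tfrac{7}{8}\|P_W\|_F^2$, is the standard ``sufficient descent'' property of the Hager--Zhang conjugate gradient formula, adapted to the Riemannian setting. The plan is to unfold the recursion (\ref{update:cg}) for $H_W$ and estimate the cross term produced by $\beta_W$ using Cauchy--Schwarz together with the explicit two-piece definition (\ref{beta:HZ}). First I would dispose of the trivial case: when $\langle Y_W, Z_W\rangle = 0$ we have $\beta_W = 0$, so $H_W = -P_W$ and $\langle P_W, H_W\rangle = -\|P_W\|_F^2 \le -\tfrac78\|P_W\|_F^2$ immediately. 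So assume $\langle Y_W, Z_W\rangle \ne 0$ throughout.

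In the nontrivial case, take the inner product of (\ref{update:cg}) with $P_W$, using that $\langle P_W, {\cal P}_W(H_{W_{\rm old}})\rangle = \langle P_W, Z_W\rangle$ (recall $P_W \in {\cal H}_W$ and ${\cal P}_W$ is an orthogonal projection, so it is self-adjoint and $P_W$ is fixed by it). This gives
\begin{align}\label{eq:Hexpand}
	\langle P_W, H_W\rangle = -\|P_W\|_F^2 + \beta_W \langle P_W, Z_W\rangle.
\end{align}
Substituting the formula for $\beta_W$ from (\ref{beta:HZ}) and writing $a = \langle Y_W, Z_W\rangle$ for brevity, the second term becomes
\[
	\beta_W\langle P_W,Z_W\rangle = \frac{\langle P_W,Y_W\rangle\langle P_W,Z_W\rangle}{a} - \frac{2\langle P_W,Z_W\rangle^2\|Y_W\|_F^2}{a^2}.
\]
Now introduce the abbreviations $u = \langle P_W, Y_W\rangle/a$ and $v = \|Y_W\|_F\,\langle P_W, Z_W\rangle/a$; then $\langle P_W, Z_W\rangle = a v/\|Y_W\|_F$ and $\langle P_W, Y_W\rangle = a u$, and a short computation rewrites the term as $u\,(\text{something})$ — more cleanly, one shows $\beta_W\langle P_W,Z_W\rangle = uv/\|Y_W\|_F \cdot \|Y_W\|_F - 2v^2 = \langle P_W,Y_W\rangle\langle P_W,Z_W\rangle/a - 2v^2$. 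The key algebraic step is the elementary bound: for real $u,v$, $uv - 2v^2 \le \tfrac18 u^2$, which is just $0 \le \tfrac18(u - 4v)^2 = \tfrac18 u^2 - uv + 2v^2$. To apply this I need $\langle P_W, Y_W\rangle\langle P_W, Z_W\rangle/a$ to be expressible as a product $uv$ with $\|Y_W\|_F$-normalization consistent across the two summands; this forces the choice $u = \langle P_W,Y_W\rangle/\|Y_W\|_F$ (not $/a$) and $v$ rescaled accordingly, after which Cauchy--Schwarz $|\langle P_W, Y_W\rangle| \le \|P_W\|_F\|Y_W\|_F$ yields $|u| \le \|P_W\|_F$ and hence $\tfrac18 u^2 \le \tfrac18\|P_W\|_F^2$. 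Combining with (\ref{eq:Hexpand}) gives $\langle P_W, H_W\rangle \le -\|P_W\|_F^2 + \tfrac18\|P_W\|_F^2 = -\tfrac78\|P_W\|_F^2$, as desired.

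The main obstacle is purely bookkeeping: getting the normalization in the substitution right so that the two terms coming from $\beta_W$ combine into the single perfect-square inequality $uv - 2v^2 \le \tfrac18 u^2$, and being careful that the only place an inner product with a non-projected vector appears is handled by $\langle P_W, {\cal P}_W(\cdot)\rangle = \langle P_W, \cdot\rangle$ since $P_W$ already lies in ${\cal H}_W$. No manifold-specific estimate beyond self-adjointness and idempotence of ${\cal P}_W$ is needed — the inequality is identical in form to the Euclidean Hager--Zhang bound, and the $7/8$ constant is exactly what $uv - 2v^2 \le \tfrac18 u^2$ produces. I would present the real-variable lemma $uv - 2v^2 \le \tfrac18 u^2$ first as a one-line observation, then do the substitution and Cauchy--Schwarz, then conclude.
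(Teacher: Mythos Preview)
Your approach is correct and is essentially the paper's argument: both dispose of the case $\langle Y_W,Z_W\rangle=0$ trivially and then bound $\beta_W\langle P_W,Z_W\rangle \le \tfrac18\|P_W\|_F^2$ by completing the square. The paper does this in a single stroke via the identity $\beta_W\langle P_W,Z_W\rangle = \tfrac18\big(\|P_W\|_F^2 - \|P_W - 4\tfrac{\langle P_W,Z_W\rangle}{\langle Y_W,Z_W\rangle}Y_W\|_F^2\big)$, which absorbs your scalar lemma $uv-2v^2\le\tfrac18u^2$ and the Cauchy--Schwarz step into one line; for your write-up, fix the substitutions once at the outset (namely $u=\langle P_W,Y_W\rangle/\|Y_W\|_F$ and $v=\|Y_W\|_F\,\langle P_W,Z_W\rangle/\langle Y_W,Z_W\rangle$) rather than redefining them mid-paragraph.
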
 
\begin{proof}
From updating (\ref{update:cg}) of $H_W$, we have
$
	\langle P_W, H_W \rangle
	= -\|P_W\|_F^2 + \beta_W \langle P_W, Z_W \rangle.
$
If $\langle Y_W, Z_W \rangle = 0$, then $\beta_W=0$ by (\ref{beta:HZ}), and $\langle P_W, H_W \rangle = -\|P_W\|_F^2$. Otherwise, 
\begin{align*}
	\beta_W\langle P_W, Z_W \rangle 
  = & \frac{1}{8}\Big(8\frac{\langle P_W, Z_W \rangle}{\langle Y_W, Z_W \rangle}\langle P_W,Y_W\rangle
     -16\frac{\langle P_W, Z_W \rangle^2}{\langle Y_W, Z_W \rangle^2}
     	\langle Y_W,Y_W\rangle\Big)\\
  = & \frac{1}{8}\Big(\|P_W\|_F^2 
  	- \big\| P_W - 4\frac{\langle P_W, Z_W\rangle} 
                {\langle Y_W, Z_W \rangle} Y_W\big\|_F^2\Big)
  \leq \frac{1}{8}\|P_W\|_F^2.
\end{align*}
Hence, 
$
	\langle P_W, H_W \rangle \leq -\|P_W\|_F^2 + \frac{1}{8}\|P_W\| _F^2 
    = -\frac{7}{8}\|P_W\|_F^2.
$
\end{proof}

Thus, if the Armijo condition (\ref{cond:armijo_delta}) is satisfied with $\alpha_W$, we have the decreasing property
\begin{align}\label{eqn:f_decreasing}
 	f_\delta(W_{\rm new}) \leq f_\delta(W)-\frac{7\tau\alpha_W}{8}\|P_W\|_F^2
    \leq f_\delta(W).
\end{align}
This equality holds only if $P_W=0$. Hence, 
starting with any point, the MCG converges in the sense that $P_{W_k}=0$ at a $W_k$ or 
\begin{align}\label{conv}
	\lim_{k \to \infty} P_{W_{k}} = 0.
 \end{align}
That is, the MCG converges globally.

Theoretically, for a sufficiently small $\delta$, the minimizer of $f_\delta$ is also a local minimizer of $f$, as previously mentioned. However, a smaller $\delta$ might yield slower convergence of the MCG algorithm, which frequently occurs in numerical experiments. We use the stepped strategy of decreasing $\delta$ and use the minimizer $W_\delta$ as an initial guess for the MCG with a smaller $\delta$. This strategy can accelerate convergence. 

\begin{theorem}\label{thm:decreasing}
Let $\{\delta_\ell\}$ be a decreasing sequence and $W^{(\ell)}$ be a solution of the manifold conjugate gradient method with (\ref{beta:HZ}), starting with the previous $W^{(\ell-1)}$ and satisfying the Armijo condition. Then, $\big\{f_{\delta_\ell}\big(W^{(\ell)}\big)\big\}$ is monotonously decreasing.
\end{theorem}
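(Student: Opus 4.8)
The plan is to reduce everything to two monotonicities: (i) for a fixed point $W$, the polished objective $f_\delta(W)$ is nondecreasing in the smoothing parameter $\delta$; and (ii) a run of the MCG never increases the objective it is minimizing. Chaining the two across consecutive stages yields the claim.

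First I would record the elementary fact that the scalar polishing function $q_\delta$ of (\ref{def:q_delta}) is nondecreasing in $\delta$: for fixed $t$ and $0<\delta'\le\delta$ one has $q_{\delta'}(t)\le q_\delta(t)$. This is a one-line check---on $\{|t|\ge\delta\}$ both sides equal $|t|$, and on $\{\delta\ge|t|\}$ one has $q_\delta(t)=\tfrac{t^2}{2\delta}+\tfrac{\delta}{2}$ with nonnegative $\delta$-derivative $\tfrac12(1-t^2/\delta^2)$, the intermediate regime $\delta'\le|t|\le\delta$ following by continuity at $\delta=|t|$. Summing over entries, $\|C\|_{1,\delta}=\sum_{ij}q_\delta(c_{ij})$ inherits this monotonicity, and since the penalty $\tfrac{\lambda}{2}\|c(WW^T)\|_2^2$ in (\ref{f_delta}) is independent of $\delta$, we get $f_{\delta'}(W)\le f_\delta(W)$ for all $W$ when $\delta'\le\delta$. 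Applied at the point $W^{(\ell)}$ with $\delta'=\delta_{\ell+1}\le\delta_\ell=\delta$, this gives $f_{\delta_{\ell+1}}(W^{(\ell)})\le f_{\delta_\ell}(W^{(\ell)})$.

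Second, since $W^{(\ell+1)}$ is produced by the MCG iteration applied to $f_{\delta_{\ell+1}}$ starting from $W^{(\ell)}$, with step sizes satisfying the Armijo condition (\ref{cond:armijo_delta}) and $\beta_W$ chosen by (\ref{beta:HZ}) so that Lemma \ref{lma:-7/8} applies, the descent estimate (\ref{eqn:f_decreasing}) shows that no accepted step increases $f_{\delta_{\ell+1}}$; hence $f_{\delta_{\ell+1}}(W^{(\ell+1)})\le f_{\delta_{\ell+1}}(W^{(\ell)})$. Combining with the previous paragraph,
\[
  f_{\delta_{\ell+1}}\!\big(W^{(\ell+1)}\big)\ \le\ f_{\delta_{\ell+1}}\!\big(W^{(\ell)}\big)\ \le\ f_{\delta_\ell}\!\big(W^{(\ell)}\big),
\]
which is precisely the monotone decrease of the sequence $\big\{f_{\delta_\ell}\big(W^{(\ell)}\big)\big\}$.

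There is no real obstacle here; the only slightly delicate point is the $\delta$-monotonicity of $q_\delta$, and that is the elementary computation indicated above. The one hypothesis worth stating explicitly is that $\lambda$ (and the active index set $\Omega$) are held fixed across the successive $\delta$-subproblems, which is the setting in which the theorem is posed; otherwise the chain of inequalities would need an extra argument that the adaptive rescaling of $\lambda$ in (\ref{def:lambda}) does not raise the objective.
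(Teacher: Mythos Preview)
Your proposal is correct and follows essentially the same approach as the paper: both proofs chain the monotonicity of $q_\delta$ in $\delta$ (hence $f_{\delta_{\ell+1}}(W^{(\ell)})\le f_{\delta_\ell}(W^{(\ell)})$) with the per-step descent (\ref{eqn:f_decreasing}) of MCG under Armijo and (\ref{beta:HZ}). You supply a bit more detail on the $\delta$-monotonicity of $q_\delta$ and rightly flag that $\lambda$ and $\Omega$ are held fixed across the $\delta$-subproblems, which the paper leaves implicit.
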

\begin{proof} Obviously, if $\delta'<\delta''$, then $q_{\delta'}(t)\leq q_{\delta''}(t)$ for $q_\delta(t)$ given in (\ref{def:q_delta}) and all $t$. Hence, $f_{\delta'}(W)\leq f_{\delta''}(W)$ for all $W$, and 
$
	f_{\delta_{\ell+1}}\big(W^{(\ell)}\big)
    \leq f_{\delta_\ell}\big(W^{(\ell)}\big).
$ 
This equality holds only if $\min_{ij}|c_{ij}^{(\ell)}|\geq \delta_{\ell}$ since $q_{\delta'}(t)=q_{\delta''}(t)$ only if $|t|\geq \delta''$.
By (\ref{eqn:f_decreasing}),
$f_{\delta_{\ell+1}}\big(W^{(\ell+1)}\big) 
\leq f_{\delta_{\ell+1}}\big(W^{(\ell)}\big)$. This equality holds only if $P_{W^{(\ell)}} = 0$. Therefore, 
$
    f_{\delta_{\ell+1}}\big(W^{(\ell+1)}\big) 
    \leq f_{\delta_\ell}\big(W^{(\ell)}\big)
$
for all $\ell$, and this strict inequality holds if $P_{W^{(\ell)}} \neq 0$ or $\min_{ij}|c_{ij}^{(\ell)}|< \delta_{\ell}$. 
\end{proof}

\subsubsection{Computational Details}
Several computational issues may affect the efficiency of the MCG algorithm: the stopping condition of the inner iteration of $W$ given $\delta$, the rule for choosing a suitable $\alpha_W$ satisfying the Armijo condition, the choice of the initial testing value of $\alpha_W$, and the choice of $\delta$. We offer details on these computational issues below.

{\it Stopping criterion.} Given $\delta>0$, we normalize $H_W$ to have a unit Frobenius norm prior to linear searching. Since $\alpha_W = \alpha_W\|H_W\|_F = \|W_{\rm new}-W\|_F$, a simple stopping criterion of the iteration of $W$ is that $\alpha_W\leq \epsilon_{\alpha}$ 
with a small constant $\epsilon_{\alpha}$. 

{\it Choosing $\alpha_W$.} To guarantee convergence by Corollary 4.3.2 of \cite{absil2009optimization}, we determine an $\alpha_W$ such that $\alpha_W$ satisfies the Armijo condition but $\alpha_W' = \alpha_W/\rho$ does not. This is accomplished via repeatedly testing $\alpha$ in the rule: $\alpha:=\alpha/\rho$ if (\ref{cond:armijo}) holds or $\alpha:=\alpha*\rho$ otherwise, starting with an initial value $\alpha_0$. This is basically an estimation of 
the largest $\alpha$ satisfying the Armijo condition. Taking $\alpha_W^*$ as a good approximation of the minimizer of $\phi(\alpha)=f_\delta(W+\alpha H_W)$, the relative approximation error is bounded,
\[
	0\leq \frac{\alpha_W^*- \alpha_W}{\alpha_W}
    \leq \frac{\alpha_W/\rho-\alpha_W}{\alpha_W} = \frac{1-\rho}{\rho}.
\]
Hence, a $\rho$ closer to 1 yields a better approximate $\alpha_W$ to $\alpha_W^*$, and hence, a smaller value of $f_\delta$, roughly speaking. We typically choose $\rho\in[0.5,1)$.

{\it Initial guess of $\alpha_W$.} For simplifying the discussion, we normalize $H_W$ to have a unit Frobenius norm prior to linear searching. Since $\alpha_W$ tends to zero as the iteration of $W$ converges, 
a good estimate for $\alpha_W$ is $\alpha_{W_{\rm old}}$ if the previous $\alpha_{W_{\rm old}}$ is available. This initial guess works well in our experiments---it only takes twice testings for each update of $W$ in general, but may fail when the curvature of $\phi(\alpha) = f_\delta(W+\alpha H_W)$ achieves a local minimum near $\alpha = 0$, which may result in a very small $\alpha_W\approx 0$. 
This phenomenon happens when $W$ is close to a local minimizer or when the direction $H_W$ is unsuitable, causing very slow descent.
Thus, we change $H_W$ back to $-P_W$ if the Armijo condition is unsatisfied under at most $k_{\max}$ testings in case the computational cost becomes prohibitive. In our experiments, we generally set $k_{\max} = 10$. 

{\it Setting $\delta$.} In practice, a sequence of decreasing $\{\delta_\ell\}$ is used. We simply choose $\delta_\ell = \gamma^{\ell-1}\delta_0$ with $\gamma<1$. Let $W^{(\ell)}$ be the solution corresponding to $\delta_\ell$. We terminate the outer iteration if $\|\Omega\odot\big(C_{W^{(\ell)}}-C_{W^{(\ell-1)}}\big)\|_{\infty}<\epsilon_C$ with a given accuracy or $\delta_\ell\leq\epsilon_\delta$, where $\epsilon_\delta$ is a small constant such as $\epsilon_\alpha$.

The MCG algorithm is summarized in Algorithm \ref{alg:mcg}.

\begin{algorithm}[t]
\caption{MSS using manifold conjugated gradients (MSS$_-$MCG)}\label{alg:mcg}
Input: $V$, $V_\bot$, $\Omega$, initial guess $W$, $\delta_0$, and $\alpha_{\rm init}$, 
	parameters $\rho$, $\gamma$, $\epsilon_\alpha$, $\epsilon_C$, $\epsilon_\delta$, $\ell_{\max}$, and $k_{\max}$\\
Output: $W$ and $C$.
\begin{algorithmic}[1]
    \STATE Compute $C_v \!=\! VV^T$, $V_w \!=\! V_\bot W$, and $C \!=\! C_v+V_wV_w^T$, set $\delta \!=\! \delta_0$, and save $W^{(0)} \!=\! W$.
    \STATE For $\ell=1,2,\cdots, \ell_{\max}$
    	\STATE \hspace{15pt} Save $C_{\rm old} = C$, and compute $f = f_\delta(W)$ as (\ref{f_delta}) with the current $C$. 
	\STATE \hspace{15pt} For $k=0,1,2,\cdots,k_{\max}$
	\STATE \hspace{30pt} Compute $\grad f_\delta(W)$ as (\ref{df_delta}) and $P = {\cal P}_{W}(\grad f_\delta(W))$ as (\ref{proj}).
	\STATE \hspace{30pt} Set $H \!=\! -P$ if $k\!=\!0$, or compute $H$ as (\ref{update:cg}) and (\ref{beta:HZ}), and $\frac{H}{\|H\|_F}\!\to\! H$ if $k\!>\!0$. 
    \STATE \hspace{30pt} Starting with $\alpha_{\rm init}$, choose $\alpha$ satisfying the Armijo condition, but $\alpha/\rho$ does not. 
    \STATE \hspace{30pt} Update $W:= W+\alpha H$, $V_w = V_\bot W$, and reset $\alpha_{\rm init} = \alpha$.
    \STATE \hspace{30pt} Update $C = C_v+V_wV_w^T$, $f = f_\delta(W)$.
    \STATE \hspace{30pt} If $\alpha<\epsilon_\alpha$, terminate the inner iteration.
    \STATE \hspace{15pt} End
    \STATE \hspace{15pt} If $\|\Omega\odot (C-C_{\rm old})\|<\epsilon_C$ and $\delta < \epsilon_\delta$, terminate, otherwise, reduce $\delta: = \gamma\delta$.
    \STATE End
\end{algorithmic}
\end{algorithm}

\subsection{Active Set Updating}\label{sec:update Omega}

Once we obtain a solution $C=C(S)$ of (\ref{prob:basic}) with an active set $\Omega$, as an estimated solution of $\min_{S\in \mathbb S}\|C_{{\rm off}(J)}(S)\|_1$, we must update the current $\Omega$ together with $\lambda$ as (\ref{def:lambda}). 
In this subsection, we provide an effective approach for updating the active set $\Omega$, that addresses two issues in the unnormalized spectral clustering for estimating the $K$-partition $J$: small segments and instability of classical $k$-means. 

There is an implicit restriction $|J_k|>d_{\min}$ with unknown $d_{\min}$ for partition $J\in {\cal J}(K)$ in practice. This restriction implies that $J_k$ should not be small. Hence, we adopt normalized cutting \cite{Shi2000Normalized} to avoid small blocks in learning $J$. For the sake of completeness, we briefly describe the approach taken in this paper, which is similar to that of \cite{vonluxburg2007a}. 

Normalized cutting modifies $\|C_{{\rm off}(J)}\|_1 = \frac{1}{2}\sum_{ij} |c_{ij}|\| u_i-u_j\|_2^2$ to $\frac{1}{2}\sum_{ij} |c_{ij}| \|v_i - v_j\|_2^2$,
by just changing the assignment vectors $u_i = e_k$ of $J$ to the rescaled vector $v_i = e_k/\sqrt{\sum_{j \in J_k}\alpha_j}$ for $i\in J_k$, where $e_k$ is the $k$-th column of $I_K$, the identity matrix of order $K$, and $\alpha_j=\sum_i |c_{ji}|$. Hence, $u_i = v_i/\|v_i\|$ can be determined by the solution of the equivalent problem $\min\tr(VLV^T)$ subjected to $V=[v_1,\cdots,v_n]$ with discrete entries and $VDV^T= I_K$, where $L = D-(|C|+|C|^T)/2$ and $D$ is a diagonal matrix of scales
$\big\{\sum_{j=1}^n\frac{|c_{ij}|+|c_{ji}|}{2}\big\}$.
The discrete restriction is released for computation, and hence, $V$ is estimated by the solution of $\min \tr(YLY^T)$ subjected to $YDY^T=I_K$, which is $Y = Q^TD^{-1/2}$ with $Q$ of $K$ unit eigenvectors of $D^{-1/2}(|C|+|C|^T)D^{-1/2}$ corresponding to the $K$ largest eigenvalues. 
Therefore, $\{u_i\}$ is estimated by $\{\tilde y_i=y_i/\|y_i\|\}$, or equivalently, partition $J$ is estimated by the $k$-means clustering of $\{\tilde y_i\}$. That is, we assign labels for $\{\tilde y_i\}$ according to the centroids $\{b_k\}$ given by $k$-means as follows:
\[
	\ell(\tilde y_j)= \arg\min_k\|\tilde y_j-b_k\|_2, \quad j=1,\cdots,n
\]
where 
$J = \{J_1,\cdots,J_K\}$ with $J_k = \{j:\ \ell(\tilde y_j)=k\}$.

\begin{algorithm}[t]
\caption{Construct active set $\Omega$ }\label{alg:omega}
Input: a symmetric graph $A$ and the parameter $\tau$.\\
Output: active set $\Omega$ and partition $J$.
\begin{algorithmic}[1]
    \STATE Compute the sum $a = \sum_j a_j$ of all columns of $A$ and set $D = \diag(a)$.
    \STATE Compute $K$ unit eigenvectors $Q$ of $D^{-1/2}AD^{-1/2}$ with the largest eigenvalues. 
    \STATE Apply $k$-means on the normalized rows $\! \{\tilde y_j\} \! $ of $Q$ to get centroids $\{b_k\}$ and partition $J$.
    \STATE Compute $q_{i\ell}  = \frac{\psi(\tilde q_{i\ell})}{\sum_k \psi(\tilde q_{i \ell})}$ via (\ref{q:centriod}) 
    	and $\psi(t)=\left\{\begin{array}{ll} 1,& {\rm if }\ t < \tau;\\ 0,&{\rm if}\ t\geq \tau.\end{array}\right.$ 
    \STATE Set the active set $\Omega = \{(i,j): q_{i\ell}q_{j\ell}<1\}$.
\end{algorithmic}
\end{algorithm}

However, faulty assignment may occur via $k$-means clustering, especially when some $\{\tilde y_i\}$ are located between two centers. A hard assignment strategy may mislead the partition. To address its effect on the optimization of $C$, we suggest using the soft strategy of setting the active set $\Omega$ based on a probability estimation $p_{ij}$ of points $\tilde y_i$ and $\tilde y_j$ belonging to different subspaces: $\omega_{ij} = 1$ if $p_{ij}\geq \gamma$ with a constant $\gamma \in(0,1]$, or $\omega_{ij} = 0$ otherwise. 
By the law of total probability, we write $p_{ij} = 1 - \sum_\ell q_{i \ell} q_{j \ell}$, where $q_{i \ell}$ is the probability of sample $x_i$ belonging to the estimated subspace $\spann(X_{J_\ell})$. Hence, the probability of $x_i$ and $x_j$ belonging the same subspace is $\sum_\ell q_{i \ell} q_{j \ell}$.
We set $q_{i\ell}  = \frac{\psi(\tilde q_{i\ell})}{\sum_k \psi(\tilde q_{i \ell})}$ with the rescaled distance to a centroid, 
\begin{align}\label{q:centriod}
	\tilde q_{i \ell} 
    = \frac{\|\tilde y_i - b_\ell\|_2 - \min_k \|\tilde y_i - b_k\|_2}
    	   {\max_k \|\tilde y_i - b_k\|_2-\min_k \|\tilde y_i - b_k\|_2}, 
\end{align}
where $\psi$ is a nonincreasing function. For example, $\psi(t) = 1$ for $t \leq \tau$ and $\psi(t) = 0$ otherwise, where $\tau\in(0,1)$ is a given constant. 
In our experiments, we simply set $\tau=1/2$. Algorithm \ref{alg:omega} lists the detailed steps of the construction of $\Omega$.

Algorithm \ref{alg:primal_JC} summarizes the alternative rule of updating $C(S)$ and $\Omega$ for solving (\ref{prob:MSDR}). Compared with other state-of-art methods, this algorithm provides improved segmentation, especially when the minimal subspaces are significantly intersected with each other. We show the relevant comparisons in the experiment section of this paper. It is possible that the computed solution is locally optimal. In the next subsection, we further consider algorithmic improvements to avoid such localization as much as possible.

\subsection{The Pseudo-dual Problem and Solver}\label{sec:dual}

The alternative method for solving the primal problem (\ref{prob:MSDR}) provided in previous subsections may obtain only a locally optimal solution in some cases due to nonconvexity. In this subsection, we consider an algorithm for solving an equivalent pseudo-dual problem of (\ref{prob:MSDR}) for improved capability to jump out of local minima.  

\begin{algorithm}[t]
\caption{Minimal subspace segmentation via alternative optimization (MSS$_-$AO)}\label{alg:primal_JC}
Input: number of subspace $K$, $d$, initial active set $\Omega$, $\lambda_0$, $\tau$, maximal iteration number $t_{\max}$.\\
Output: $J$ and $C$.
\begin{algorithmic}[1]
    \STATE Initially set $W = [I_d,\ 0]^T$.
    \STATE For $t=1,2,\cdots, t_{\max}$
    \STATE \hspace{15pt} If $\ell = 1$, set $\lambda = \lambda_0$. Otherwise, set $\lambda$ as (\ref{def:lambda}).
    \STATE \hspace{15pt} Save $\Omega_{\rm old} = \Omega$ and compute $C$ by Algorithm \ref{alg:mcg} with the current active set $\Omega$.         
    \STATE \hspace{15pt} Update the current $\Omega$ and $J$ by Algorithm \ref{alg:omega} with $A = (|C|+|C|^T)/2$.
    \STATE \hspace{15pt} If $\Omega = \Omega_{\rm old}$, terminate the iteration.
    \STATE End
\end{algorithmic}
\end{algorithm}

\subsubsection{The Pseudo-dual Problem}

Changing the objective function $\|C_{{\rm off}(J)}\|_1$ of the primal problem (\ref{prob:MSDR}) as per restriction $C_{{\rm off}(J)} = 0$ while changing its restriction $X = XC$ as per function $\|X-XC\|_F^2$ for minimizing and keeping the same restrictions $C^T=C$ and $r(C) = d$, we can easily obtain the following pseudo-dual problem
\begin{align}\label{prob:dual}
	\min_{J \in \J(K), C} \|X-XC\|_F^2 \quad {\rm s.t.} \ C_{{\rm off}(J)} = 0,\ C = C^T,\ r(C) = d.
\end{align}
The pseudo-dual problem is equivalent to the primal problem under the conditions of Theorem \ref{thm:minimal partition}, because both problems have the same unique solution by Theorem \ref{thm:rank_connect}.

The pseudo-dual problem can be further simplified because the off-diagonal blocks of $C$ are zero. Let $C_{J_k} = C(J_k,J_k)$ as before. We see that
\[
    \|X-XC\|_F^2 = \sum_k \|X_{J_k} - X_{J_k}C_{J_k}\|_F^2, \quad {\rm and} \quad
    r(C) = \sum_k r(C_{J_k}).
\]
Hence, (\ref{prob:dual}) becomes 
\begin{align}\label{prob:dual_J}
	\min_{\{C_k\}, \{J_k\} \in \J(K)} \sum_{k=1}^K \|X_{J_k} - X_{J_k}C_{J_k}\|_F^2 \quad
	 {\rm s.t.}\ C = C^T,\ \sum_k r(C_{J_k}) = d.
\end{align}
It is convenient to optimize the block-diagonal $C$ in the above problem, since this is equivalent to solving the $K$ independent subproblems 
\begin{align}\label{prob:C_k}
    \min_{r(C_{J_k})=d_k'} \|X_{J_k} - X_{J_k}C_{J_k}\|_F^2, \quad k=1,\cdots, K,
\end{align}
on a smaller scale,
provided that $d$ can be split as $d = \sum_k d_k'$ with a good estimate $d_k'$ of the true $d_k = r(X_k)$ for each $k$. We discuss how to split $d$ and how to optimize the partition given $C$ in the next subsection.

\subsubsection{Subspace Correction}

Since $r(C_{J_k})=d_k'$, $r\big(X_{J_k}C_{J_k}\big)\leq d_k'$ and $\|X_{J_k} - X_{J_k}C_{J_k}\|_F^2\geq \min_{r(Z)\leq d_k'}\|X_{J_k}-Z\|_F^2$. It is known that the minimum is given by the truncated SVD of $X_{J_k}$ with rank $d_k'$. That is, the minimizer $Z_k = G_kD_kQ_k^T$, where $G_k$ and $Q_k$ consist of the $d_k'$ left and right singular vectors of $X_{J_k}$, respectively, corresponding to the $d_k'$ largest singular values, and $D_k$ is a diagonal matrix of the $d_k'$ largest singular values. If we choose $C_{J_k} = Q_kQ_k^T$, then $X_{J_k}C_{J_k} = G_kD_kQ_k$. That is, $Q_kQ_k^T$ solves the subproblem $\min_{r(C_{J_k})=d_k'}\|X_{J_k} - X_{J_k}C_{J_k}\|_F^2$, and
\[
    \min_{r(C_{J_k})=d_k'}\|X_{J_k} - X_{J_k}C_{J_k}\|_F^2
    = \|X_{J_k} - G_kD_kQ_k^T\|_F^2
    = \sum_j\sigma_{k,j}^2-\sum_{j\leq d_k'}\sigma_{k,j}^2,
\]
where $\sigma_{k,1}\geq \cdots \geq \sigma_{k,d_k}$ are all the singular values of $X_{J_k}$. 

The splitting $d = \sum_k d_k'$ can be easily determined. Since  
\[
    \sum_k\|X_{J_k}-X_{J_k}C_k\|_F^2 = \sum_k\sum_j\sigma_{k,j}^2 
- \sum_k\sum_{j\leq d_k'}\sigma_{k,j}^2,
\]
minimizing $\sum_k\|X_{J_k}-X_{J_k}C_k\|_F^2$ is equivalent to collecting the $d$ largest values of $\{\sigma_{k,j}\}$. Once the selection is completed, the splitting $d = \sum_k d_k'$ is immediately available by setting $d_k'$ as the number of selected $\{\sigma_{k,j}\}$ in the $d$ largest values. 

We now consider how to update partition $J=\{J_k\}$ given $\{C_k\}$. 
Because we have obtained the spanning subspaces $\{\spann(G_k)\}$, partition $J$ can be updated by the new partition $\tilde J=\{\tilde J_1,\cdots,\tilde J_K\}$ according to the rule of the nearest subspace for each sample, that is, 
\begin{align}\label{def:labeling}
	\tilde J_k = \big\{j:\ k = \arg\min_\ell \|x_j-G_\ell G_\ell ^Tx_j\|_2^2\big\}.
\end{align}

Our subspace correction method for solving the pseudo-dual problem (\ref{prob:dual}) is summarized in Algorithm \ref{alg:dual_C}. We note that the above method is a bit similar to the $K$-Subspace algorithm proposed by \cite{Bradley2000k} in which the dimension $d_k$ of each true subspace $\S_k$ is known, and each $J_k$ is assumed to match dimension $d_k$ correctly. These two assumptions cannot be satisfied in the complicated case that we consider because the spanning subspaces of the minimal segments are unknown.

\begin{algorithm}[t]
	\caption{Subspace correction for solving the pseudo-dual problem (\ref{prob:dual})}\label{alg:dual_C}
	Input: $X$, $d$, initial $K$-partition $J$, and max iteration number $s_{\max}$ \\
	Output: $\{J_k\}$ and $\{C_k\}$.
	\begin{algorithmic}[1]
		\STATE For $s=1,2,\cdots, s_{\max}$
		\STATE \hspace{15pt} Save $J_{\rm old}=J$ and compute the $d$ largest singular triples $\{g_{ki},\sigma_{ki},q_{ki}\}$ of each $X_{J_k}$.
		\STATE \hspace{15pt} Pick up $d$ largest values from $\{\sigma_{ki}\}$, containing $d_k'$ selected ones for each $k$.
		\STATE \hspace{15pt} Set $G_k=[g_{k1},\cdots,g_{kd_k'}]$ and update $J_{\rm old}$ to $J=\{J_k\}$ according to (\ref{def:labeling}).
		\STATE \hspace{15pt} If $J=J_{\rm old}$, set $C_k = Q_kQ_k^T$ with $Q_k=[q_{k1},\cdots,q_{kd_k'}]$ for each $k$, and terminate.
		\STATE End
	\end{algorithmic}
\end{algorithm}

\subsubsection{Convergence}

Algorithm \ref{alg:dual_C} decreases the objective function of (\ref{prob:dual_J}). On the one hand, given $J=\{J_k\}$, the optimal blocks $\{C_k\}$ are provided by $C_k = Q_kQ_k^T$ as shown above. Hence,
\begin{align*}
    \sum_k \|X_{J_k} \!-\! X_{J_k} C_{J_k} \|_F^2 
	& = \sum_k \|X_{J_k} \!-\! G_kG_k^TX_{J_k}\|_F^2 \\
	& = \sum_k \sum_{j \in J_k} \|x_j \!-\! G_kG_k^Tx_j\|_2^2
	\geq \sum_j \min_{\ell} \|x_j \!-\! G_\ell G_\ell^Tx_j\|_2^2.
\end{align*}
On the other hand, as $X_{J_k}C_{J_k} = G_kD_kQ_k$, we also have $X_{\tilde J_k}\tilde C_{\tilde J_k} = \tilde G_k\tilde D_k\tilde Q_k^T$ for the updated pairs $\{\tilde C_{\tilde J_k}, \tilde J_k\}$ of $\{C_{J_k},J_k\}$ since $\tilde G_k\tilde D_k\tilde Q_k^T$ is a truncated SVD of $X_{\tilde J_k}$. 
Hence,
\begin{align*}
	&\ \sum_j \min_{\ell} \|x_j \!-\! G_\ell G_\ell^Tx_j\|_2^2 
	= \sum_{k}\sum_{j \in \tilde J_k} \|x_j \!-\! G_kG_k^T x_j\|_2^2 
   	= \sum_k \|X_{\tilde J_k} \!-\! G_kG_k^T X_{\tilde J_k}\|_F^2 \\
    \geq&\ \sum_k \|X_{\tilde J_k} \!-\! \tilde G_k \tilde D_k\tilde Q_k^T\|_F^2
    = \sum_k \|X_{\tilde J_k} \!-\! X_{\tilde J_k} \tilde C_k\|_F^2.
\end{align*}
Therefore, $\sum_k \|X_{J_k} \!-\! X_{J_k} C_{J_k} \|_F^2\geq \sum_k \|X_{\tilde J_k} \!-\! X_{\tilde J_k} \tilde C_k\|_F^2$. The alternative iteration converges in the sense of decreasing the value of the objective function. Because only a finite number of partitions exist, the alternative iteration can be terminated within a finite number of steps as the function value is unchanged, though $J$ may have a differently modified $\tilde J$.

\begin{theorem}
The algorithm of subspace correction yields a decreasing sequence of objective values and terminates within a finite number of iterations. 
\end{theorem}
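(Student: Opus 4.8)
The plan is to package each iteration of Algorithm~\ref{alg:dual_C} as a composition of two cost-non-increasing steps on the surrogate objective
\[
    F(J) = \min\Big\{\textstyle\sum_k\|X_{J_k}-X_{J_k}C_{J_k}\|_F^2 : C_{J_k}^T=C_{J_k},\ \textstyle\sum_k r(C_{J_k}) = d\Big\},
\]
which, by the truncated-SVD computation preceding the theorem, equals $\sum_{k,j}\sigma_{k,j}^2$ minus the sum of the $d$ largest values among all the $\{\sigma_{k,j}^2\}$, and is therefore a well-defined function of the partition $J$ alone (ties at the selection threshold do not affect the value). First I would record that, for the current $J=\{J_k\}$ with its optimal split $d=\sum_k d_k'$ and the orthonormal bases $\{G_k\}$ of the leading $d_k'$-dimensional principal subspaces of the blocks, one has $C_{J_k}=Q_kQ_k^T$ as the optimal blocks and the identity $F(J)=\sum_k\|X_{J_k}-G_kG_k^TX_{J_k}\|_F^2=\sum_j\|x_j-G_{k(j)}G_{k(j)}^Tx_j\|_2^2$, where $k(j)$ denotes the block of index $j$ under $J$.

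Next I would run the nearest-subspace relabelling (\ref{def:labeling}) to produce $\tilde J$. Since each $x_j$ is reassigned to the block $\ell$ minimising $\|x_j-G_\ell G_\ell^Tx_j\|_2^2$, summing over $j$ gives $\sum_k\|X_{\tilde J_k}-G_kG_k^TX_{\tilde J_k}\|_F^2\le F(J)$. Because the numbers $\{d_k'\}$ still sum to $d$ and each $G_kG_k^TX_{\tilde J_k}$ has rank at most $d_k'$, the middle quantity is an admissible value in the minimisation defining $F(\tilde J)$; applying the optimality of the truncated SVD block by block (the fact used just before the theorem) yields $F(\tilde J)\le\sum_k\|X_{\tilde J_k}-G_kG_k^TX_{\tilde J_k}\|_F^2\le F(J)$. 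This is precisely the inequality chain displayed above the statement, so $\{F(J^{(s)})\}$ is non-increasing; it is bounded below by $0$, hence convergent.

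For termination I would invoke finiteness of the set of $K$-partitions of $\{1,\dots,n\}$: the function $F$ takes only finitely many values, so a non-increasing sequence $\{F(J^{(s)})\}$ must become constant after finitely many steps. Together with the explicit cap $s_{\max}$ in the input, the loop always exits after finitely many iterations, which is the claimed statement, and when it exits with $J=J_{\rm old}$ the blocks $C_k=Q_kQ_k^T$ are returned at a fixed point of the alternation.

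The main obstacle I anticipate is the usual gap between ``the objective value stabilises'' and ``the iterate $J$ stabilises'': the monotonicity argument alone does not rule out the relabelling cycling through several distinct partitions of equal cost without ever triggering the exit test $J=J_{\rm old}$, so finite termination at a genuine fixed point is guaranteed either by the hard cap $s_{\max}$ or, if one insists on a fixed point, by imposing a consistent tie-breaking rule in (\ref{def:labeling}). Accordingly I would keep the theorem in the present ``decreasing objective values / termination in finitely many iterations'' form rather than claim convergence to a strict fixed point.
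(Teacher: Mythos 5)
Your proof is correct and follows essentially the same route as the paper: the same two-step monotonicity chain (nearest-subspace relabelling followed by block-wise truncated-SVD optimality with the rank budget $d$) plus finiteness of the set of $K$-partitions, merely packaged through the explicit surrogate $F(J)$. Your closing caveat about the objective stabilising without the partition itself stabilising matches the paper's own remark immediately after the theorem, so nothing is missing.
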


It should be pointed out that multiple partitions achieving the same objective values may exist if some samples have the same minimal distances to different estimated sample subspaces. If this happens at a terminated partition $J$ and its modified $\tilde J$, 
the equalities of objective values imply the equalities
\[
    \sum_k \sum_{j \in J_k} \|x_j \!-\! G_kG_k^Tx_j\|_2^2
	= \sum_j \min_{\ell} \|x_j \!-\! G_\ell G_\ell^Tx_j\|_2^2
	= \sum_k \sum_{j \in \tilde J_k} \|x_j \!-\! \tilde G_k\tilde G_k^Tx_j\|_2^2.
\]
One may understand the difference between $J$ and $\tilde J$ by the arbitrary labeling of such samples because of their equal distances. To clearly show this, let 
\[
    M_j = \big\{k: \|x_j \!-\! G_kG_k^Tx_j\|_2 = \min_{\ell} \|x_j \!-\! G_\ell G_\ell^Tx_j\|_2\big\}
\]
for each $j$. Then, each $J_k$ can be split as $J_k = J_k^0\cup J_k'$, where $J_k^0$ consists of the $j$'s with a singleton $M_j = \{k\}$, and $J_k'$ is a set of partial $j$'s whose $M_j$ has at least two indices, one of which is $k$. Similarly, $\tilde J_k = J_k^0\cup\tilde J_k'$. Obviously, $J_k\neq \tilde J_k$ is equivalent to $J_k'\neq \tilde J_k'$. Randomly labeling these $j$'s according to the multiple $k$'s in $M_j$ results in multiple partitions. That is, there are multiple options for setting $\tilde J$ in this case. It is unclear whether there is a $\tilde J$ among the multiple choices that achieves a smaller value of the objective function. Choosing such a $\tilde J$ may obtain better convergence but requires a complicated labeling rule, rather than the simple one (\ref{def:labeling}). We do not intend to further exploit the multiplicity of partitions because of the nonsingletons $\{M_j\}$.

\subsection{Hybrid Optimization}

Both Algorithms \ref{alg:primal_JC} and \ref{alg:dual_C} may fall into local minimizers, but exhibit their own convergence behaviors. Algorithm \ref{alg:primal_JC} is relatively stable on the initial setting of partition $J$ or active set $\Omega$, in the sense that the convergent solution $C$ or $J$ always has good accuracy with respect to the minimal partition, although the solution may not be completely correct. Algorithm \ref{alg:dual_C} heavily depends on the initial guess of $J$ and may give a completely incorrect solution if the initial partition is poor. A good initial $J$ for Algorithm \ref{alg:dual_C} should ensure that each $J_k$ dominates samples from the same minimal segment. In this case, the algorithm \ref{alg:dual_C} converges to the true minimal partition quickly. 

In this subsection, we consider a hybrid strategy for minimal subspace learning that combines primal and pseudo-dual optimization, which we term hybrid optimization. Essentially, starting with the coarsest active set $\Omega_c$ covering all index pairs $(i,j)$ except the diagonal indices $\{(i,i)\}$, the hybrid strategy first solves the primal problem (\ref{prob:MSDR}) with an active set $\Omega$ and then solves the pseudo-dual problem (\ref{prob:dual}) using the primal solution as its initial guess. This procedure is repeated if necessary. 

The key issue for hybrid optimization is constructing an initial guess for the primal (pseudo-dual) algorithm from the solution of the pseudo-dual (primal) algorithm. It is easy to construct an initial partition for the pseudo-dual algorithm (Algorithm \ref{alg:dual_C}) based on subspace correction using the solution given by the primal algorithm (Algorithms \ref{alg:primal_JC}). Here, we focus on constructing a suitable $\Omega$ for Algorithms \ref{alg:primal_JC}, based on partition $J=\{J_\ell\}$ given by Algorithm \ref{alg:dual_C}. Here, $\Omega$ means the matrix with entries $\omega_{ij}$. We may slightly change the $0-1$ setting of the entries to that with one of the three values $0, 1, \beta$ because of the property of subspace correction. 

If all the $K$ subsets are not empty, it is highly possible that each $J_\ell$ is dominated by a single true segment. Thus, we modify $\Omega$ as follows:
\begin{align}\label{Omega1}
	\omega_{ij} = \left\{\begin{array}{ll}
		1	&  {\rm if}\ i\in J_s, j\in J_t,\ s\neq t;\\
		0  	&  {\rm if}\ i, j\in J_k.
	\end{array}\right.
\end{align}
However, if there are some empty $J_k$, without loss of generality, let $J_1,\cdots,J_\ell\ $ be all the nonempty subsets with $\ell<K$. Since some true minimal segments are approximately merged together into a nonempty $J_k$ because of the subspace correction, the entries in the off-diagonal block $C(J_s,J_t)$ decrease faster than those $C(J_{k'}^*,J_{k''}^*)$ with $k'\neq k''$ if $J_{k'}^*$ and $J_{k''}^*$ are merged together. We slightly modify the coarse $\Omega_c$ to $\Omega_\beta = (\omega_{ij})$ with the following
\begin{align}\label{Omega2}
	\omega_{ij} = \left\{\begin{array}{ll}
		\beta	&  {\rm if}\ i\in J_s, j\in J_t,\ s\neq t;\\
		1  	&  {\rm if}\ i, j\in J_k,\  i\neq j;\\
		0	&  {\rm if}\ i=j .
	\end{array}\right.
\end{align}

\begin{algorithm}[t]
	\caption{Minimal subspace segmentation via hybrid optimization (MSS$_-$HO)} \label{alg:combine}
	Input: $X$, $K$, parameter $\beta$, maximal iteration number $h_{\max}$ of HO\\
	Output: $J$ and $C$
	\begin{algorithmic}[1]
		\STATE Initially set $\Omega=\Omega_c$ with $\omega_{ij} = 1$ if $i\neq j$ and $\omega_{ii}=0$.
		\STATE For $h=1,2,\cdots, h_{\max}$
		\STATE \hspace{15pt} Save $\Omega_{\rm old} = \Omega$, and 
		solve the primal problem (\ref{prob:MSDR}) to get $J_{\rm prim}$ and $C$ by Algorithm \ref{alg:primal_JC} with $\Omega$.
		\STATE \hspace{15pt} Solve the pseudo-dual problem (\ref{prob:dual}) to get $J$ 
			by Algorithm \ref{alg:dual_C} using $J_{\rm prim}$ initially.
		\STATE \hspace{15pt} Update $\Omega$ as (\ref{Omega2}) if there is $J_k =\emptyset$, or as (\ref{Omega1}) otherwise.
			If $\Omega = \Omega_{\rm old}$, terminate. 
		\STATE End
	\end{algorithmic}
\end{algorithm}

The constant $\beta$ plays a special role in controlling the convergent behavior of $C=C(S)$ in the iteration of Algorithm \ref{alg:primal_JC} using $\Omega_\beta$. Compared with the iteration of $C$ in Algorithm \ref{alg:primal_JC} before shifting to Algorithm \ref{alg:dual_C} for updating the partition, a larger $\beta>1$ can accelerate the decreasing of the blocks $C(J_s,J_t)$ with $s\neq t$ because of the larger weights in the function
\[
	\|\Omega_\beta\odot C\|_1 = \sum_k\sum_{i\neq j\in J_k}|c_{ij}|+\beta\sum_{s\neq t}\|C(J_s,J_t)\|_1.
\] 

Once these $C(J_s,J_t)$ are small, block $C(J_{k'}^*,J_{k''}^*)$ begins decreasing, a bit similar to the result of applying Algorithm \ref{alg:primal_JC} on the smaller block $C(J_k,J_k)$. Thus, a larger $\beta$ helps to turn off the decreasing early. Note that a smaller $\beta<1$ can balance $\{C(J_s,J_t)\}$ and $\{C(J_{k'}^*,J_{k''}^*)\}$ since it can delay the decreasing of the $C(J_s,J_t)$, or equivalently, relatively accelerate the decreasing of $C(J_{k'}^*,J_{k''}^*)$. When such balance occurs, the merging of multiple minimal segments might also be delayed in Algorithm \ref{alg:dual_C} using such a locally optimal solution of Algorithm \ref{alg:primal_JC}. However, the value of $\beta$ must be carefully chosen to balance the decreasing of these blocks. For the sake of simplicity, we just suggest using a $\beta>1$. In our experiments, we always set $\beta=1.25$. See Algorithm \ref{alg:combine} for the procedure of our hybrid optimization procedure.

Figure \ref{fig:alg_combine} plots four indication matrices $E_J$ of four partitions obtained by the hybrid algorithm, where $(E_J)_{ij}=1$ if $i,j\in J_k$, or $(E_J)_{ij}=0$ if $i,j$ belong to different $J_k$'s. The data set has minimal segmentation consisting of five minimal segments of equal size. Starting with the coarsest $\Omega_c$, Algorithm \ref{alg:primal_JC} obtains a solution $J$ whose indication matrix is plotted on the left. $J$ contains five nonempty pieces $\{J_k\}$; three of them have relatively dominant indices from a single minimal segment and other two are mixed by multiple minimal segments. Due to this mixture, the pseudo-dual step gives a partition with four nonempty pieces, in which two of them are very small, one almost contains the indices of a minimal segment, and the largest one is dominated by other minimal segments. The initial setting (\ref{Omega2}) can significantly reduce the mixture at the primal step; see the indication matrix plotted third in Figure \ref{fig:alg_combine}. Due to the improvement, the second pseudo-dual step correctly recovers all the minimal segments.

\begin{figure}[t]
		\centering
		\includegraphics[width=1.5in]{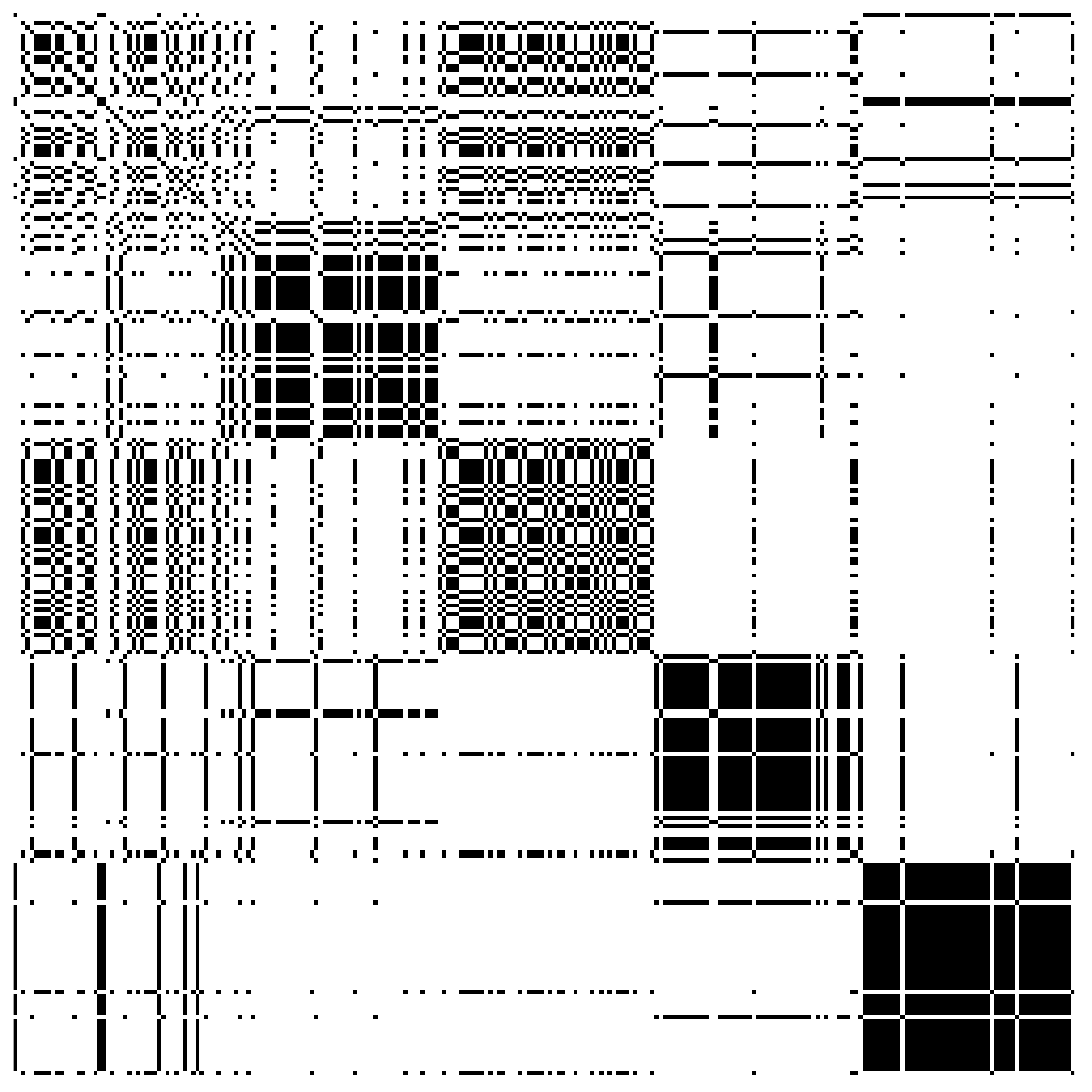}
		\includegraphics[width=1.5in]{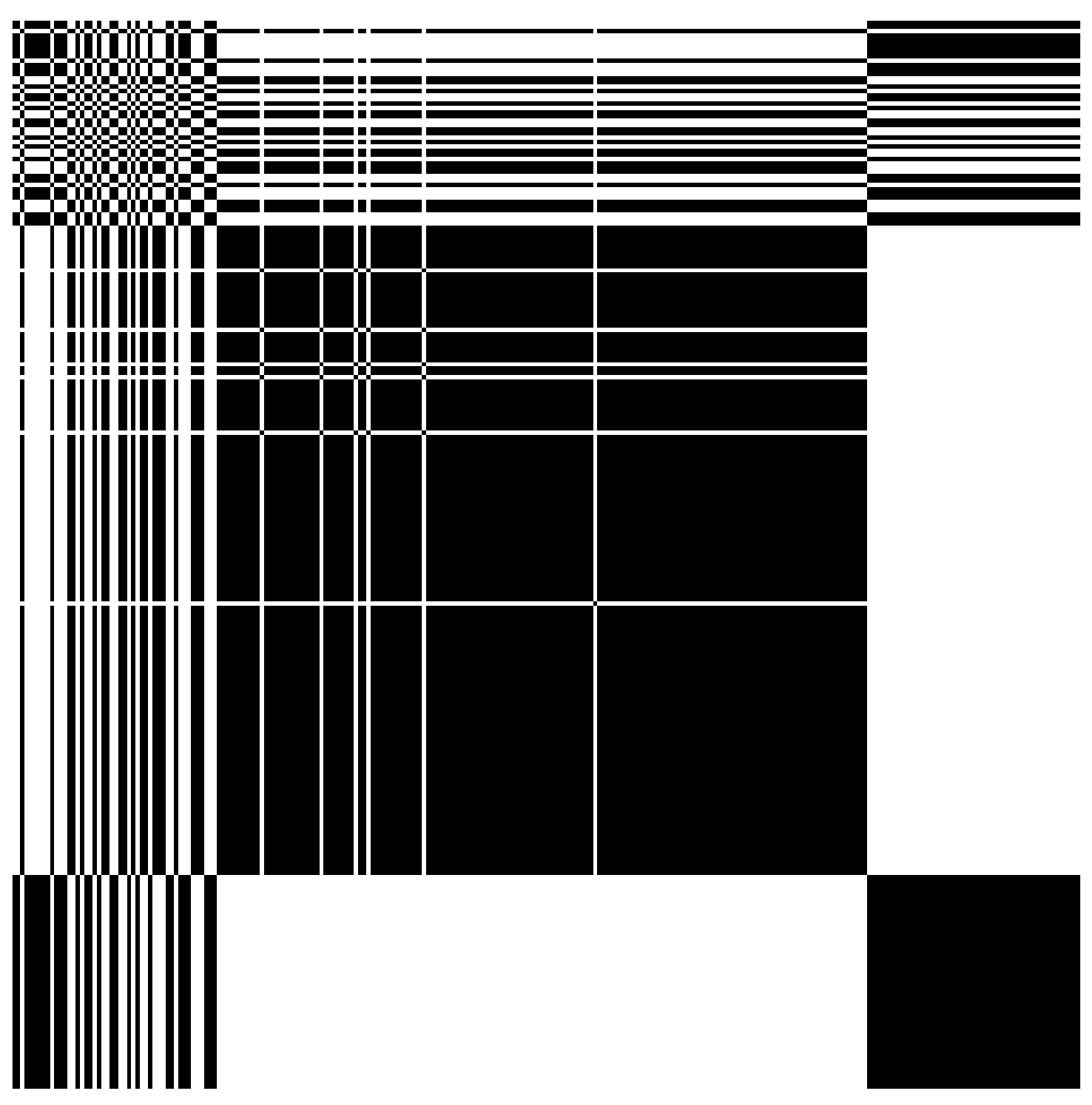}
		\includegraphics[width=1.5in]{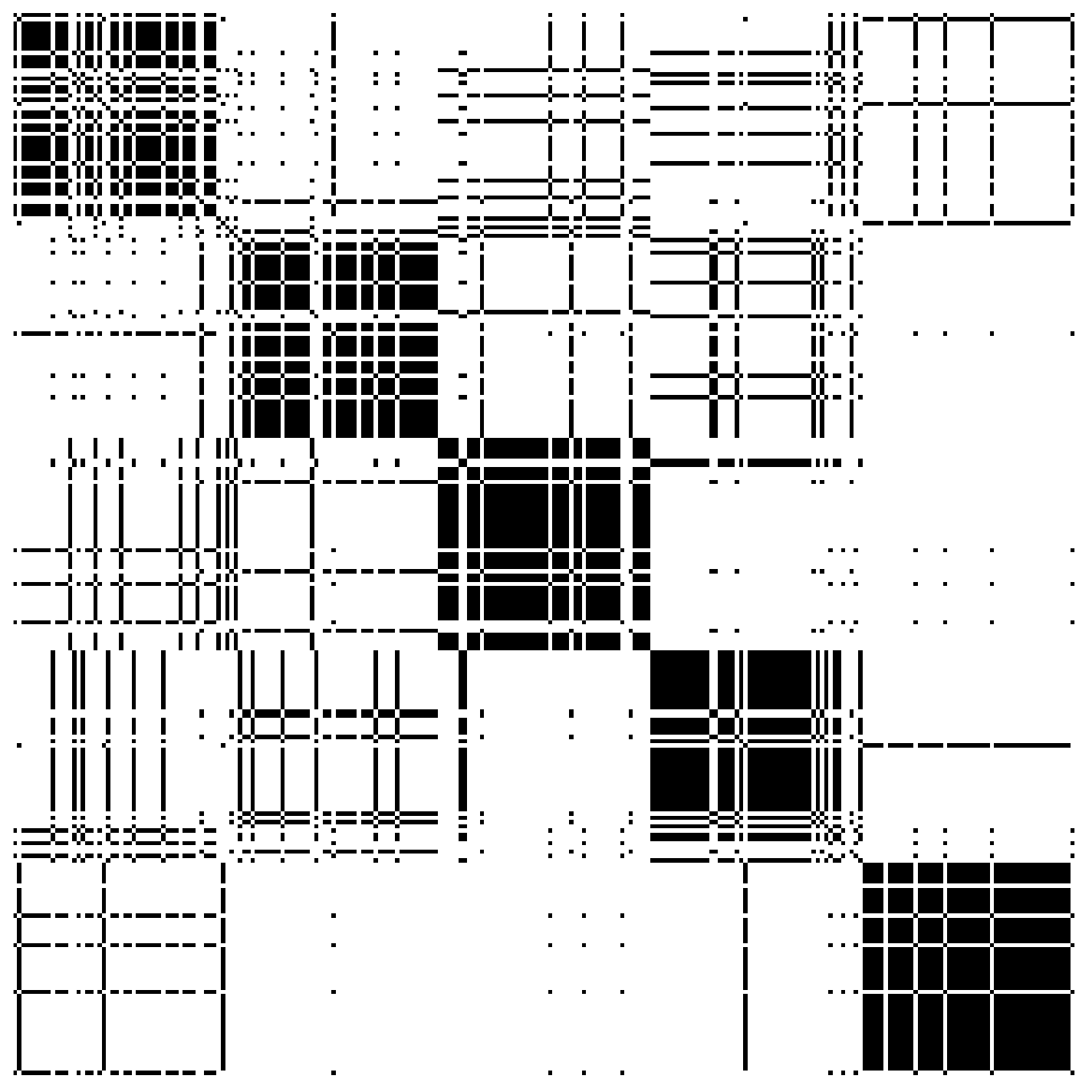}
		\includegraphics[width=1.5in]{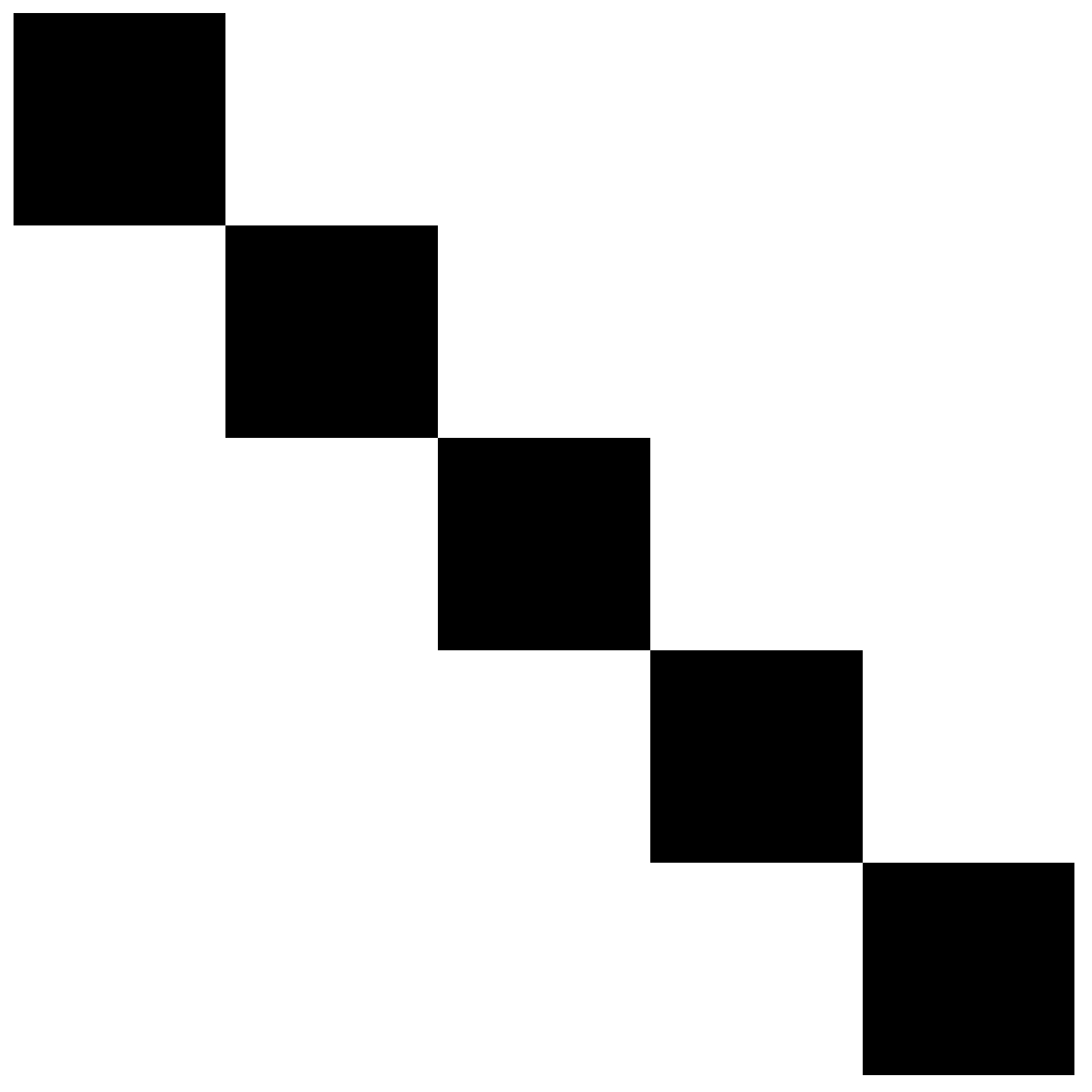}
	\caption{Indication matrix $E_{J^{(t)}}$ of the computed partition of Algorithm \ref{alg:primal_JC} using $\Omega_c$ (left) or $\Omega_\beta$ (middle right) corresponding to the solution (middle left) of Algorithm \ref{alg:dual_C} starting with the left $J$. The right $J$ is a solution of Algorithm \ref{alg:dual_C} initially using the middle right $J$.}
	\label{fig:alg_combine}
\end{figure}

\section{Generalization for Noisy Samples}\label{sec:extension}

Given a noisy sample set $X = X_*+E$, where $X_*$ is a set of unknown clean samples and $E$ is a noise set, we seek an MSDR with respect to the subspaces $\{\S_k\}$ spanned by the minimal segments of $X_*$ theoretically. Our algorithms discussed in previous sections are based on the orthonormal matrix $V$ of the right singular vectors of $X_*$ corresponding to nonzero singular values and its orthogonal complement $V_\bot$. If we can estimate $V$ from noisy samples with high accuracy, these algorithms can also be used for subspace segmentation from noisy samples. Numerically, the required orthonormal matrix $V$ can be estimated by the dominant right singular vectors of $X$. The classical perturbation theory of singular values shows that the estimation of $V$, or more preciously, the estimation of the subspace spanned by the dominant singular vectors, is robust if the noise is relatively smaller than the smallest singular value. This means that if the smallest singular value of $X_*$ is large, the data noise could be somewhat relatively large. Further, we could apply the MSS algorithms on the estimated $V$ and its orthogonal complement $V_{\bot}$ for segmenting the noisy samples. Perturbation theory on solutions is an interesting topic. We leave it as further work. 

In this section, we consider a robust approach for the complicated cases of considerable noise or an incorrectly estimated dimension sum $d$. A generalized sparse model is given to handle noisy samples. We solve this sparse problem by applying an ADMM method.

\subsection{The Relaxed Optimization}

Let $C_*$ be an MSDR of the latent samples $X_*$, that is, $X_* = X_*C_*$ and $C_*$ is a block-diagonal matrix with rank $d$. By Theorem \ref{thm:rank_connect} on the one hand, $C_* = G_*G_*^T$ with an orthonormal $G_*$ of $d$ columns. On the other hand, since $X_* = X-E$, the required SDR also satisfies $X = XC_*+E(I-C_*)$. That is, the self-expressive error $X - XC_*$ should be as small as the noise in magnitude. Therefore, it makes sense for approximating the MSDR to minimize the sparsity $\|C_{\rm off}\|_1$ subjected to $C = GG^T$ with an orthonormal $G = V_*+(V_*)_\bot Q$ of $d$ columns, if the correct sum $d$ of subspace dimensions is known, we should also simultaneously minimize the self-expressive error $X-XC$ in a suitable measurement $\phi(X-XC)$. That is, we may consider the optimization problem
\[
	\min_{C =GG^T, G^TG=I_d}\Big\{\|C_{{\rm off}}\|_1 + \alpha \phi(X-XC)\Big\}.
\]

Some relaxations on the above problem are required for efficient computation. Here, we adopt four kinds of relaxations:

1) relax the first term to $\|\Omega\odot C\|_1$ with an $\Omega$ that can be modified iteratively, 
 
2) add a penalty term on the diagonal vector $c$ of $C$ to decrease the nonconvexity,

3) relax the special restriction on $G=V_*+(V_*)_\bot Q$ to be a general $G$ of $d$ columns, and

4) relax the strict expressive matrix $C=GG^T$ to $C\approx GG^T$. 

The first two relaxations are similar to those discussed in previous sections without noise, as 
in the basic model (\ref{prob:basic}). The third one can avoid estimating the dimension $r$ of data space, its orthogonal basis $V_*$, and orthogonal complement of $V_*$. The last one considers the case where $d$ may be approximately estimated. 
The approximation $C\approx GG^T$ can be implicitly obtained by minimizing the error function $\|C - GG^T\|_F^2$. Combining these relaxations together results in the following sparse problem,
\begin{align} \label{opt:noise_rcsc}
	\min_{C, \Omega, G} \Big\{{\cal F}(C, \Omega, G) = \|\Omega\odot C\|_1 + \frac{\lambda}{2}\|c\|_2^2 + \alpha \phi(X-XC) + \frac{\beta}{2} \|C - GG^T\|_F^2\Big\}.
\end{align}
We may choose $\phi$ as the norm $\|\cdot\|_1$, $\|\cdot\|_{2,1}$, or $\|\cdot\|_F^2$, depending on the noise distribution. 
It is reasonable to set the parameters $\lambda$, $\alpha$, and $\beta$ proportional to the number of samples in each subspace. If the number of samples in each subspace are approximately equal, we can set the parameters proportional to $n/K$. We will show that this strategy works well numerically.

As in the noiseless case, we also solve (\ref{opt:noise_rcsc}) via alternatively optimizing $(C,G)$ and $\Omega$: We solve 
\begin{align}\label{prob:F_CG}
	\min_{C,G}{\cal F}(C, \Omega, G)
\end{align}
with a fixed $\Omega$, and then update $\Omega$ when $(C,G)$ is updated. The problem (\ref{prob:F_CG}) is also addressed via alternatively optimizing $C$ and $G$ because we have the following two sound propositions:

(1) The subproblem $\min_{G}{\cal F}(C, \Omega, G)$ is equivalent to $\min_{G \in \R^{n \times d}} \|C - GG^T\|_F^2$ whose solution is as follows:
\begin{align}\label{G}
	GG^T = P\diag((\lambda_1)_+,\cdots,(\lambda_d)_+)P^T, 
\end{align}
where $\{\lambda_i\}$ are the $d$ largest eigenvalues of $(C+C^T)/2$ and $P$ consists of the corresponding eigenvectors. 

(2) The subproblem $\min_{C}{\cal F}(C, \Omega, G)$ is convex and has a unique solution. We can solve this problem using the ADMM method given in the next subsection. 

Therefore, the algorithm for solving (\ref{prob:F_CG}) consists of an inner-outer iteration scheme. In addition, because the objective function is monotonously decreasing, the algorithm is convergent. Numerically, it is unnecessary to solve the inner problem for updating $C$ with high accuracy. An inaccurate solution by ADMM is sufficient if it can decrease the value of ${\cal F}$, which results in an inaccurate inner-outer iteration method for solving (\ref{prob:F_CG}). 

As soon as $(C,S)$ are updated, we modify $\Omega$ using the rule given in Subsection \ref{sec:update Omega}. That is, we construct a new $\Omega$ by Algorithm \ref{alg:omega} with input graph $A$. The simple input $A=(|C|+|C|^T)/2$ is no longer suitable for noisy data since it is not positive semidefinite. We give an approach for constructing a symmetric graph $A$ as input to Algorithm \ref{alg:omega}. The construction combines the advantages of that based on SSC solutions or LRR solutions as shown in Algorithm \ref{alg:G}. 

\begin{algorithm}[t]
\caption{Graph construction from $C$}\label{alg:G}
Input: $C$, $\gamma$, $\sigma$, and $s\geq 1$.\\
Output: graph matrix $A$.
\begin{algorithmic}[1]
    \STATE Cut off small entries of $C$ to get a sparse $C_\gamma$ with column norms $\|C_\gamma e_j\|\approx\gamma\|C e_j\|$.
    \STATE Compute the left singular vectors $\{u_i\}$ of $C_\gamma$ corresponding to singular values $\sigma_i\!\geq\!\sigma$.
    \STATE Construct $A=\big(\langle g_i,g_j\rangle^s\big)$ with the nomalized rows $\{g_j\}$ of $[\cdots,\sqrt{\sigma_i}u_i,\cdots]$.
\end{algorithmic}
\end{algorithm}

\subsection{ADMM Approach}

We rewrite the subproblem of (\ref{opt:noise_rcsc}) for optimizing $C$ given $(\Omega,G)$ as the ADMM form
\[
	\min_{C,E,Z}\Big\{ \|\Omega\odot C\|_1 + \frac{\lambda}{2}\|c\|_2^2 + \alpha \phi(E) + \frac{\beta}{2}\|Z - GG^T\|_F^2\Big\}
	 \quad {\rm s.t. } \ E = X-XZ,\ Z = C.
\]
Its augmented  Lagrangian function is
\begin{align*}
	L(C, E, Z) & = \|\Omega\odot C\|_1 + \frac{\lambda}{2}\|c\|_2^2 + \alpha \phi(E) + \frac{\beta}{2}\|Z - GG^T\|_F^2 
		+ \frac{\rho'}{2}\|C-Z\|_F^2 \\
		& + \frac{\rho''}{2}\|X-XZ-E\|_F^2 
    	         + \langle C-Z, Y' \rangle + \langle XZ+E-X, Y'' \rangle.
\end{align*}
Hence, the ADMM scheme is given by the following alternative rule.
\begin{align}
	(\hat C, \hat E ) & =  \arg\min_{C, E} L(C, E, Z); \label{noiseADMM:ZE} \\
   	\hat Z & = \arg\min_Z L(\hat C, \hat E,Z); \label{noiseADMM:C}\\
    \hat Y' & = Y' + \rho'(\hat C-\hat Z); \label{noiseADMM:Y'}\\
    \hat Y'' & = Y'' + \rho''(X\hat Z+\hat E-X) .\label{noiseADMM:Y''}
\end{align}
Notice that the objective function of (\ref{noiseADMM:ZE}) is separable as $L(C, E, \cdot) = L_1(C)+L_2(E) + {\rm constant}$, where
\begin{align*}
	L_1(C) &\ = \|\Omega\odot C\|_1 + \frac{\lambda}{2}\|c\|_2^2 + \frac{\rho'}{2}\|C-Z + \frac{1}{\rho'}Y' \|_F^2, \\
    L_2(E) &\ = \alpha \phi(E) + \frac{\rho''}{2}\|XZ+E-X+\frac{1}{\rho''} Y''\|_F^2,
\end{align*}
and the constant means a term not depending on the variables $C$ or $E$. 
Hence, $\hat C = \arg\min_C L_1(C)$ and $\hat E = \arg\min_E L_2(E)$.
Below, we provide solutions of the above three subproblems in closed form.
The convergence of this ADMM iteration is guaranteed by \cite{Boyd2011Distributed}.

\subsubsection{Updating the Representation Matrix}

The step of updating $C$ is separable with respect to its entries. That is, updating each entry of $C$ is an independent procedure. The entry $c_{ij}$ is updated by $\hat c_{ij}$, the solution  
\[
	t^* = \arg\min_t \Big\{\omega|t| + \frac{a}{2}t^2 + \frac{\rho'}{2}(t-p)^2 \Big\}
\]
with the parameters $\omega=\omega_{ij}$, $a =\lambda$ if $i=j$ or $a = 0$ otherwise, and $p = z_{ij}-y'_{ij}/\rho'$. Since the objective function can be rewritten as $\frac{\rho'+a}{2}\big(t - \frac{\rho' p - \sgn(t)\omega}{\rho' + a}\big)^2$, letting $t_1 = \frac{\rho' p-\omega}{\rho'+a}$ and $t_2 = \frac{\rho' p +\omega}{\rho'+a}$, the solution is as follows:
\begin{align*}
	t^*
	= \left\{ \begin{array}{ll}
   		t_2,& \mbox{if}\ t_2\leq 0\\
		0,  & \mbox{if}\ t_1<0 < t_2\\
   		t_1,& \mbox{if}\ t_1\geq 0
   	\end{array} \right\} 
	= \frac{{\rm shrink}(\rho'p, \omega)}{ \rho'+a},
\end{align*}
where ${\rm shrink}(\beta, \alpha) = \sgn(\beta)(|\beta| - \alpha)_+$ is a shrinkage operator of $\beta$ corresponding to $\alpha$.  
Hence, the optimal solution $\hat C$ is given by the following
\begin{align}\label{update:C}
	\hat C = R \odot \mbox{shrink} \left(\rho' Z -Y', \Omega \right),
\end{align}
where $R $ has the diagonals $1/(\rho' + \lambda)$ and the off-diagonals $1/\rho'$, and ${\rm shrink(B, A)}$ is the elementwise operator of ${\rm shrink}(\beta, \alpha)$.

\subsubsection{Updating the Error Matrix}
The solution of $\min L_2(E)$ depends on the function $\phi$. If $\phi(E)$ is one of the three functions $\|E\|_1$, $\|E\|_{2,1}$, or $\|E\|_F^2$, the solution is closed-form with $\Delta = X-XZ-Y''/\rho''$,
\begin{align}\label{update:E}
	\hat E = \left\{\begin{array}{ll}
		{\rm shrink} \left( \Delta, \alpha/\rho'' \right), & \mbox{if }\ \phi(E) =\|E\|_1;\\
		\Delta\diag(\beta_1,\cdots,\beta_n), & \mbox{if }\ \phi(E) =\|E\|_{2,1};\\
		\Delta/( 1+2 \alpha/\rho'' ), , & \mbox{if }\ \phi(E) =\|E\|_F^2,
	\end{array}\right.
\end{align}
Here, the first form is similar to that given by \cite{Beck2009A}. In the second form, $\beta_i =(\|\delta_i\|_2 - \alpha/\rho'' )_+/ \|\delta_i\|_2$ with the columns $\delta_i$ of $\Delta$, as shown by \cite{Yang2011A}.

\subsubsection{Updating the Relaxation Variable}

Fixing $\hat C$ and $\hat E$, $L(\hat C, \hat E, Z)$ is a quadratic function of $Z$.
Hence, its minimizer is unique and is given by the solution of the equation $\frac{\partial}{\partial Z} L(\hat C, \hat E, Z) = 0$. That is, 
\begin{align*}
	\beta(Z-GG^T) + \rho' (Z-\hat C) + \rho'' X^T(XZ+\hat E-X) -Y'+X^TY'' = 0.
\end{align*}
Thus, the minimizer of (\ref{noiseADMM:C}) is 
\begin{align} \label{update:Z}
	\hat Z = \big( (\beta + \rho') I+\rho'' X^TX  \big)^{-1}\big(\beta GG^T + \rho' \hat C+\rho''X^T(X-\hat E)+Y'-X^TY''\big).
\end{align}

\begin{algorithm}[t]
\caption{Minimal subspace segmentation via relaxed optimization (MSS$_-$RO)}\label{alg:noise}
Input: $X$, $K$, $d$, accuracy parameters $\tau$ and $\epsilon$, and max iteration numbers $t_{\max}, \ell_{\max}, k_{\max}$\\
Output: $J$ and $C$.
\begin{algorithmic}[1]
\STATE Initially set $\Omega=\Omega_c$, $Z = C = I$, $Y' = 0$, and $Y'' = 0$, and save $\Omega_{\rm old}=\Omega$. 
\STATE Repeat the following produce for at most $t_{\max}$ times.
\STATE \hspace{15pt} Solve (\ref{opt:noise_rcsc}) with fixed $\Omega$ via the following inner iterations:
\STATE \hspace{15pt} For $\ell=1,\cdots,\ell_{\max}$
\STATE \hspace{30pt} Save $C_{\rm old} = C$ and run the ADMM iteration for updating $C$:
\STATE \hspace{30pt} For $k = 1, 2, \cdots, k_{\max}$
\STATE \hspace{45pt} Save $Z_{\rm old} = Z$ and update $C$, $E$, and $Z$ as (\ref{update:C}), (\ref{update:E}), and (\ref{update:Z}), respectively.
\STATE \hspace{45pt} Modify the multipliers $Y'$ and $Y''$ as (\ref{noiseADMM:Y'}) and (\ref{noiseADMM:Y''}). 
\STATE \hspace{45pt} If $\|Z-Z_{\rm old}\|_F<\tau$, set $C=Z$, and terminate the iteration.
\STATE \hspace{30pt} End
\STATE \hspace{30pt} If $\|C - C_{\rm old}\|_F < \epsilon$, terminate. Otherwise, update $G$ as (\ref{G}).
\STATE \hspace{15pt} End
\STATE \hspace{15pt} Construct the graph $A$ by Algorithm \ref{alg:G} and update $\Omega$ and $J$ by Algorithm \ref{alg:omega} .
\STATE \hspace{15pt} If $\Omega = \Omega_{\rm old} $, terminate the repeat.  
\STATE End
\end{algorithmic}
\end{algorithm}

The whole iterative procedure for solving (\ref{opt:noise_rcsc}) is summarized in Algorithm \ref{alg:noise}, which combines the closed-forms of solutions of the subproblems (\ref{noiseADMM:ZE}) and (\ref{noiseADMM:C}), the ADMM iteration, and the updating of $\Omega$ if necessary. The number of repeats $t_{\max}$ can be small. In some examples, setting $t_{\max}=1$ also gives a good solution.

\section{Experiments} \label{sec:experiment}

We evaluate the performance of our algorithms on synthetic data sets without noise and two kinds of real-world data sets for face recognition and motion detection. The synthetic data vectors are sampled from the union of several known subspaces $\{\S_k\}$ such that the segment $X_k$ of samples from $\S_k$ is minimal and $\spann(X_k) = \S_k$ for each $k$. The minimal subspaces are intersected and some samples are close to but do not belong to the intersections of these subspaces. Two key issues known to affect subspace segmentation are subspace intersections and samples located near intersected subspaces. The synthetic data are tested to show how the proposed algorithms perform when 
the subspaces are heavily intersected and some samples are close to the intersected subspaces.
Two real-world data sets are used to evaluate how the proposed algorithms perform on noisy data. Our algorithms are also compared with five algorithms for subspace clustering:
LRR \cite{LRR2013}, 
CLAR \cite{Kang2015Robust}, 
SSC \cite{Elhamifar2013Sparse}, 
LRSSC \cite{wang2013provable}, 
and SoftS3C \cite{Li2017Structured}.\footnote{We omit a comparison with the hard version proposed in the same paper \cite{Li2017Structured} since the soft version SoftS3C performed slightly better than the hard version in our experiments.}  
The reported results of these algorithms were obtained using the codes provided by algorithm owners or downloaded from open sources. The parameters are set as suggested by the algorithm owners or carefully chosen by us. The original LRSSC was slightly modified in the experiments on real-world data sets to achieve better results.

\subsection{Evaluation Criteria}

We use the following four measurements to evaluate the quality of the computed solutions from the partition error, the deviation from block-diagonal form, and the connection of diagonal blocks, according to the ideal minimal segments $J^*$.
 
(1) The error of partition $J$. This error is defined by the percentage of misassigned samples in partition $J$ compared with the true minimal partition $J^*$,
\begin{align}\label{ErrParti}
	{\rm ErrParti}(J) =  \min_{\pi} \frac{1}{n}\Big\{n -\sum_{k=1}^K |J_{\pi(k)}\cap J_k^*|\Big\},
\end{align}
where $\{\pi(1),\cdots,\pi(K)\}$ is a permutation of $\{1,\cdots,K\}$. 

(2) Block-diagonal deviation. \cite{wang2013provable} used the metric $\frac{\|C_{{\rm off}(J^*)}\|_1}{\|C\|_1}$ to define the relative deviation. However, relatively large diagonals may cause a relatively small value of this function, which may lead to an incorrect gloss for the deviation since such a small value does not imply a small deviation from the ideal block-diagonal form. We modify the metric to the following
\begin{align}\label{BdiagDevi} 
	{\rm BdiagDevi}(C) = \frac{\|C_{{\rm off}(J^*)}\|_1}{\|C\|_1-\|c\|_1} 
\end{align}
by removing the diagonals from the denominator, and use it to measure the deviation of $C$ from the ideal block-diagonal form.

(3) Intra-block connection. The Gini Index was used by \cite{Hurley2009Comparing} to measure the sparsity of a vector.  \cite{wang2013provable} adopted it as a sparsity metric ${\rm GiniIndex}(C_{J^*})$ of all the diagonal blocks $C_{J^*} = \{C_{J_k^*}\}$, that is, the sparsity of the vector of entries in the $K$ diagonal blocks $\{C_{J^*_k}\}$. We slightly modify it to measure the intra-block connection of $C$ by $1-{\rm GiniIndex}(C_{J^*})$. 
That is, sorting all the $M = \sum_{k=1}|J_k^*|^2$ entries of the diagonal blocks $C_{J^*} = \{C_{J^*_k}\}$ as $\tilde c_1\leq \cdots\leq \tilde c_{M}$, the intra-block connection is defined as follows
\begin{align}\label{IntraBConn}
	{\rm IntraBConn}(C) = \sum_{\ell=1}^{M} \frac{|\tilde c_\ell|}{\|C_{J^*}\|_1}\left( \frac{2(M - \ell) +1}{M} \right).
\end{align}

(4) $K$-block-diagonal structure. It was shown by \cite{vonluxburg2007a} that $C$ is a block-diagonal matrix with $K$ blocks if and only if the (normalized) Laplacian $L$ of $(|C|+|C|^T)/2$ has only $K$ zero eigenvalues. In the approximate case, $C$ is a block-diagonal matrix with $K$ diagonal blocks approximately if $L$ has $K$ small eigenvalues and its other eigenvalues are distinguishable from the $K$ smallest ones. The spectral clustering is just the $K$ partition of the (normalized) rows of the orthonormal matrix with $K$ unit eigenvectors corresponding to the smallest eigenvalues as its columns. By classical subspace perturbation theory \cite{Stewart1990Matrix}, the stability of the spectral clustering can be characterized by the gap between the $K$-th and $(K+1)$st smallest eigenvalues $\lambda_K(L)$ and $\lambda_{K+1}(L)$ of $L$. Hence, we use the relative gap 
\begin{align}\label{KblockDiag}
	{\rm KblockDiag}(L) = \frac{\lambda_{K+1}(L)-\lambda_K(L)}{\lambda_{K+1}(L)} 
\end{align}
to measure the stability of the spectral clustering, similar to $\frac{\lambda_{K+1}(L)-\lambda_K(L)}{\lambda_K(L)-\lambda_{K-1}(L)}$ used by \cite{Lauer2009Spectral}. Here, ${\rm RelGap}(L)\in[0,1]$.\footnote{The original definition defined by \cite{Lauer2009Spectral} could be arbitrarily large.} 

Each metric function above has the same range of $[0,1]$. The first two functions measure the approximation of the $K$ partition to the minimal partition $J^*$ and the approximation of $C$ to have a block-diagonal form as an MSDR. The last two functions measure the. connection of the diagonal blocks from two different viewpoints. Generally, smaller values of ${\rm ErrParti}(J)$ and ${\rm BdiagDev}(C)$ and relatively larger values of ${\rm IntraBCon}(C)$ and ${\rm KblockDiag}(L)$ mean a better solution $(J,C)$ for learning the minimal sample subspaces.

\subsection{Synthetic Data without Noise} \label{sec:syndata}

\begin{table}[t]                          
\centering                                
\resizebox{15cm}{!}{
\begin{tabular}{|c@{\ }c@{\ }c|@{\ }c@{\ }|r@{\, }r@{\, }r@{\, }r@{\, }r|r@{\, }r@{\, }r@{\, }r@{\, }r|r@{\, }r@{\, }r@{\, }r@{\, }r|}  \hline  \hline 
\multicolumn{4}{|c|}{} & \multicolumn{5}{c|}{$r = 10$}& \multicolumn{5}{c|}{$r = 14$}& \multicolumn{5}{c|}{$r = 20$}\\\hline
\multirow{4}{*}{\rotatebox{90}{Average}}&\multirow{4}{*}{\rotatebox{90}{\!dimension\,}}&\multirow{4}{*}{\rotatebox{90}{\!of $\S_k\!\cap\!\S_\ell$\,}}
&$d_c$ &  4 &  5  &  6  &  7 &  8 	&  7 &  8  &  9  & 10 & 11 & 11 & 12  & 13  & 14 & 15  \\\cline{4-19} 
&&&$\tau_\sigma=0.9$& 0.69 & 1.54 & 2.68 & 4.13 & 6.01 & 2.10 & 3.21 & 4.51 & 6.12 & 8.01 & 4.00 & 5.24 & 6.65 & 8.17 & 10.03  \\ 
&&&0.990& 0.03 & 0.59 & 2.03 & 4.00 & 6.00 & 0.78 & 2.11 & 4.00 & 6.00 & 8.00 & 2.25 & 4.02 & 6.00 & 8.00 & 10.00  \\ 
&&&0.999& 0.00 & 0.20 & 2.00 & 4.00 & 6.00 & 0.27 & 2.00 & 4.00 & 6.00 & 8.00 & 2.01 & 4.00 & 6.00 & 8.00 & 10.00  \\ 
\hline\hline 
\multirow{5}{*}{\rotatebox{90}{\!Percentage of\!}}&\multirow{5}{*}{\rotatebox{90}{\!samples with\,}}&\multirow{5}{*}{\rotatebox{90}{\!small dist$(x_i)$}}
&$[0.00, 0.05)$&  0.0 &  0.0 &  0.0 &  0.3 &  3.9 &  0.0 &  0.0 &  0.0 &  0.0 &  0.5 &  0.0 &  0.0 &  0.0 &  0.0 &  0.0  \\ 
&&&$[0.05, 0.10)$&  0.0 &  0.0 &  0.2 &  1.9 & 10.9 &  0.0 &  0.0 &  0.1 &  0.5 &  3.5 &  0.0 &  0.0 &  0.0 &  0.0 &  0.2  \\ 
&&&$[0.10, 0.30)$&  1.1 &  4.8 & 16.1 & 40.1 & 63.1 &  1.3 &  4.6 & 13.8 & 32.0 & 59.0 &  0.7 &  2.3 &  6.3 & 15.1 & 32.1  \\ 
&&&$[0.30, 0.50)$& 17.7 & 37.1 & 53.9 & 49.2 & 21.1 & 27.6 & 47.4 & 62.3 & 59.1 & 35.5 & 31.9 & 49.3 & 65.2 & 71.8 & 63.6  \\ 
&&&$[0.50, 1.00)$& 81.2 & 58.1 & 29.8 &  8.6 &  1.0 & 71.2 & 48.0 & 23.9 &  8.3 &  1.5 & 67.3 & 48.4 & 28.5 & 13.1 &  4.2  \\ 
\hline\hline 
\end{tabular}}
\caption{Synthetic data: subspace intersection with neighboring samples} \label{tab:data_quality}
\end{table}

The synthetic samples are generated such that the sample spanned subspaces are intersected with each other. We randomly choose an $r$-dimensional subspace in $\R^m$ given $r$, or equivalently, choose an orthogonal basis matrix $U\in \R^{m\times r}$. Then, we arbitrarily choose $K$ subspaces $\{\S_k\}$ with bases $U_k = UP_k$, $k=1,\cdots,K$, where each $P_k \in\R^{r\times d_k}$ is also an orthonormal matrix with a given column number $d_k<r$. The parameters $\{d_k\}$ and $r$ determine the subspace complexity and subspace segmentation difficulty. Generally, a smaller sequence $\{d_k\}$ makes segmentation easier. We choose different kinds of $\{d_k\}$ for generating sample sets with variant complexities of the segmentation. The samples are randomly chosen from each subspace $\S_k$ in the form $x = U_ky$, where the entries of $y$ are independent and identically distributed in the uniform distribution. Thus, the sample set can be written as $\{X_k=U_kY_k\}$. As shown in Theorem \ref{thm:ndgt}, these samples are intersected nondegenerately and hence, the set $\{X_k\}$ is the unique minimal segmentation, and we also have $\S_k = \spann(X_k)$ for each $k$. 

There are two kinds of complexities associated with subspace segmentation. One is the degree of subspace intersection. We define this degree by the ratios $\{d_k/r\}$. Generally, a smaller $d_k$ implies better sample construction in this way if $r$ is fixed. The other kind of complexity involves the existence of samples near another subspace and the percentage of samples within a short distance. The minimal distance of a sample $x_i\in\S_k$ to other subspaces is measured by 
\[
	{\rm dist}(x_i) = \min_{\ell\neq k}{\rm dist}(x,\S_\ell) = \min_{\ell\neq k}\frac{\|x_i-U_{\ell}(U_{\ell}^Tx_i)\|}{\|x_i\|}.
\]

We estimate the dimension of $\S_s\cap\S_t$ as the number of singular values of $U_s^TU_t$ satisfying $\sigma_i(U_s^TU_t)\geq \tau_\sigma$ for a $\tau_\sigma\approx 1$, which takes into account the proximity of two subspaces, excluding the intersection. In the top block of Table \ref{tab:data_quality}, we list the average dimensions of the pairwise-intersected subspaces of $K=5$ minimal sample subspaces with an equal dimension $d_c$ for 100 repeated experiments. The average dimension of the pair-wise subspace intersection
is approximately $(2d_c-r)_+$. We test three values of $r$ and, for each $r$, five values of $d_c$. In the bottom block, we also list the percentage of samples with the distances ${\rm dist}(x_i)$ in a given interval. Each time, we randomly choose $n_k=1000$ samples in each subspace. The quality of the samples improves as $r$ increases if the dimension of the intersected subspaces is approximately fixed. 

We run the MSS algorithms with MCG using $\Omega_c$ only (MCG(c)), the alternative optimization (AO) or the hybrid optimization (HO) on the synthetic data sets with the same settings of $r$ and $d_c$ as shown in Table \ref{tab:data_quality}. To reduce the computation time, we choose $n_k = 50$ samples in each subspace and repeat the experiments $10$ times for a total of 150 tests. For each $r$, the MSS algorithms succeed in recovering the true minimal sample subspaces on all the tests with four smaller values of $d_c$, but fail when $d_c$ is set to be the largest value. We choose a large $d_c$ for each $r$ to show the case when the MSS algorithms cannot retrieve the minimal subspaces.
In approximately 29\% of the 120 successful tests of the MSS algorithms, the MCG(c) method successfully gives a true solution where further alternative or hybrid optimization is not required. The MCG(c) solutions for the remaining tests are also suitable as initial guesses for further alternative optimization, and 22\% of them can be further improved to retrieve the minimal subspaces using the AO strategy within at most 5 AO steps. The other testings require the HO strategy to eventually obtain the true solution within at most 5 HO steps.
Table \ref{tab:RRSS} shows the convergence behavior of the MSS algorithms with the three strategies MCG(c), AO, and HO.

\begin{table}[h]                          
\centering                                
\begin{tabular}{|c|c|c|c|}  \hline  \hline 
Stage & \multicolumn{3}{c|}{Convergence behavior in the successful tests}  \\\hline 
MCG(c) 	& 35 succeed & \multicolumn{2}{c|}{85 fail} \\\hline 
AO 		& at most 5 iterations & 19 succeed &  66 fail \\\hline 
HO		&\multicolumn{2}{c|}{at most 5 iterations}& 66 succeed  \\
\hline\hline 
\end{tabular}
\caption{Convergence behavior of the MSS algorithms} \label{tab:RRSS} 
\end{table}

Table \ref{tab:efficiency} further characterizes the behaviur of the MSS method with the different strategies MCG(c), AO, and HO, measured by the average values of ErrParti, BdiagDevi, IntraBConn, and KblockDiag on 10 repeated tests. As in previous experiments, the strategies AO and HO improve the connection and estimation accuracy of the $K$-block structure on both the minimal partition and MSDR. The error functions ErrParti and BdiagDevi decrease quickly, while IntraBConn and KblockDiag increase in most cases when the subspaces are not overly intersected with each other. In the case with the largest $d_c$ for $r=10$, 14, or 20, the local minimizer of MSS$_-$MCG(c) is far from the ideal solution, due to heavy subspace intersection. The average dimension of the subspace intersections is large compared with the dimension of each subspace. The relative dimensions of the intersected subspaces are approximately 75\%, 72\%, and 67\% of the dimension of the sample space for $r=10,14,20$, respectively. The local optimal solution of MSS$_-$MCG(c) cannot serve as a good initial guess for the MSS$_-$AO or MSS$_-$HO.

In Table \ref{tab:efficiency}, we also compare the MSS method with four other algorithms, LRR, SSC, SoftS3C, and LRSSC, on the same data. LRR gives relatively strong intra-block connection but very poor $K$-block-diagonal structure and large approximation errors to minimal partitions or MSDRs. SoftS3C slightly improves upon SSC based on the four measurements, and LRSSC performs better than LRR, SSC, and SoftS3C on the approximation of $K$-block-diagonal form, but weakens the connections within diagonal blocks.
If the subspace intersection is very weak, the solutions of SSC, SoftS3C, and LRSSC approximate the minimal partitions well, but the block-diagonal structures are unclear since the solutions have large values of BdiagDevi and relatively small values of KblockDiag. The connections within diagonal blocks are also relatively weak. When $r$ is relatively large or the subspace intersection slightly increases, LRR, SSC, SoftS3C and LRSSC still fail to give acceptable solutions. We note that in the noiseless case, the objective function of CLAR is equal to 
$\log\det(I+C^TC) =\sum_{i=1}^n\log\big(1+\lambda_i(VV^T+H^TH)\big)$
by Lemma \ref{lma:C}.
It is obvious that CLAR gives the same solution with $H=0$ as LRR in the noiseless case. 
Hence, we omit a comparison with CLAR here. 

\begin{table}[!t]                          
\centering                                
\resizebox{15cm}{!}{
\begin{tabular}{|@{\,}c@{\,}| c@{\ }c@{\ }c@{\ }c@{\ }c|c@{\ }c@{\ }c@{\ }c@{\ }c|c@{\ }c@{\ }c@{\ }c@{\ }c|c@{\ }c@{\ }c@{\ }c@{\ }c|}  \hline  \hline 
 & \multicolumn{5}{c|}{ErrParti} 
 & \multicolumn{5}{c|}{BdiagDevi}
 & \multicolumn{5}{c|}{IntraBConn}
 & \multicolumn{5}{c|}{KblockDiag} 
 \\\hline
$r=10,\ d_c=$ &   4 &   5 &   6 &   7 &   8 &   4 &   5 &   6 &   7 &   8 &   4 &   5 &   6 &   7 &   8 &   4 &   5 &   6 &   7 &   8\\\hline
MSS$_-$MCG(c)   & 0.00 & 0.00 & 0.02 & 0.27 & 0.60 
                & 0.02 & 0.11 & 0.37 & 0.66 & 0.74 
                & 0.57 & 0.58 & 0.54 & 0.43 & 0.37 
                & 0.92 & 0.78 & 0.39 & 0.04 & 0.01
                \\
MSS$_-$AO       & 0.00 & 0.00 & 0.01 & 0.16 & 0.60 
                & 0.00 & 0.00 & 0.02 & 0.30 & 0.72
                & 0.58 & 0.58 & 0.57 & 0.44 & 0.24 
                & 1.00 & 0.99 & 0.97 & 0.79 & 0.41  
                \\
MSS$_-$HO       & 0.00 & 0.00 & 0.00 & 0.00 & 0.72
                & 0.00 & 0.00 & 0.00 & 0.01 & 0.65
                & 0.58 & 0.58 & 0.58 & 0.57 & 0.27 
                & 1.00 & 1.00 & 1.00 & 0.99 & 0.45
                \\ \hline
LRR             & 0.14 & 0.27 & 0.50 & 0.59 & 0.67
                & 0.65 & 0.70 & 0.73 & 0.76 & 0.78 
                & 0.57 & 0.57 & 0.57 & 0.57 & 0.57 
                & 0.04 & 0.02 & 0.01 & 0.01 & 0.01
                \\
SSC             & 0.00 & 0.01 & 0.09 & 0.39 & 0.62 
                & 0.08 & 0.25 & 0.42 & 0.55 & 0.65 
                & 0.11 & 0.11 & 0.09 & 0.08 & 0.07 
                & 0.65 & 0.31 & 0.10 & 0.03 & 0.01
                \\
SoftS3C         & 0.00 & 0.01 & 0.08 & 0.37 & 0.61 
                & 0.11 & 0.26 & 0.41 & 0.53 & 0.65 
                & 0.14 & 0.12 & 0.10 & 0.08 & 0.07 
                & 0.59 & 0.32 & 0.12 & 0.05 & 0.03
                \\
LRSSC           & 0.00 & 0.01 & 0.06 & 0.30 & 0.62 
                & 0.02 & 0.12 & 0.31 & 0.47 & 0.61 
                & 0.05 & 0.05 & 0.05 & 0.04 & 0.03 
                & 0.85 & 0.46 & 0.14 & 0.03 & 0.02
                \\ \hline
$r=14,\ d_c=$ &   7 &   8 &   9 &  10 &  11 &   7 &   8 &   9 &  10 &  11 &   7 &   8 &   9 &  10 &  11 &   7 &   8 &   9 &  10 &  11\\\hline
MSS$_-$MCG(c)   & 0.00 & 0.00 & 0.09 & 0.35 & 0.59 
                & 0.22 & 0.41 & 0.62 & 0.70 & 0.74 
                & 0.57 & 0.54 & 0.46 & 0.40 & 0.37 
                & 0.63 & 0.36 & 0.09 & 0.02 & 0.01
                \\
MSS$_-$AO       & 0.00 & 0.00 & 0.01 & 0.20 & 0.56 
                & 0.00 & 0.00 & 0.03 & 0.40 & 0.70 
                & 0.57 & 0.57 & 0.56 & 0.44 & 0.31 
                & 1.00 & 1.00 & 0.95 & 0.59 & 0.37
                \\
MSS$_-$HO       & 0.00 & 0.00 & 0.00 & 0.00 & 0.70 
                & 0.00 & 0.00 & 0.00 & 0.03 & 0.66 
                & 0.57 & 0.57 & 0.57 & 0.57 & 0.32 
                & 1.00 & 1.00 & 1.00 & 0.94 & 0.37
                \\ \hline
LRR             & 0.08 & 0.29 & 0.45 & 0.56 & 0.64 
                & 0.70 & 0.72 & 0.74 & 0.76 & 0.77 
                & 0.57 & 0.57 & 0.57 & 0.57 & 0.57 
                & 0.02 & 0.01 & 0.01 & 0.00 & 0.00
                \\
SSC             & 0.00 & 0.02 & 0.13 & 0.42 & 0.59 
                & 0.29 & 0.41 & 0.51 & 0.60 & 0.66 
                & 0.12 & 0.11 & 0.10 & 0.09 & 0.08 
                & 0.33 & 0.16 & 0.05 & 0.02 & 0.01
                \\
SoftS3C         & 0.00 & 0.03 & 0.12 & 0.38 & 0.61
                & 0.31 & 0.41 & 0.50 & 0.59 & 0.65 
                & 0.14 & 0.12 & 0.11 & 0.09 & 0.08 
                & 0.32 & 0.18 & 0.08 & 0.04 & 0.03
                \\
LRSSC           & 0.00 & 0.02 & 0.14 & 0.46 & 0.61 
                & 0.15 & 0.30 & 0.44 & 0.55 & 0.63 
                & 0.07 & 0.07 & 0.06 & 0.05 & 0.05 
                & 0.51 & 0.21 & 0.06 & 0.02 & 0.01
                \\ \hline
$r= 20,\ d_c=$ &  11 &  12 &  13 &  14 &  15 &  11 &  12 &  13 &  14 &  15 &  11 &  12 &  13 &  14 &  15 &  11 &  12 &  13 &  14 &  15\\\hline
MSS$_-$MCG(c)   & 0.01 & 0.04 & 0.14 & 0.30 & 0.51
                & 0.55 & 0.64 & 0.68 & 0.71 & 0.73 
                & 0.49 & 0.44 & 0.41 & 0.38 & 0.37 
                & 0.19 & 0.08 & 0.04 & 0.01 & 0.01
                \\
MSS$_-$AO       & 0.00 & 0.00 & 0.02 & 0.15 & 0.43 
                & 0.00 & 0.02 & 0.05 & 0.50 & 0.68 
                & 0.56 & 0.56 & 0.55 & 0.47 & 0.38 
                & 0.99 & 0.97 & 0.92 & 0.34 & 0.24
                \\
MSS$_-$HO       & 0.00 & 0.00 & 0.00 & 0.00 & 0.49 
                & 0.00 & 0.00 & 0.00 & 0.24 & 0.59 
                & 0.56 & 0.56 & 0.56 & 0.56 & 0.43 
                & 1.00 & 1.00 & 1.00 & 0.63 & 0.24
                \\ \hline
LRR             & 0.11 & 0.18 & 0.31 & 0.51 & 0.58 
                & 0.72 & 0.73 & 0.75 & 0.76 & 0.77 
                & 0.56 & 0.56 & 0.56 & 0.56 & 0.56 
                & 0.02 & 0.01 & 0.01 & 0.01 & 0.00
                \\
SSC             & 0.01 & 0.04 & 0.12 & 0.26 & 0.50 
                & 0.43 & 0.50 & 0.56 & 0.61 & 0.65 
                & 0.13 & 0.12 & 0.12 & 0.11 & 0.10 
                & 0.17 & 0.11 & 0.05 & 0.02 & 0.01
                \\
SoftS3C         & 0.01 & 0.04 & 0.11 & 0.24 & 0.47 
                & 0.43 & 0.50 & 0.55 & 0.60 & 0.65 
                & 0.15 & 0.14 & 0.13 & 0.12 & 0.11 
                & 0.19 & 0.13 & 0.08 & 0.05 & 0.03
                \\
LRSSC           & 0.01 & 0.05 & 0.13 & 0.30 & 0.52 
                & 0.33 & 0.44 & 0.51 & 0.57 & 0.62 
                & 0.09 & 0.08 & 0.08 & 0.07 & 0.07 
                & 0.27 & 0.14 & 0.05 & 0.03 & 0.01
                \\ \hline\hline 
\end{tabular}}
\caption{Results of the four measurements on the solutions of the compared algorithms} \label{tab:efficiency} 
\end{table}

\subsection{Real-world Data}\label{sec:real data}

We use the benchmark databases Extended YaleB for face clustering  \cite{GeBeKr01} and Hopkin 155 for motion segmentation \cite{Tron2007A} to evaluate the performance of Algorithm \ref{alg:noise} (MSS$_-$RO) on noisy data. 
Because ignorable noise exists in real-world data, segmentation is obtained by applying spectral clustering on a constructed graph $A$, rather than the solution $C$ itself. Special postprocessing of the computed solution $C$ might be required for constructing the graph $A$, which aims to strengthen the block-diagonal structure of $C$ or increase class similarities. SSC and LRR/CLAR adopt different postprocessing approaches as shown below, while SoftS3C uses $A=(|C|+|C^T|)/2$.

The postprocessing of a solution $C$ for SSC cuts of the small entries of $C$ in absolute value to obtain a sparser $C_\gamma$ such that each column has the norm $\|C_\gamma e_j\|\approx\gamma\|C e_j\|$ with a given positive $\gamma$ close to 1. Each column of $C_\gamma$ should be further normalized to have the largest entry equal to one in absolute value. Let $\tilde C_\gamma$ be the normalized $C_\gamma$. Then, graph $A$ is set as the symmetric part of $\tilde C_\gamma$, or equivalently, $A = |\tilde C_\gamma|+|\tilde C_\gamma^T|$. In the experiments on the two databases, we set $\gamma = 1$ for Extended YaleB and $\gamma = 0.7$ for Hopkins155 as suggested by SSC.

LRR or CLAR truncates the SVD of $C$ to $U_\sigma\Sigma_\sigma V_\sigma^T$ by removing the singular values smaller than threshold $\sigma$ and the corresponding singular vectors. Then, the rows $\{u_i\}$ of $U_\sigma\Sigma_\sigma^{1/2}$ are used to construct the graph $A=(a_{ij})$ as per
$a_{ij} = (\frac{\langle u_i,u_j\rangle}{\|u_i\|\|u_j\|})^s$ with $s \geq 1$. We set $s=4$, $\sigma=10^{-4}\|C\|_2$ for LRR and $s=4$, $\sigma=10^{-6}$ for CLAR, as suggested by their authors.

We use the strategy of double truncating to construct the graph shown in Algorithm \ref{alg:G} with the same settings $\gamma = 1$, $\sigma = 0$, and $s = 1$ for Extended YaleB, and $\gamma = 0.8$, $\sigma = 0.001$, and $s = 4$ for Hopkins155. Notably, the strategy of double truncating does not work well for SSC or LRR/CLAR. Since no suggestions about graph constructing were given for LRSSC in the literature, We test the three approaches adopted in SSC, LRR, or our method for LRSSC, including its two modified versions mLRSSC and aLRSSC that will be mentioned latter, and report the best results.

\subsubsection{Facial Image Clustering}
The Extended YaleB database consists of 2432 facial images ($192\times 168$ pixels) from 38 individuals under 64 illumination conditions. Due to various illumination and shadow conditions, these images have a relatively large amount of noise and corruption. It makes sense to use images from the same individual as a groundtruth class, and such images in each class are sampled from the same subspace approximately. It was pointed out by \cite{basri2003lambertian} that a facial image lives in a 9 dimensional subspace. Thus, each of the subspaces has dimension 9, and we estimate $d = 9K$ if we have images from $K$ individuals.

 \begin{table}[t]
\centering
\resizebox{15.2cm}{!}{
\begin{tabular}{|@{\,}c@{\,}|c@{\ \,}c@{\ \,}c@{\ \,}c|c@{\ \,}c@{\ \,}c@{\ \,}c|c@{\ \,}c@{\ \,}c@{\ \,}c|c@{\ \,}c@{\ \,}c@{\ \,}c|c@{\ \,}c@{\ \,}c@{\ \,}c|}
\hline\hline
\multirow{2}{*}{Algorithm} &\multicolumn{4}{c|}{$K= 2$} &\multicolumn{4}{c|}{$K= 3$} &\multicolumn{4}{c|}{$K= 5$} &\multicolumn{4}{c|}{$K= 8$} &\multicolumn{4}{c|}{$K=10$}\\\cline{2-21}
    & $I_0$ & $I_1$ & $I_2$ & $I_3$ & $I_0$ & $I_1$ & $I_2$ & $I_3$ 
    & $I_0$ & $I_1$ & $I_2$ & $I_3$ & $I_0$ & $I_1$ & $I_2$ & $I_3$ 
    & $I_0$ & $I_1$ & $I_2$ & $I_3$\\\hline
LRR & 39.3  & 13.5  & 41.7  &  5.5  &  9.1  & 13.0  & 71.2  &  6.7 
    &  1.5  &  6.3  & 74.1  & 18.1  &  0.0  &  0.0  & 55.1  & 44.9 
    &  0.0  &  0.0  & 33.3  & 66.7\\
CLAR & 44.2 & 24.5  & 31.3  &  0.0  & 16.6  & 17.1  & 66.3  &  0.0 
    &  6.8  & 14.4  & 78.8  &  0.0  &  0.7  &  6.6  & 92.6  &  0.0 
    &  0.0  &  0.0  & 100.0 &  0.0\\
SSC & 58.3  & 21.5  & 16.6  &  3.7  & 27.6  & 22.8  & 42.1  &  7.5 
    &  5.5  & 18.6  & 63.2  & 12.7  &  0.0  &  4.4  & 76.5  & 19.1 
    &  0.0  &  0.0  & 66.7  & 33.3\\
SoftS3C& 67.5 & 20.2& 11.7  &  0.6  & 39.9  & 24.0  & 36.1  &  0.0 
    & 20.9  & 24.9  & 50.0  &  4.2  &  2.2  & 10.3  & 66.9  & 20.6 
    &  0.0  &  0.0  & 66.7  & 33.3\\
mLRSSC& 73.6& 15.3  & 11.0  &  0.0  & 46.4  & 23.8  & 28.8  &  1.0 
    & 28.3  & 20.4  & 45.9  &  5.3  & 11.0  &  8.1  & 57.4  & 23.5 
    &  0.0  &  0.0  & 66.7  & 33.3\\
MSS$_-$RO(c)& 82.2 & 12.9 &4.9 &0.0 & 63.0  & 16.6  & 18.8  &  1.7 
    & 53.3  & 20.6  & 22.8  &  3.3  & 28.7  & 28.7  & 38.2  &  4.4 
    &  0.0  & 33.3  & 66.7  &  0.0\\
MSS$_-$RO&82.2& 12.9&  4.9  &  0.0  & 63.7  & 18.5  & 16.1  &  1.7 
    & 57.1  & 25.0  & 15.5  &  2.3  & 44.1  & 19.1  & 35.3  &  1.5 
    & 33.3 &  0.0 & 66.7 &  0.0\\
\hline\hline
 \end{tabular}}
 \caption{Percentage (\%) of computed segmentations with errors belonging to each of the intervals $I_0 = [0, 0.005]$, $I_1 = (0.005, 0.01]$, $I_2 = (0.01, 0.1]$, and $I_3 = (0.1, 0.5]$.}\label{tab:face}
 \end{table}

The testing sets are chosen as follows. A total of 38 individuals are divided into 4 groups---each of the first three groups contains 10 individuals and the last group contains the remaining 8 individuals. In each group, we choose $K$ individuals and test the segmentation of all $64K$ images from the chosen individuals. Since there are $C_p^K$ combinations of $K$ individuals among $p$ individuals, we have $3C_{10}^K+C_{8}^K$ tests for a fixed $K$. Since the five values of $K$ are set as 2, 3, 5, 8, and 10, we have 163, 416, 812, 136, and 3 tests for the five settings of $K$, respectively, and we have 1530 tests using different sizes of sample sets for this database. We downsample the large images to $48\times 42$ pixels and vectorize them as 2016-dimensional vectors $\{x_j\}$ to obtain a reasonable computational complexity, as done by \cite{Elhamifar2013Sparse}. The samples are  normalized to have the unit norms $\|x_j\|_2 = 1$. The self-expressive error is measured by the $\ell_1$-norm $\phi=\|\cdot\|_1$. We set the parameters $\lambda = 5,\alpha = \frac{20}{\|X\|_1^*}$, and $\beta = 5$, where $\|X\|_1^* = \max_j \|x_j\|_1 $. 
The strategy for setting $\alpha$ is the same as that used for SSC and SoftS3C.

\begin{figure}[h]
	\centering
	\includegraphics[width = 0.45\textwidth]{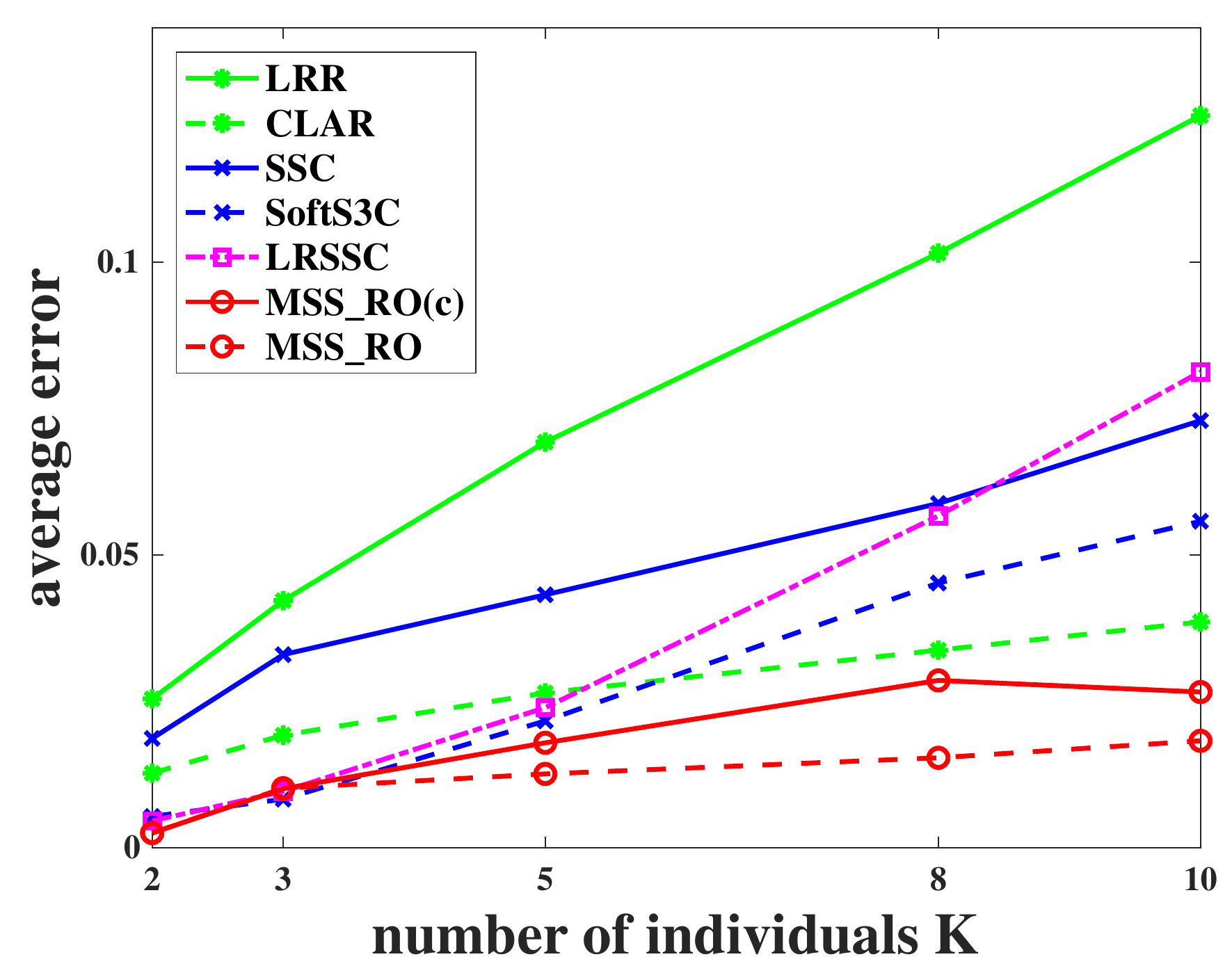}
	\includegraphics[width = 0.45\textwidth]{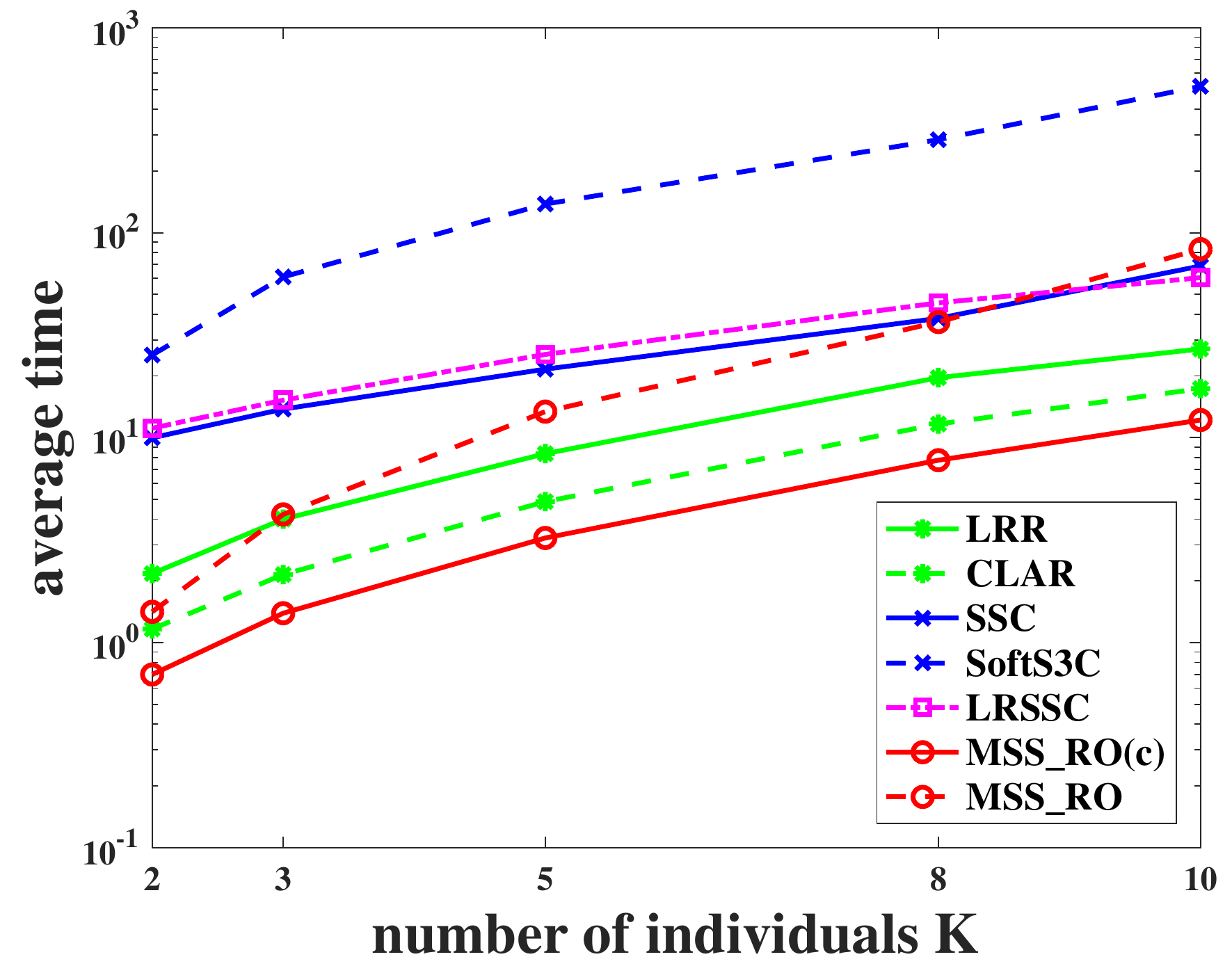}
	\caption{Percentage of average clustering errors (left) and average computation time in seconds (right) of the algorithms for detecting facial images of $K$ individuals with $K= 2, 3, 5, 8, 10$, respectively.}
	\label{fig:face_mr_t}
\end{figure}

Due to the existence of noise, the six algorithms cannot correctly retrieve the true classes completely. To show the efficiency of these algorithms, we separate the computed clustering into four groups according to the partition error ErrParti belonging to one of four intervals $I_0 = [0, 0.005]$, $I_1 = (0.005, 0.01]$, $I_2 = (0.01, 0.1]$, and $I_3 = (0.1, 0.5]$. We point out that LRSSC adopts $\|X-XC\|_F^2$ as a penalty term and imposes the zero restriction on diagonals as in SSC for combining LRR with SSC. Such a combination cannot give better results than LRR or SSC on this data set. We modify LRSSC by changing the Frobenius norm ($\ell_2$-norm) penalty to the $\ell_1$-norm penalty $\|X-XC\|_1$.
The modified LRSSC (marked as mLRSSC) performs better than LRR, SSC, and CLAR, as shown in Table \ref{tab:face}. When $K=2$, mLRSSC can correctly detect the two individuals in approximately 2/3 tests. When $K$ is slightly increased to 3, its detection percentage decreases to 39.9\%, which is also higher than that of the other four methods. When more individuals are detected, the detecting error, that is, the partition error, quickly enlarges. For example, the values of ErrParti are greater than 0.01 for all the solutions of LRR, SSC, mLRSSC, SoftS3C and CLAR when $K=10$. 
Our MSS$_-$RO(c) and MSS$_-$RO algorithms perform very well in this experiment. Essentially, the MSS$_-$RO(c) method provides a better solution than those of LRR, SSC, mLRSSC, SoftS3C and CLAR in most cases, especially with large $K$. The percentage of MSS$_-$RO(c) segmentations with errors in $I_0$ are $82.2\%$, $63.0\%$ and $53.3 \%$ for $K=2$, 3, or 5 individuals, respectively, which is much higher than those of LRR, SSC, mLRSSC, SoftS3C and CLAR. Even in the more complicated case for detecting $K=10$ individuals, 1/3 of the tests achieve a detection error less than 0.01. Within at most two updates of $\Omega$ ($\ell_{\max}\leq 3$), MSS$_-$RO presents its final results in this example. Table \ref{tab:mr_yaleB} gives direct performance comparison of the subspace clustering methods.

\begin{table}[h]
	\centering
	\begin{tabular}{|c|c|c|c|c|c|c|c|c|c|c|}		
		\hline \hline
		\multirow{2}{*}{Algorithm} & \multicolumn{2}{|c|}{$K=2$}&  \multicolumn{2}{|c|}{$K=3$} & \multicolumn{2}{|c|}{$K=5$} & \multicolumn{2}{|c|}{$K=8$} & \multicolumn{2}{|c|}{$K=10$} \\
		\cline{2-11}
		 & mean & median & mean & median & mean & median & mean & median & mean & median \\
		\hline
		LRR & 2.54 & 0.78 & 4.22 & 2.60 & 6.92 & 5.63 & 10.15 & 9.28 & 12.50 & 12.66 \\
		CLAR & 1.27 & 0.78 & 1.92 & 1.56 & 2.64 & 2.19 & 3.36 & 3.03 & 3.85 & 3.44\\
		SSC  & 1.86 & 0.00 & 3.30 & 0.52 & 4.32 & 2.81 & 5.88 & 4.59 & 7.29 & 5.47\\
		SoftS3C & 0.53 & 0.00 & 0.83 & 0.52 & 2.16 & 1.25 &  4.52 & 2.25 & 5.57 & 2.34\\
		mLRSSC & 0.46 & 0.00 & 0.96 & 0.52 & 2.39 & 1.25 & 5.67 & 3.42 & 8.12 & 5.78 \\
		MSS$_-$RO(c) & 0.25 & 0.00 & 1.01 & 0.00 & 1.79 & 0.31 & 2.86 & 0.78 & 2.66 & 1.09 \\
		MSS$_-$RO & 0.25 & 0.00 & 1.01 & 0.00 & 1.26 & 0.31 & 1.53 & 0.59 & 1.82 & 1.09 \\
		\hline \hline
	\end{tabular}
	\caption{ Percentage of average and median errors (\%) of different algorithms for detecting facial images of $K$ individuals with $K=2,3,5,8,10$, respectively.} \label{tab:mr_yaleB}
\end{table}

The computational cost of MSS$_-$RO is also competitive. MSS$_-$RO(c) is faster than the other algorithms in the five cases for variant $K$, without updating the coarsest $\Omega_c$. If $\Omega$ is updated within several iterations, the computation time of MSS$_-$RO increases but remains smaller than SoftS3C. On the right side of Figure \ref{fig:face_mr_t}, we plot the computation time in seconds. We also plot the average clustering error with the same number of individuals on the left and do not separate the solutions into several groups as before.\footnote{The results of SoftS3C for $K=5,8,10$ reported in \cite{Li2017Structured} are 
somehow better than those we obtained using the codes provided by the authors, but still worse than the results of MSS$_-$RO. 
}

\subsubsection{Motion Segmentation}

The Hopkins 155 database contains 155 videos of rigidly moving objects in which 120 videos have two objects and 35 videos have three objects. To separate the objects in each video, the feature points (2D coordinates) of the objects are first extracted from the frames of each video. The 2D coordinates of the same feature point in the sequence of frames form a long sample vector of length $2F$, where $F$ is the number of frames in the video.  Because the objects are rigidly moving, the $2F$-dimensional vectors $\{y_i\}$ of the same object belong to an affine subspace with a dimension of at most 3, as pointed out by \cite{costeira1998a}. The affine data set $\{y_i\}$ corresponding to an object can be modified to be a subspace data set $\{x_i\}$, where each $x_i$ has an additional constant entry 1. The objects in a video are detected by segmenting $\{x_i\}$. Since each of the linear subspaces has a dimension of at most 4, we set $d=4K$ if the number of objects (subspaces) is $K$. In this data set, $K=2$ or $K=3$.
SSC and SoftS3C also have affine versions (termed aSSC or aSoftS3C, respectively) in which the sum of the entries in each column of $C$ is restricted to be one. We report the results of these affine versions on $\{y_i\}$ because of their improved performance over their original versions on $\{x_i\}$. LRR, CLAR, and the noise-handling version MMS$_-$RO of MSS work on $\{x_i\}$. Similar to the original SSC/SoftS3C, LRSSC does not work well on $\{x_i\}$. We modify LRSSC again (termed  aLRSSC) by adding a restriction on $C$ as in SSC/SoftS3C so that it can works on the affine data $\{y_i\}$ and gives better results similar to the affine versions of SSC/SoftS3C.
In our algorithm MSS$_-$RO, we use $\phi=\|\cdot\|_F^2$ for measuring the self-expressive error, and set the parameters 
\begin{align} \label{eq:paras_motion}
    \lambda = 10n/K, \ \alpha = 50n/K, \ \beta = 0.05n/K,
\end{align} 
where $n$ is the number of sample vectors, which vary from 39 to 556 in this experiment. 

\begin{figure}[t]
	\centering
	\includegraphics[width=0.45\textwidth]{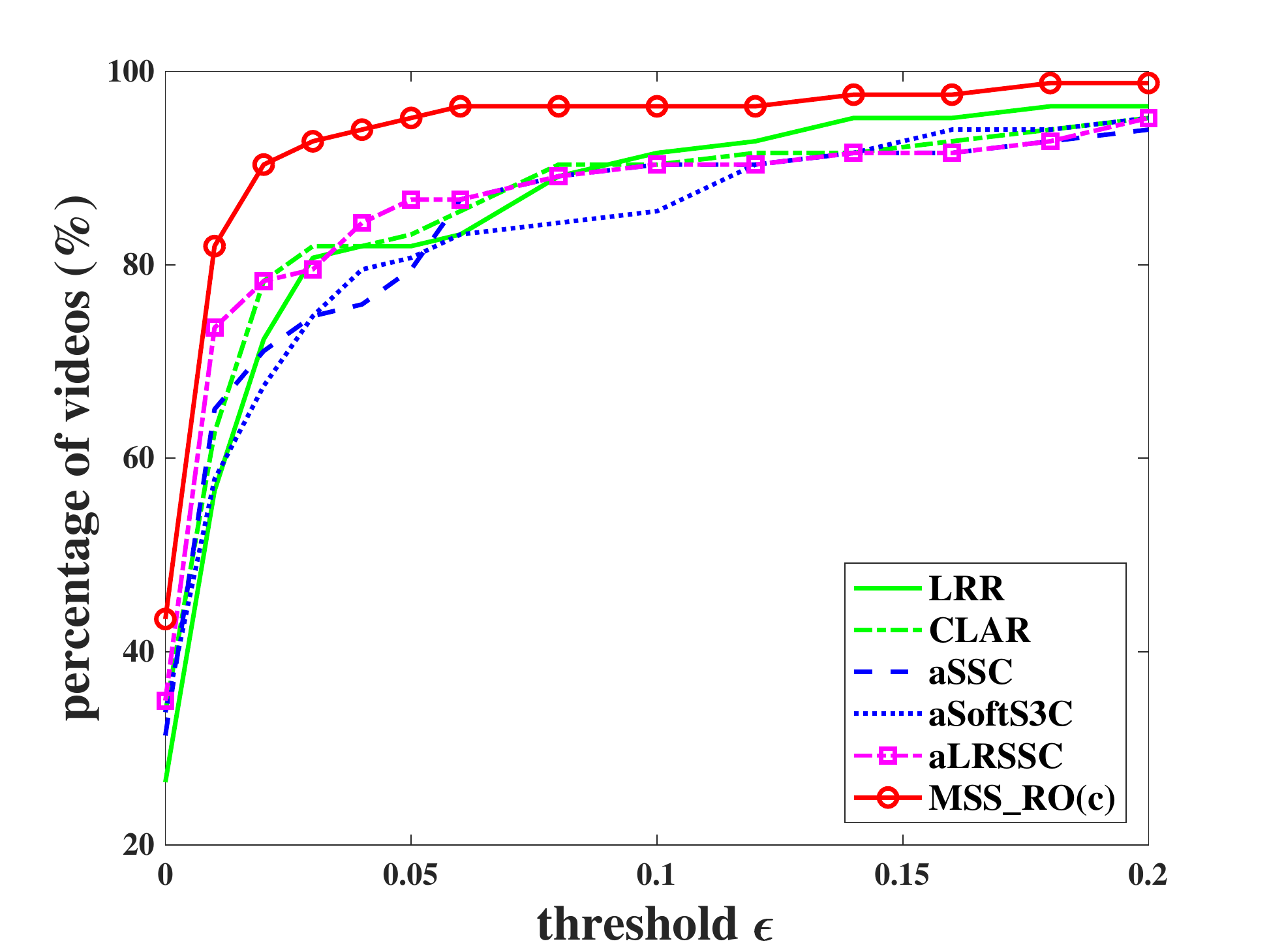}
	\includegraphics[width=0.45\textwidth]{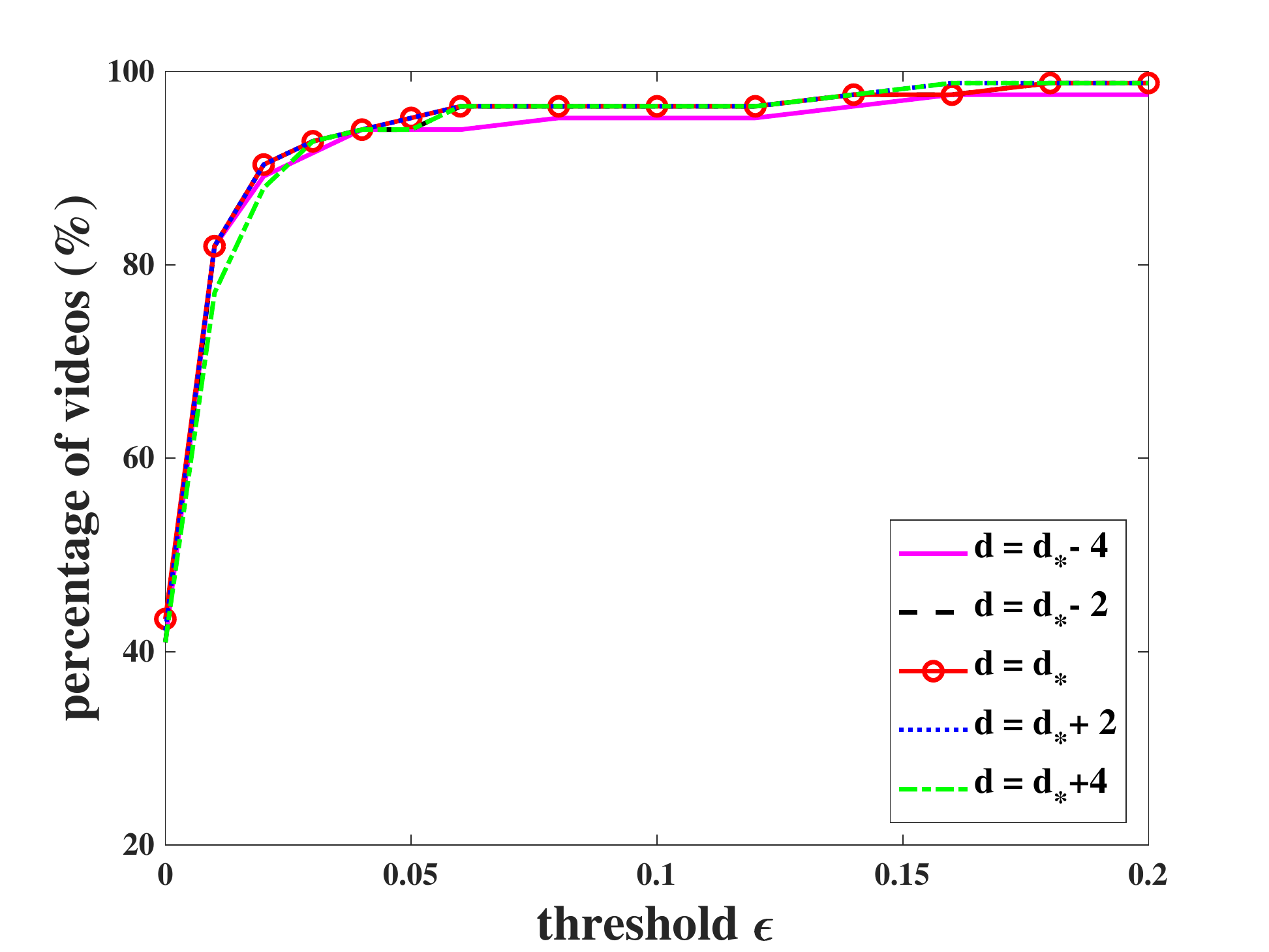}
	\caption{Percentage of videos whose detection error less than given threshold $\epsilon$ in the set of 83 `hard' videos. The left one compares different methods, where correct $d$ is set for MSS$_-$RO. The right one shows the sensitivity of MSS$_-$RO to $d$ that varies around the correct sum of subspace dimensions $d_*$.} 
	\label{fig:motion}
\end{figure} 

In this data set, there are 72 videos whose objects are easily detected---all of the compared algorithms can correctly detect all the objects. The remaining 83 videos are relatively hard to detect---no one can be completely detected by all the compared algorithms. In this experiment, we focus on performances of these algorithms on these 83 challenging videos, by checking the distribution of the detecting errors.
The left panel of Figure \ref{fig:motion} plots the percentage of videos whose detecting errors are not larger than the threshold value $\epsilon$ for each of the six algorithms LRR, CLAR,  aSSC, aSoftS3C, aLRSSC, and MSS$_-$RO. 
The LRR and CLAR solutions have detection errors relatively smaller than those of aSSC and aSoftSSC. Our LRSSC modification causes it to perform slightly better than aSSC. MSS$_-$RO(c) still significantly outperforms all of the compared algorithms. Table \ref{tab:mr_motion} gives a direct performance comparison of the subspace clustering methods.

\begin{table}[!hbp]
	\centering
	\begin{tabular}{|c|c|c|c|c|c|c|}		
		\hline \hline
		\multirow{2}{*}{Algorithm} & \multicolumn{2}{|c|}{$K=2$}&  \multicolumn{2}{|c|}{$K=3$} & \multicolumn{2}{|c|}{all videos}  \\
		\cline{2-7}
		& mean & median & mean & median & mean & median  \\
		\hline
		LRR & 1.36 & 0.00 & 2.51 & 0.00 & 1.62 & 0.00  \\
		CLAR & 1.32 & 0.00 & 2.60 & 0.51 & 1.61 & 0.00 \\
		aSSC  & 1.53 & 0.00 & 4.40 & 0.56 & 2.18 & 0.00 \\
		aSoftS3C & 1.64 & 0.00 & 4.41 & 0.56 & 2.20 & 0.00\\
		aLRSSC & 1.22 & 0.00 & 3.67 & 0.21 & 1.77 & 0.00 \\
		MSS$_-$RO(c) & 0.50 & 0.00 & 0.85 & 0.00 & 0.58 & 0.00  \\
		MSS$_-$RO & 0.43 & 0.00 & 0.86 & 0.19 & 0.52 & 0.00  \\
		\hline \hline
	\end{tabular}
	\caption{Percentage of average and median errors (\%) of different algorithms for detecting videos with $K=2$, $K=3$ and all videos, respectively.} \label{tab:mr_motion}
\end{table} 

Compared with algorithms in the literature for subspace learning, our MSS method needs an estimated value $d$ of the true sum $d_*$ of subspace dimensions as an additional prior. 
For some real-world data sets like YaleB and Hopkins155, the correct $d_*$ is practically known. The right panel of Figure \ref{fig:motion} also numerically show the sensitivity of MSS$_-$RO when it uses estimated $d$ slightly varying around the correct ones. We set $d = d_*\pm \delta$, $\delta=0,2,4$, where $d_* = 4K$ is the correct one. The data set is the 83 challenging videos as previously. As in the left panel of Figure \ref{fig:motion}, we also plot the percentage curve versus the threshold $\epsilon$ for each $d$ in the right panel. It seems that MSS$_-$RO is robust on $d$ in this data set. We also checked the sensitivity of MSS$_-$RO on the set of other 72 videos. Only two videos are wrongly detected by MSS$_-$RO.

\section{Conclusion}
Subspace learning is a challenging task not only in theoretical analysis but also in modeling and computing for applications. In applications, ground-truth subspaces may be different from those that can be mathematically defined based on finite samples. Mathematically, these subspaces may be heavily intersected with each other, and some samples may be difficult to separated if they are proximal to the intersected subspaces. The existence of noise further complicates the problem. In this paper, we provided the concept of minimal sample subspace and considered the segmentation as a union of minimal sample subspaces, together with a so-called pure subspace that is mostly nonexistent in applications. However, even for the minimal subspace segmentation, which is now well-defined, the problem is also difficult and complicated since the MSS may be not unique. We gave sufficient conditions for addressing this uniqueness, and built some solid theoretical bases to support our proposed optimization modeling for conditionally recovering the minimal sample subspaces even if these subspaces themselves are heavily intersected. However, there are still some theoretical problems that need to be addressed, such as the sensitivity of the MSS problem. Perturbation analysis should be given to address  the reliability of the retrieved MSS and also help us to understand segmentation on noisy samples. We consider this a difficult but interesting topic for further research.

We proposed several algorithms for solving the MSS problem with or without noise. However, because of the complexity of the optimization problems, a globally optimal solution is not always guaranteed, though we did endeavor to obtain a globally optimal solution of this nonconvex problem as much as possible. In our experiments, finding a local minimum seldom occurs when the subspace intersection is slight, but happens more frequently with increasing intersection. It is unclear how subspace intersection comes into the effect of local minima. The computational complexity of the basic algorithm MSS$_-$MCG is $O(n^3)$. This disadvantage hinders its applications to large numbers of samples. Since the ideal solution is basically low-rank, sufficiently utilizing the low-rank structure may be useful for reducing the computational complexity of MSS$_-$MCG. We will continue our work on this important topic.

\section*{Acknowledgment}
The work was supported in part by NSFC projects 11571312 and 91730303, and National Basic Research Program of China (973 Program)  2015CB352503.

\bibliographystyle{unsrt} 
\bibliography{ref}

\end{document}